\documentclass{article} 
\usepackage{iclr2026_conference,times}

\usepackage{amsmath,amsfonts,bm, bbm, amsthm, amssymb}

\def\eqref#1{equation~\ref{#1}}
\def\Eqref#1{Equation~\ref{#1}}

\def\1{\bm{1}}

\DeclareMathAlphabet{\mathsfit}{\encodingdefault}{\sfdefault}{m}{sl}
\SetMathAlphabet{\mathsfit}{bold}{\encodingdefault}{\sfdefault}{bx}{n}

\usepackage{placeins}
\usepackage{float}
\usepackage{hyperref}
\usepackage{url}
\usepackage[capitalize,noabbrev]{cleveref}

\usepackage{pgfplots}
\usepgfplotslibrary{groupplots,fillbetween}
\pgfplotsset{compat=1.10}

\definecolor{stabcol}{HTML}{07359A}  
\definecolor{bndcol }{HTML}{801D02}  

\title{The Price of Robustness:  Stable Classifiers Need Overparameterization}

\author{Jonas von Berg \& Adalbert Fono \& Massimiliano Datres \& Sohir Maskey\thanks{now at Aleph Alpha Research} \&  Gitta Kutyniok\thanks{University of Tromso, DLR-German Aerospace Center} \\
Ludwig-Maximilians-Universität München\\
Munich, Germany\\
Munich Center for Machine Learning (MCML) \\
\texttt{\{berg,fono,datres,kutyniok\}@math.lmu.de}\\
\texttt{sohir.maskey@aleph-alpha-research.com}
}
\newtheorem{theorem}{Theorem}
\newtheorem*{ntheorem}{Theorem}
\newtheorem{lemma}[theorem]{Lemma}

\newtheorem{remark}[theorem]{Remark}
\newtheorem{corollary}[theorem]{Corollary}
\newtheorem{definition}[theorem]{Definition}

\newtheorem{example}[theorem]{Example}
\newtheorem*{ncorollary}{Theorem}

\iclrfinalcopy 
\begin{document}

\maketitle

\begin{abstract} 
The relationship between overparameterization, stability, and generalization remains incompletely understood in the setting of discontinuous classifiers. We address this gap by establishing a generalization bound for finite function classes that improves inversely with \emph{class stability}, defined as the expected distance to the decision boundary in the input domain (margin). Interpreting class stability as a quantifiable notion of robustness, we derive as a corollary a \emph{law of robustness for classification} that extends the results of Bubeck and Sellke beyond smoothness assumptions to discontinuous functions. In particular, any interpolating model with $p \approx n$ parameters on $n$ data points must be \emph{unstable}, implying that substantial overparameterization is necessary to achieve high stability. We obtain analogous results for (parameterized) infinite function classes by analyzing a stronger robustness measure derived from the margin in the codomain, which we refer to as the \emph{normalized co-stability}. Experiments support our theory: stability increases with model size and correlates with test performance, while traditional norm-based measures remain largely uninformative. 
\end{abstract}  

\section{Introduction}

The generalization behavior of overparameterized neural networks presents fundamental challenges to classical statistical learning theory. Traditional complexity measures, such as parameter counts or spectral norms of weights, form the basis of many generalization bounds, including those derived from VC dimension theory \citep{sain1996nature} and Rademacher complexity \citep{bartlett2002rademacher}. However, these approaches do not adequately explain several empirical phenomena, e.g., \emph{double descent} \citep{belkin2019reconciling} and \emph{benign overfitting} \citep{bartlett2020benign}. The occurrence of double descent illustrates that the test error, after initially increasing near the interpolation threshold, can improve as the model size continues to grow. Similarly, the phenomenon of benign overfitting demonstrates that models that perfectly interpolate noisy training data can nonetheless achieve strong generalization. Such findings expose the limitations of norm- and size-based complexity measures as predictors of generalization. 

A large-scale empirical study evaluating more than forty complexity measures found that many norm-based quantities not only fail to correlate with generalization, but often even  correlate negatively \citep{jiang2019fantasticgeneralizationmeasures}. 
Beyond optimization-related metrics, one of the few quantities that consistently correlated with generalization was the margin, i.e., the distance to the decision boundary, closely related to the notion of (co-)stability we develop in this work.
This aligns with an emerging perspective: generalization in modern networks is governed less by model size or norms, and more by the \emph{stability} / \emph{robustness} of predictions under input perturbations \citep{soloff2025buildingstableclassifierinflated, ghosh2023universaltradeoffmodelsize, zhang2022rethinkinglipschitzneuralnetworks}. Related insights also arise from the literature on algorithmic stability \citep{bousquet2002stability} and flat minima \citep{keskar2017large}. 
However, most theoretical results in this direction are restricted to linear models. 

An exception is the \emph{universal law of robustness} of \citet{bubeck2021a}, which, under mild distributional assumptions, establishes a formal link between robustness, generalization, and overparameterization: smoothness and overparameterization need to balance in order to ensure good generalization while overfitting. The \emph{law of robustness} relies on the assumption that the function class is Lipschitz, which makes it inadequate for classifiers whose codomain is discrete by design.  
We therefore take a step toward the open challenge posed in \citet[p.~4]{bubeck2021a}: “[\dots] it is an interesting challenge to understand for which notions of smoothness there is a tradeoff with size.”  
Specifically, we introduce \emph{class stability} and \emph{normalized co-stability} as geometric smoothness measures that extend robustness laws to classification. In fact, replacing Lipschitz continuity is essential: simply focusing on the Lipschitz constant of an underlying score function $g$, where the classifier is of type $f := \arg\max \circ g$, is not informative. In particular, since $g$ can be arbitrarily rescaled without changing the predictions of $f$, its Lipschitz constant does not need to reflect the geometry of the decision boundary \citep{liu2024stableneuralnetworksexist}.

\paragraph{Paper Roadmap.}  
We discuss related work in Section~\ref{sec:relwork}.  
Section~\ref{sec:stability} introduces class stability and the isoperimetry assumption,  
a concentration property of the data that underlies our analysis.  
Section~\ref{sec:LoR} presents a generalization bound for finite hypothesis classes and examines its implications for overparameterization. In Section~\ref{sec:infcase}, we extend the framework to infinite function classes via the notion of normalized co-stability. Our theoretical predictions are tested experimentally on MNIST and CIFAR-10 in Section~\ref{sec:exp}.  
Finally, Section~\ref{sec:discussion} concludes with a discussion of open directions.  

\paragraph{Contributions.}  We provide a summary of our main results.  

1) We prove that, under an isoperimetry assumption on the data distribution,  
the data-dependent Rademacher complexity of a finite hypothesis class of classifiers  
can be bounded in terms of the minimum \emph{class stability}.  
This yields an improved generalization bound for discontinuous classifiers (Theorem~\ref{thm:genbound}), which tightens as stability increases.  

2) We show that in the classically parameterized regime (\#parameters $\approx$ \#samples),  
any interpolating classifier must be unstable (Corollary~\ref{cor:law_of_rob}) with high probability. Consequently, achieving both near-perfect fitting and high class stability requires substantial overparameterization of order $p \approx nd$. 

3) We extend the framework to infinite function classes by considering classifiers of the form $f(x) := \arg\max \circ g_w(x)$, where $g_w$ is a parameterized Lipschitz-continuous (in both $x$ and $w$) score function.  
This enables us to define a robustness measure -- the \emph{normalized co-stability} --,  
based on output score margins, and derive a corresponding generalization bound (Theorem \ref{thm:simplebutbetter}). The added regularity also results in a law of robustness for infinite function classes (Corollary \ref{cor:law_of_rob_infinite}).

4) We empirically validate our theoretical predictions on MNIST and CIFAR-10. Stability and normalized co-stability increase with network width and exhibit the same qualitative scaling as test accuracy, highlighting the central role of (normalized co-)stability in overparameterized generalization.

Taken together, our results extend the law of robustness to discontinuous classifiers and highlight stability as a central factor in understanding generalization in modern networks.  

\section{Related Work}
\label{sec:relwork}

\paragraph{Smoothness-based generalization.}  
Our work is inspired by the \emph{law of robustness} of \citet{bubeck2021a}, which shows that regression with Lipschitz predictors generalizes when smoothness and overparameterization are properly balanced. Subsequent works have extended this perspective: for example, \citet{zhu2023robustnessdeeplearninggood} investigate how width, depth, and initialization affect robustness, while more recent studies \citet{das2025directproofunifiedlaw} establish refined smoothness--generalization trade-offs for a wider range of loss landscapes.

\paragraph{Margin-based generalization.}  
Classical generalization bounds combine a margin term, defined with respect to a score function, with a capacity measure -- for example, spectrally-normalized margin bounds  
\citep{bartlett2017spectrallynormalizedmarginboundsneural} or path-norm bounds \citep{neyshabur2018understandingroleoverparametrizationgeneralization}.  
Recent extensions include multi-class margin bounds in terms of margin-normalized geometric complexity \citep{munn2024marginbasedmulticlassgeneralizationbound}.  
These approaches are closely aligned with our normalized co-stability perspective: both control a codomain margin while coupling it to a regularity property of the score function, and both recover inverse-margin scaling.  

Input-space margin bounds have also been studied, yielding that generalization is controlled by the minimum robustness radius \citep{Sokolic_2017}, while sample-complexity lower bounds show that adversarial robustness increases the VC dimension \citep{gao2019convergenceadversarialtrainingoverparametrized}. Our notion of \emph{class stability} differs: it is the \emph{expected input margin} -- the average distance to the decision boundary under the data distribution -- rather than a minimum or an empirical quantile.  
This measure is closely tied to robustness  
\citep{fawzi2016analysisclassifiersrobustnessadversarial,gilmer2018adversarialspheres}  
and induces data-dependent bounds that track generalization.  

\paragraph{Limits of uniform generalization bounds.}  
Uniform convergence--based bounds are often vacuous in overparameterized networks \citep{nagarajan2021uniformconvergenceunableexplain}, since SGD appears to find solutions at a macroscopic level (supporting generalization) but with microscopic fluctuations that break uniform analyses. Our bounds remain uniform but depend on macroscopic, distribution-dependent quantities:  
the Rademacher complexity -- our applied technique to derive generalization bounds -- is controlled by stability (or co-stability).  
Whether this structure avoids the vacuity identified by \citet{nagarajan2021uniformconvergenceunableexplain} remains open.

\paragraph{Stability, robustness, and implicit bias.}  
Algorithmic stability \citep{bousquet2002stability} and the flat minima literature \citep{keskar2017large} argue that robustness under perturbations drives generalization.  
More recently, \citet{zou2024robustoutofdistributiongeneralizationbounds} derive  
out-of-distribution generalization bounds based on the sharpness of the learned minima.  
Our contribution is to extend a stability-based perspective to discontinuous neural classifiers,  
both theoretically and empirically.  
Complementary work on implicit bias shows that gradient descent favors solutions  
with a small number of connected decision regions, a proxy for large input-space margin  
\citep{li2025understandingnonlinearimplicitbias}.  
This suggests that optimization dynamics may implicitly favor  
the same geometric simplicity that our stability-based bounds capture.

\paragraph{Out-of-Distribution Generalization.} 
Classically and also in our analysis, generalization is based on the independently and identically distributed assumption on the data, in particular, the test data are generated from the same distribution as the training data coined In-Distribution (ID) generalization. In contrast, Out-of-Distribution (OOD) generalization aims to study the generalization performance under distributional shifts. To make the problem tractable the potential shifts are constrained to, for instance, spurious correlations or covariate shifts. In the OOD setting the connection between overparameterization and generalization has been studied in a series of theoretical works with positive \cite{hao2024OOD} and negative results \cite{Sagawa2020OOD, wald2024OOD}.  

Adversarial robustness can be viewed as a special case of OOD generalization, where the distributional shift is constrained to lie within a perturbation set \cite{sinha2020certifyingdistributionalrobustnessprincipled}. In this sense, our stability-based analysis is conceptually connected to OOD generalization. However, our results do not provide explicit bounds on OOD error; instead, we focus on ID generalization under the assumption that the classifier satisfies a given level of adversarial robustness expressed as the margin.

\section{Preliminaries and Notation}
\label{sec:stability}
In the following, we provide background on the key concepts underlying our analysis, namely stability, generalization, and isoperimetry. For clarity of exposition, we present our results in the binary classification setting.  
The extension to multi-class problems follows by a one-vs-all reduction;  
see Appendix~\ref{sec:multiclass} for details. Thus, let $(\mathcal{X}\times \{-1, 1\}, \mu)$ be a probability measure space with $\mathcal{X}\subset \mathbb{R}^d$ bounded and $\mathcal{F} \subset \{f\, |\, f: \mathcal{X} \to \{-1, 1\} \}$ a set of classifiers. The goal is to find a stable function $f\in \mathcal{F}$ minimizing a bounded loss function $\ell: \{-1, 1\}^2 \to \mathbb{R}_+$ on $n$ i.i.d. samples $(x_i, y_i)\sim \mu$. 
A natural loss in the classification setting is the $0$--$1$ loss $\ell_{\text{0\,-1}}(y,y^\prime):= \mathbbm{1}_{y\neq y^\prime}$.
In this setup, following a similar approach as in \cite{liu2024stableneuralnetworksexist}, we define the \emph{class stability} of $f$ as the expected distance of a sample to the decision boundary in $\mathcal{X}$, thereby capturing the average robustness of a classifier $f$ to input perturbations. 

\begin{definition}[Margin and Class Stability]
\label{def:classstab}
Let $f: \mathcal{X} \to \{-1,1\}$.  
The \emph{signed distance function} $d_f$ of $f$ at $x \in \mathcal{X}$ is defined as
\[
    d_{f}(x) := 
    \begin{cases}
        d(x, f^{-1}(\{-1\})), & \text{if } f(x) = 1, \\[4pt]
       -d(x, f^{-1}(\{1\})), & \text{if } f(x) = -1,
    \end{cases}   
\]
where $d(x,A) := \inf_{y \in A} \|x-y\|_2$.  
We define the (unsigned) \emph{margin} $h_f$ at $x$ as the absolute value of the signed distance function,
\[
    h_{f}(x) := |d_f(x)| 
    = \inf \{ \|x-z\|_2 : f(z) \neq f(x), \; z \in \mathcal{X} \}. 
\]
The \emph{class stability} $S(f)$ of $f$ is its expected margin under the data distribution:
\[
    S(f) := \mathbb{E}[h_f].
\]
\end{definition}
\begin{remark}
\label{rem:SDF}
The signed distance function $d_f$ is $1$-Lipschitz if $\mathcal{X}$ is path-connected.  
Moreover, if $\operatorname{sgn}(0)=1$ and $f^{-1}(\{1\})$ is closed in $\mathcal{X}$,  
then $f$ admits the representation $f = \operatorname{sgn} \circ d_f$  
(see Appendix \ref{app:SDF} for details).  
\end{remark}

Our goal is to relate the class stability to the Rademacher complexity of a function class, which, in turn, connects to \emph{generalization} bounds through classical results \citep{bartlett2002rademacher}. In particular, for a bounded loss $|\ell| \leq a$, the difference between the \emph{population risk} $R_{\ell}(f) := \mathbb{E}[\ell(f(x),y)]$ and the \emph{empirical risk} $\hat{R}_{\ell}(f) := \frac{1}{n}\sum_{i = 1}^n \ell(f(x_i),y_i)$ is bounded with probability at least $1-\delta$ over the samples by
\begin{equation}\label{eq:gen}
   \underset{f\in \mathcal{F}}{\sup }\big( R_{\ell}(f) - \hat{R}_{\ell}(f) \big) \leq 2 \mathcal{R}_{n,\mu}(\ell \circ \mathcal{F}) + a \sqrt{\frac{2\log(2/\delta)}{n}},     
\end{equation}
where $\mathcal{R}_{n,\mu}(\mathcal{G})$ denotes the \emph{Rademacher complexity} of a general function class $\mathcal{G}$, defined as  
\begin{equation*}
    \mathcal{R}_{n,\mu}(\mathcal{G}) = \frac{1}{n} \mathbb{E}^{\sigma_i,x_i} \left[ \sup_{g \in \mathcal{G}} \left| \sum_{i=1}^n \sigma_i g(x_i) \right| \right], 
\end{equation*}
with $(\sigma_i)_{i=1}^n$ i.i.d. Rademacher random variables. To obtain a bound in \Eqref{eq:gen} in terms of $\mathcal{R}_{n,\mu}(\mathcal{F})$, note that $\mathcal{R}_{n,\mu}(\ell \circ \mathcal{F}) \leq C \mathcal{R}_{n,\mu}(\mathcal{F})$ holds under certain conditions on the loss, we have  
\begin{equation}\label{eq:RademacherComp}
    \mathcal{R}_{n,\mu}(\ell_{\text{0\,-1}} \circ \mathcal{F}) \leq \frac{1}{2} \mathcal{R}_{n,\mu}(\mathcal{F}), \quad \text{i.e., } C= \frac{1}{2},    
\end{equation}
whereas for $L$-Lipschitz losses $C= L$ holds, see \citet{bartlett2002rademacher, shalev2014understanding} for detailed explanations.
Overall, it therefore suffices to bound $\mathcal{R}_{n,\mu}(\mathcal{F})$ in terms of the class stability of functions $f \in \mathcal{F}$ in order to link generalization to stability. In other words, the key step is to control how well stable functions can fit random labels, which requires structural assumptions on the input distribution. We discuss in detail in \cref{sec:appisop} why such assumptions are unavoidable.  
A natural and widely used condition is \emph{isoperimetry}, which guarantees sharp concentration for bounded Lipschitz-continuous functions \citep{bubeck2021a}.  

\begin{definition}[Isoperimetry]
\label{def:isop}
A probability measure $\mu$ on $\mathcal{X}\subset \mathbb{R}^d$ satisfies $c$-\emph{isoperimetry }if for any bounded $L$-Lipschitz function $f:\mathcal{X}\to\mathbb{R}$, and any $t\geq 0$,
    \begin{equation}
    \label{eq:isoperimetry}
      \mathbb{P}(|f(x)- \mathbb{E}[f]|\geq t ) \leq 2e^{-\frac{dt^2}{2cL^2}}.
    \end{equation}
\end{definition}
Isoperimetry is, for instance, satisfied by Gaussian measures and the (normalized) volume measure on Riemannian manifolds with positive curvature, such as the uniform measure on the sphere \citep{Vershynin_2018, bubeck2021a}. Consequently, under the manifold hypothesis, the relevant dimension in our bounds can be interpreted as the intrinsic manifold dimension rather than the ambient dimension.

\section{A Law of Robustness for Classification}
\label{sec:LoR}

In this section, we establish a \emph{law of robustness for classification}, extending stability-generalization trade-offs to discontinuous functions. Classical results for smooth functions characterize robustness via the Lipschitz constant, which is ill-defined for classifiers with discrete outputs. To address this, we follow the general strategy of \citet{bubeck2021a} (see \Cref{sec:appisop} for details),  
but replace their use of Lipschitz continuity with our notion of \emph{class stability} (Definition~\ref{def:classstab}).  
Formally, we proceed under the following assumptions:

\begin{itemize}
    \item[(H1)] \label{H1} $(\mathcal{X} \times \{-1,1\}, \mu)$ is a probability space with bounded sample space $\mathcal{X}$ and c-isoperimetric\footnote{It is worth noting that our framework can be readily extended to mixtures of c-isoperimetric distributions.} marginal distribution $\mu_{\mathcal{X}}$;
     \item[(H2)] the considered hypothesis class $\mathcal{F}$ of classifiers $f: \mathcal{X} \to \{-1,1\}$ is finite, that is $|\mathcal{F}| < \infty$.
\end{itemize}
These conditions ensure concentration of measure in the input space and allow complexity control via a union bound. With this structure in place, class stability can be related to the Rademacher complexity, leading to the bound stated below.

\begin{theorem}[Rademacher Bound]
\label{thm:genbound}
Suppose Assumptions (H1) and (H2) hold, and that  
$\min_{f \in \mathcal{F}} S(f) > S > 0$ with $\log |\mathcal{F}| \geq n$.  

\begin{enumerate}
   \item The Rademacher complexity satisfies
    \begin{align}\label{eq:boundRad}
        \mathcal{R}_{n,\mu}(\mathcal{F}) 
        \;\leq\; K_1 \max \left\{ \frac{1}{\sqrt{n}}, \; \frac{\sqrt{c}}{S} \cdot \frac{\log |\mathcal{F}|}{n \sqrt{d}} \right\}, 
    \end{align}
    for an absolute constant $K_1 > 0$.  
    
    \item If, in addition, $f^{-1}(\{1\})$ is closed and $\mathcal{X}$ path-connected,  
    the bound sharpens to
    \begin{align}
        \label{eq:improvedRademacher}
        \mathcal{R}_{n,\mu}(\mathcal{F}) 
        \;\leq\; K_2 \max \left\{ \frac{1}{\sqrt{n}}, \; \frac{\sqrt{c}}{S} \sqrt{\frac{\log |\mathcal{F}|}{nd}}, \; 2 \exp\!\Big(-\frac{d S^2}{8c}\Big) \right\}, 
    \end{align}
    for another absolute constant $K_2 > 0$. 
\end{enumerate}
\end{theorem}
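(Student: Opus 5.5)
Both parts follow the template of \citet{bubeck2021a}: we fix $\sigma$ and control $\sum_i \sigma_i f(x_i)$ for a single $f$ through concentration of a Lipschitz surrogate of $f$, and then take a union bound over $\mathcal{F}$ using (H2).

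For part 1 we need a Lipschitz function that agrees with $f$ far from its decision boundary. Take the clipped signed distance $\phi_f := \max(-S',\min(S', d_f))/S'$ at a scale $S' \asymp S$. This function is bounded by $1$. It is $1/S'$-Lipschitz wherever $d_f$ is $1$-Lipschitz, which needs path-connectedness (Remark~\ref{rem:SDF}). In part 1 we do not assume path-connectedness, so we instead use the one-sided distance $x\mapsto d(x,f^{-1}(\{-1\}))$ (or its counterpart), which is always $1$-Lipschitz. Clipping it at level $S'$ and rescaling gives a $1/S'$-Lipschitz function $\psi_f$ that coincides with $\pm 1$ on points whose margin is at least $S'$.

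We then split
\[
\sum_i \sigma_i f(x_i) = \sum_i \sigma_i \psi_f(x_i) + \sum_i \sigma_i\bigl(f(x_i)-\psi_f(x_i)\bigr).
\]
For the first term, $\frac1n\sum_i \sigma_i\psi_f(x_i)$ is, for fixed $\sigma$, a Lipschitz function of $(x_1,\dots,x_n)$ with constant $1/(S'\sqrt n)$ on the product space. Product measures preserve isoperimetry, so Definition~\ref{def:isop} gives a subgaussian tail of order $\exp(-n d S'^2 t^2/c)$. A union bound over $|\mathcal{F}|$ functions plus integration of the tail yields $\sqrt{c\log|\mathcal{F}|/(ndS^2)}$, once we also add the mean term, which is $O(1/\sqrt n)$.

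The second term is supported on samples with $h_f(x_i)<S'$. The only control we have on how many such samples occur comes from Markov-type arguments on $h_f$ together with $S(f)>S$. This is where part 1 loses a square root: without two-sided Lipschitz control, we bound $|f-\psi_f|\le 2$ crudely and pay for the fraction of near-boundary points. Balancing that fraction against the union-bound deviation gives $\frac{\sqrt c}{S}\frac{\log|\mathcal{F}|}{n\sqrt d}$ in place of its square root. The hypothesis $\log|\mathcal{F}|\ge n$ is what makes this term dominate the trivial bound in the relevant regime, so the maximum is taken correctly.

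For part 2, closedness of $f^{-1}(\{1\})$ and path-connectedness give $f=\operatorname{sgn}\circ d_f$ with $d_f$ being $1$-Lipschitz (Remark~\ref{rem:SDF}). The margin $h_f=|d_f|$ is then $1$-Lipschitz with mean $>S$. Isoperimetry gives $\mathbb{P}(h_f< S/2)\le 2\exp(-dS^2/(8c))$, which is exactly the third term in \eqref{eq:improvedRademacher}. We choose $S'=S/2$. Then $f=\psi_f$ off a set of probability at most $2e^{-dS^2/8c}$, so the error term contributes at most its expectation, of order $e^{-dS^2/8c}$. The Lipschitz term contributes $\frac{\sqrt c}{S}\sqrt{\log|\mathcal F|/(nd)}$ as computed above.

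The main obstacle is making the union-bound step rigorous. Concretely, we must verify that for fixed $\sigma$ the map $(x_i)\mapsto \frac1n\sum\sigma_i\psi_f(x_i)$ inherits $c$-isoperimetry on $\mathcal{X}^n$ with the correct dimensional constant (up to absolute factors), and we must handle the expectation $\mathbb{E}\psi_f$, which is not zero but is killed by $\sigma$ in expectation. If tensorization of Definition~\ref{def:isop} is not available directly, we would fall back to the per-coordinate Hoeffding-style argument from \citet{bubeck2021a}, applying it to the bounded, $\sigma$-weighted sum.
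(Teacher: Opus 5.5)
\textbf{There is a genuine gap in part 1.} It sits in the one step that carries the content: bounding the remainder $\frac1n\mathbb{E}\sup_{f\in\mathcal F}\bigl|\sum_i\sigma_i(f-\psi_f)(x_i)\bigr|$. You do not derive this bound. You assert that ``balancing'' produces $\frac{\sqrt c}{S}\frac{\log|\mathcal F|}{n\sqrt d}$, and the mechanism you name cannot deliver it.

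Markov-type arguments see only $\mathbb{E}[h_f]>S$ and $h_f\le\operatorname{diam}(\mathcal X)$. The best they give is $\mathbb{P}(h_f<S')\le(\operatorname{diam}(\mathcal X)-S)/(\operatorname{diam}(\mathcal X)-S')$. That constant does not decay in $n$ or $d$, so the remainder is never shown to be small.

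The missing fact is that the margin $h_f$ is $1$-Lipschitz for every classifier, with no path-connectedness or closedness needed. If $f(x)=f(y)$, both margins are distances to the same set. If $f(x)\neq f(y)$, both are at most $\|x-y\|_2$. Hence isoperimetry already gives $\mathbb{P}(h_f<S/2)\le 2e^{-dS^2/(8c)}$ in part 1; this is the paper's Fact 2.

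The paper then proceeds as follows:
\begin{enumerate}
\item It partitions $\mathcal X^n$ by the set $I$ of indices with $h_f(x_i)\ge S/2$.
\item It multiplies the Hoeffding tail $2e^{-r^2/(8(n-|I|))}$ of the sum over $I^c$ by the probability $e^{-(n-|I|)dS^2/(8c)}$ that all of $I^c$ lies near the boundary.
\item A union bound over $f\in\mathcal F$ and the $2^n$ sets $I$, optimized in $|I|$, gives a sub-exponential tail $2|\mathcal F|2^n\exp(-c'rS\sqrt{d/c})$.
\item Integrating yields $\frac{\sqrt c\,(\log|\mathcal F|+n)}{Sn\sqrt d}$, and $\log|\mathcal F|\ge n$ gives the claim.
\end{enumerate}
The square root is lost in this sub-exponential union bound, not through a lack of control of the boundary fraction. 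Your one-sided $\psi_f$ is a fine substitute for the paper's McShane extension.

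\textbf{Part 2 has a related gap.} Saying the error term ``contributes at most its expectation'' drops the supremum over $\mathcal F$ inside $\mathbb E$. With $\log|\mathcal F|\ge n$, the quantity $\mathbb{E}\sup_f\frac1n\#\{i:h_f(x_i)<S/2\}$ need not be comparable to $\sup_f\mathbb{P}(h_f<S/2)$. You need uniform concentration of the empirical boundary fraction, and that requires a Lipschitz majorant of $|f-F_f|$. Your $\psi_f$ does not provide one, since $|f-\psi_f|$ jumps from about $2$ to $0$ across the boundary.

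The paper uses the two-sided surrogate $F_f=\operatorname{sgn}_\gamma\circ d_f$ with $\gamma=S/2$, which is your $\phi_f$. Then $|f-F_f|=(1-|d_f|/\gamma)_+$ is $1/\gamma$-Lipschitz. Isoperimetry plus a union bound give $\frac1n\mathbb{E}\sup_f\sum_i|f-F_f|(x_i)\lesssim\frac1S\sqrt{c\log|\mathcal F|/(nd)}+2e^{-dS^2/(8c)}$. The deviation piece is absorbed into the middle term of the part-2 bound.

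Since $h_f$ is always $1$-Lipschitz, you could instead dominate $|f-\psi_f|\le 2\min\{1,(2-h_f/S')_+\}$ with $S'=S/4$ and run the same argument. That yields the part-2 bound, and hence part 1, without the extra hypotheses.

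Your tensorization worry is not a real obstacle. The per-coordinate argument of Bubeck--Sellke, which the paper simply cites for the main term, suffices.
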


\begin{proof}[Proof sketch]
Equation~\ref{eq:boundRad} is obtained via a Lipschitz surrogate argument combined with isoperimetry.  
The refined bound in Equation~\ref{eq:improvedRademacher} further leverages the representation  
$f = \operatorname{sgn} \circ d_f$ (Remark~\ref{rem:SDF}),  
using that large stability ensures $d_f$ remains well separated from the discontinuity at $0$.  
Complete details are provided in \Cref{sec:Appgenbound}.  
\end{proof}

\begin{remark}
\label{rem:Slessregular}
In contrast to \citet{bubeck2021a}, where stability is measured by the minimal Lipschitz constant of the function class, our initial bound in Theorem~\ref{thm:genbound} incurred an additional factor  
$\sqrt{\log |\mathcal{F}|/n}$ in the regime $\log|\mathcal{F}| \geq n$.  
By assuming mild regularity conditions, we can eliminate this gap and recover the same scaling as in \citet{bubeck2021a}.  
\end{remark}

The key insight of \Cref{thm:genbound}, combined with the classical generalization bound in Equation \ref{eq:gen}, is that \emph{good generalization} can still be achieved in the highly \emph{overparameterized} regime -- provided the classifiers exhibit sufficiently \emph{high class stability}. 
Indeed, the presence of $\tfrac{1}{S}$ in front of $\sqrt{\log|\mathcal{F}|}$ in \Eqref{eq:boundRad} and \Eqref{eq:improvedRademacher} indicates that class stability affects the effective complexity of the model class, potentially mitigating the risks of overfitting in large models. Note that, using a uniform discretization, a finite approximation of an infinite function class parameterized with $p$ parameters over a bounded subset of $\mathbb{R}^p$ satisfies $\log|\mathcal{F}| \in \mathcal{O}(p)$. In this sense, $\log|\mathcal{F}|$ reflects the number of model parameters. 
Therefore, when the number of parameters $p \approx \log|\mathcal{F}|$ is much larger than $n$, the second term in the maximum in \Eqref{eq:improvedRademacher} dominate, and the bounds becomes small if $S$ scales at least in the order of $\sqrt{\tfrac{p}{nd}}$.

We are now ready to state our \emph{law of robustness for discontinuous functions},  
obtained as a direct corollary of the refined Rademacher bound in Equation~\ref{eq:improvedRademacher} of Theorem~\ref{thm:genbound}.  

\begin{corollary}[Law of Robustness for Discontinuous Functions]
\label{cor:law_of_rob}
Assume (H1), (H2), and the additional conditions in 2. of Theorem \ref{thm:genbound} hold. Let $p := \log|\mathcal{F}| \geq n$.  
Fix $\varepsilon, \delta \in (0,1)$ and consider the $0$--$1$ loss $\ell_{\text{0--1}}$.  
There exists an absolute constant $K > 0$ such that, if
\begin{enumerate}
    \item the minimal risk $R^* := \min_{f \in \mathcal{F}} R_{\text{0--1}}(f)$ satisfies $R^* \geq \varepsilon$, and
    \item the sample size $n$ is large enough to ensure  
    (i) $\tfrac{K}{\sqrt{n}} < \tfrac{\varepsilon}{3}$ and  
    (ii) $\sqrt{\tfrac{2 \log(2/\delta)}{n}} < \tfrac{\varepsilon}{2}$,
\end{enumerate}
then with probability at least $1-\delta$ (over the sample), the following holds uniformly for all $f \in \mathcal{F}$:
\begin{equation}\label{eq:LoR_main}
    \hat{R}_{\text{0--1}}(f) \leq R^* - \varepsilon 
    \;\;\implies\;\; 
    S(f) 
    <  \max \left\{ 
        \frac{3K}{\varepsilon} \sqrt{\frac{c \log|\mathcal{F}|}{nd}}, \; 
        \sqrt{\frac{8c}{d} \log\!\left(\frac{6K}{\varepsilon}\right)} 
    \right\}.\
\end{equation}
\end{corollary}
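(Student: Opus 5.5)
We argue by contraposition, combining the uniform generalization bound \eqref{eq:gen} with the refined Rademacher estimate \eqref{eq:improvedRademacher} of Theorem~\ref{thm:genbound}. The idea is as follows. If some $f\in\mathcal{F}$ has empirical risk at most $R^*-\varepsilon$ while its population risk is at least $R^*$, then its generalization gap is at least $\varepsilon$. Uniform convergence forces the complexity term to be at least of order $\varepsilon$, and the only way this can happen is for the stability-dependent terms of \eqref{eq:improvedRademacher} to be large, i.e.\ for $S$ to be small.

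\textbf{Step 1 (high-probability event).} For the $0$--$1$ loss we have $a=1$. By \eqref{eq:gen} and \eqref{eq:RademacherComp}, with probability at least $1-\delta$,
\[
\sup_{f\in\mathcal{F}}\big(R_{0\text{-}1}(f)-\hat R_{0\text{-}1}(f)\big)\le \mathcal{R}_{n,\mu}(\mathcal{F})+\sqrt{2\log(2/\delta)/n}.
\]
We work on this event. Suppose some $f\in\mathcal{F}$ satisfies $\hat R(f)\le R^*-\varepsilon$. Since $R(f)\ge R^*$, the left-hand side is at least $\varepsilon$. Using condition 2(ii), the deviation term is below $\varepsilon/2$, so $\mathcal{R}_{n,\mu}(\mathcal{F})>\varepsilon/2$.

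\textbf{Step 2 (applying Theorem~\ref{thm:genbound}).} A subtlety here is that Theorem~\ref{thm:genbound} is stated with a lower bound on $\min_{f}S(f)$ over the whole class, whereas the corollary concerns a single $f$. I would handle this by applying the theorem to the subclass $\mathcal{F}_s:=\{f\in\mathcal{F}:S(f)>s\}$, where $s$ is the threshold on the right of \eqref{eq:LoR_main}, and arguing that if the offending $f$ had $S(f)\ge s$ we reach a contradiction.

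To carry this out, I would run the argument of Step 1 on the subclass $\mathcal{F}_s$ directly. This is legitimate because $\hat R(f)\le R^*-\varepsilon$ and $R(f)\ge R^*$ hold regardless of which class we view $f$ as a member of. We also need $\log|\mathcal{F}_s|\ge n$ for the theorem to apply. Either we pad $\mathcal{F}_s$ with stable duplicates, or, more simply, we use monotonicity: Rademacher complexity only grows with the class, and the bound in the theorem is stated in terms of $\log|\mathcal{F}|$. Replacing $\log|\mathcal{F}_s|$ by $\log|\mathcal{F}|$ in the bound is therefore harmless, provided the theorem's proof really only uses $\log|\mathcal{F}_s|\le\log|\mathcal{F}|$ together with the regime condition. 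This bookkeeping is the step I expect to need the most care; in the worst case one accepts a slightly different constant.

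The other delicate point is that the uniform event must hold simultaneously for every subclass. It does, because the supremum over $\mathcal{F}_s$ is dominated by the supremum over $\mathcal{F}$. What must be checked is that we bound $\mathcal{R}_{n,\mu}(\mathcal{F}_s)$ rather than $\mathcal{R}_{n,\mu}(\mathcal{F})$. One way is to apply \eqref{eq:gen} to each of the finitely many distinct sets $\mathcal{F}_s$ (these are nested) at the single relevant threshold $s$. The threshold is deterministic, since it depends only on $\varepsilon,c,d,n,|\mathcal{F}|$, so no union bound is needed.

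\textbf{Step 3 (case analysis).} Set $K=K_2$. From $\varepsilon/2<\mathcal{R}_{n,\mu}(\mathcal{F}_s)\le K\max\{\cdot\}$, one of the three terms must exceed $\varepsilon/(2K)$.

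The first term would give $K/\sqrt n\ge\varepsilon/2$. Condition 2(i) excludes this, since it gives $K/\sqrt n<\varepsilon/3$, and so the first term contributes less than $\varepsilon/3$.

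To get the stated constants, I would sharpen the split as follows. The sum of the deviation term and the first term is below $\varepsilon/2+\varepsilon/3$, so the remaining terms must account for more than $\varepsilon/6$. This yields one of two alternatives:
\[
K\frac{\sqrt c}{s}\sqrt{\frac{\log|\mathcal{F}|}{nd}}>\frac{\varepsilon}{3}\quad\text{or}\quad 2K\exp\!\Big(-\frac{ds^2}{8c}\Big)>\frac{\varepsilon}{3}.
\]
Solving the first gives $s<\frac{3K}{\varepsilon}\sqrt{c\log|\mathcal{F}|/(nd)}$. Solving the second gives $s<\sqrt{\frac{8c}{d}\log(6K/\varepsilon)}$. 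Either way $s$ is less than the maximum in \eqref{eq:LoR_main}, which contradicts the choice of $s$ as that maximum. Hence no $f$ with $S(f)\ge s$ can achieve $\hat R(f)\le R^*-\varepsilon$, which is the claimed implication. Finally, I would adjust the split constants ($\varepsilon/2$ versus $\varepsilon/3$) so that they match the stated $3K$ and $6K$ exactly.
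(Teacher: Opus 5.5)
Your proof is correct and is essentially the paper's argument in contrapositive form: the paper also fixes the deterministic subclass $\mathcal{F}_{S_*}=\{f\in\mathcal{F}:S(f)\geq S_*\}$, with $S_*$ the stated threshold, and uses part~2 of Theorem~\ref{thm:genbound} together with condition 2(i) to get $\mathcal{R}_{n,\mu}(\mathcal{F}_{S_*})\leq\varepsilon/3$. It then applies \eqref{eq:gen}, \eqref{eq:RademacherComp} and condition 2(ii) to force $R_{\text{0--1}}(f)-\hat{R}_{\text{0--1}}(f)<\varepsilon$ on $\mathcal{F}_{S_*}$, silently passing over the $\log|\mathcal{F}_{S_*}|\geq n$ issue you raise. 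Your ``sharpen the split'' paragraph is garbled (a leftover of $\varepsilon/6$ would not yield the $\varepsilon/3$ alternatives), but it is also unnecessary: since the bound is a maximum, one term already exceeds $\varepsilon/(2K)$, which gives $s<\max\bigl\{\tfrac{2K}{\varepsilon}\sqrt{\tfrac{c\log|\mathcal{F}|}{nd}},\sqrt{\tfrac{8c}{d}\log(4K/\varepsilon)}\bigr\}$; this is strictly below the stated threshold, so no constants need adjusting, and the slack also covers the boundary case $S(f)=s$ that your strict definition of $\mathcal{F}_s$ leaves out.
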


\begin{proof}[Proof sketch]
Apply the Rademacher bound (Theorem~\ref{thm:genbound}) to the high-stability subset  
$\mathcal{F}_{S_*} := \{ f \in \mathcal{F} : S(f) \geq S_* \}$.  
For $S_*$ chosen large enough, such functions cannot achieve empirical risk below $R^* - \varepsilon$,  
so any interpolating classifier with risk $\leq R^* - \varepsilon$ must lie outside $\mathcal{F}_{S_*}$, i.e., must satisfy $S(f) < S_*$. The full proof is provided in Appendix \ref{sec:lawofrobproof}. 
\end{proof}

\begin{remark}
As in \cite{bubeck2021a}, we assume $R^\star > 0$, corresponding to a setting in which the hypothesis class does not achieve zero classification error. This accounts for inherent label noise or model misspecification. Unlike \cite{bubeck2021a}, which assumes Lipschitz-continuous losses, our analysis directly addresses the discontinuous $0$--$1$ loss, making it more natural for classification tasks. The overall proof strategy, however, extends to arbitrary losses provided one can derive an appropriate bound on the Rademacher complexity of the composed function class, as in \Eqref{eq:RademacherComp}. 
\end{remark}
\begin{remark}
Importantly, this result also covers intrinsically discontinuous classifiers, such as quantized neural networks and spiking neural networks. 
Moreover, since self-attention is in general not Lipschitz-continuous ~\cite{kim2021lipschitzconstantselfattention}, our framework appears particularly well-suited to the analysis of overparameterization of transformers, which underlie most state-of-the-art language models. 
\end{remark}

From \Eqref{eq:LoR_main} we conclude that achieving both low training error and high stability requires parameterization on the order $p \approx nd$. This necessity arises in the high-dimensional regime, since when $d$ is large the first term in the maximum dominates for $p \geq n$. This reinforces our central message: \emph{overparameterization may not harm generalization, but on the contrary, is necessary for achieving robustness and good fitting in classification}.
Notably, modern neural networks, including large language models (LLMs) \citep{brown_gpt}, are trained in heavily overparameterized regimes: Even though recent scaling laws ~\cite{hoffmann2022trainingcomputeoptimallargelanguage} suggest a balance between model and data size, contemporary language models still operate deep in the overparameterized interpolation regime, where their capacity is sufficient to achieve near-zero training loss. Therefore, our result may help to understand why such models still do generalize effectively.

\section{A Law of Robustness for Infinite Function Classes}
\label{sec:infcase}
In Theorem~\ref{thm:genbound}, our analysis does not straightforwardly extend to infinite function classes. The usual proof strategy via a covering-number argument requires closeness in parameter space to imply closeness in function space.  
In \citet{bubeck2021a}, this is enforced via Lipschitz continuity in the parameters of the function class, but such a condition is in general meaningless for discontinuous classifiers.  

To overcome this, we restrict our attention to function classes with additional structure and introduce a strengthened stability notion.  
Specifically, we impose a representation analogous to Remark~\ref{rem:SDF}, namely,

\begin{itemize}
    \item[(H3)] The hypothesis class has the form $\mathcal{F} = \operatorname{sgn} \circ \mathcal{G}$,  
    where $\mathcal{G} = \{ g_w : \mathcal{X} \to [-1,1] : w \in \mathcal{W} \}$  
    is a parameterized family of Lipschitz functions.  
    The parameter space $\mathcal{W} \subset \mathbb{R}^p$ is bounded with $\operatorname{diam}(\mathcal{W}) \leq W$,  
    and the parameterization is Lipschitz:
    \[
        \| g_{w_1} - g_{w_2} \|_{\infty} \;\leq\; J \, \| w_1 - w_2 \| .
    \]
\end{itemize}

The extension from finite to infinite classes requires not only (i) Lipschitz continuity in $w$, but also (ii) that the scores $g_w(x)$ stay quantitatively away from zero, so that small parameter perturbations cannot cause arbitrary label flips.  
Class stability alone does not suffice to ensure (ii), as the following example demonstrates.

\begin{example}[Class stability does not prevent discontinuity]
Let $\mathcal{G} = \{ g_w(x) = w \tanh(x) : w \in [-1,1] \}$.  
The parameterization is Lipschitz since
\[
    \| g_{w_1} - g_{w_2} \| \;\leq\; \| w_1 - w_2 \|.
\]
For $w_1 = \tfrac{\varepsilon}{2}$ and $w_2 = -w_1$,  
$\|w_1-w_2\| \leq \varepsilon$, yet
\[
    \| \operatorname{sgn}(g_{w_1}(x)) - \operatorname{sgn}(g_{w_2}(x)) \| = 2
\]
for almost all $x$.  
Each classifier has a single boundary point (hence high class stability),  
but parameter proximity does not imply classifier proximity.  
\end{example}

To guarantee property (ii), we introduce a new robustness measure in the codomain.  

\begin{definition}[Co-margin and Co-stability]
\label{def:costab}
Let $f = \operatorname{sgn} \circ g : \mathcal{X} \to \{-1,1\}$.  
The \emph{co-margin} at $x$ is
\[
    h^*_{g}(x) := |g(x)| ,
\]
and we denote the \emph{normalized co-margin} as
\[
    \bar{h}^*_{g}(x) := \frac{|g(x)|}{L(g)} ,
\]
where $L(g)$ is the Lipschitz constant of $g$.  
The \emph{co-stability} is then the expected co-margin
\[
    S^*(g) := \mathbb{E}[h^*_{g}(x)],
\]
and the \emph{normalized co-stability} is accordingly defined as the expected normalized co-margin
\[
    \bar{S}^*(g) := \mathbb{E}[\bar{h}^*_{g}(x)] .
\]
\end{definition}
\begin{remark}[Representation dependence]
Unlike class stability $S(f)$, which depends only on the decision boundary of $f$, the co-stability $S^*(g)$ and its normalized form $\bar{S}^*(g)$ depend on the particular representation $f = \operatorname{sgn} \circ g$. Different score functions $g$ inducing the same classifier $f$ can yield different values of $S^*(g)$ and $\bar{S}^*(g)$. For the specific representation $f = \operatorname{sgn} \circ d_f$ from Lemma~\ref{lem:SDFasrep}, however, both measures coincide: $S^*(g) = \bar{S}^*(g) = S(f)$.  
\end{remark}

Imposing $S^*(g) \geq S^*>0$ ensures that scores remain, on average, a non-trivial distance away from zero.  
Together with (H3), co-stability provides the continuity and separation properties required for infinite-class generalization bounds.  

Before turning to the formal statement of this fact, we want to discuss the relation of class stability and co-stability. The connection between input- and codomain-based margins is immediate since
\[
h_g(x) \;\geq\; \frac{h^*_g(x)}{L(g)} \;=\; \bar{h}^*_{g}(x).
\]
By $L(g)$-Lipschitz continuity, moving $x$ by $r$ changes $g(x)$ by at most $L(g)r$,  
so flipping the prediction requires $r \ge |g(x)|/L(g)$.  
Taking expectations yields
\begin{equation}\label{eq:classineq}
    S(f) \;\geq\; \bar{S}^*(g).
\end{equation}
Thus normalized co-stability lower-bounds class stability.  
This inequality highlights two levers for improving generalization:  
increasing $S^*(g)$ or decreasing $L(g)$.  
Importantly, $\bar{S}^*(g)$, like $S(f)$, is invariant to input rescaling and therefore serves as a natural robustness measure.  

\begin{remark}
A related ratio, $\tfrac{\gamma}{\mathcal{R}_f}$, appears in \citet{bartlett2017spectrallynormalizedmarginboundsneural},  
where $\gamma$ is the minimum margin and $\mathcal{R}_f$ a spectral complexity term controlling Lipschitzness.  
Empirically, Lipschitz margin training, which enforces
\[
    \bar{S}^*(g) \;\ge\; c,
\]
improves adversarial robustness \citep{tsuzuku2018lipschitzmargintrainingscalablecertification}.  
Moreover, \citet[Corollary~2]{bethune2022payattentionlossunderstanding} show that among maximally accurate classifiers, there exists a $1$-Lipschitz solution that achieves maximal co-margins and satisfies $S(f) = S^*(g)$.  
In particular, the Bayes classifier admits the representation $b = \operatorname{sgn} \circ d_b$, which fulfills these properties.  
\end{remark}
Combining Theorem~\ref{thm:genbound} with \Eqref{eq:classineq},  
the Rademacher complexity of a finite function class $\mathcal{F}= \operatorname{sgn} \circ \mathcal{G}$ can be bounded in terms of normalized co-stability as
\[
    \mathcal{R}_{n,\mu}(\mathcal{F}) 
    \;\leq\;  K_2 \max \left\{ \frac{1}{\sqrt{n}}, \; \sqrt{c}\,\frac{L}{S^*} \sqrt{\frac{\log |\mathcal{F}|}{nd}}, \; 2 \exp\!\Big(-\frac{d S^\ast{^2}}{L^2 8c}\Big) \right\}, 
\]
where $S^*>0$ and $L>0$ are bounds on the minimal co-stability and  maximal Lipschitz constant, respectively.  
Under condition (H3), the statement can be extended to infinite function classes. 

\begin{theorem} 
\label{thm:simplebutbetter}
Suppose (H1) and (H3) hold, and that $S^*(g) > S^* > 0$ and $L(g) \leq L$ for all $g \in \mathcal{G}$.  
Assume further that $p \geq n$. Then, for any covering precision $\Tilde{\varepsilon} > 0$,  
\begin{equation}\label{eq:rademacherinfinite}
    \mathcal{R}_{n,\mu}(\mathcal{F}) 
    \;\leq\; K \max \left\{ 
        \sqrt{\frac{1}{n}}, \;
        \frac{L}{S^*} \sqrt{\frac{p}{nd}} \sqrt{c \log \!\Big(1 + 60WJ \Tilde{\varepsilon}^{-1}\Big)}, \;
        2 \exp\!\Big(-\frac{d S^*{^2}}{8cL^2}\Big), \;
        \frac{J}{S^*}\,\Tilde{\varepsilon} 
    \right\},
\end{equation}
where $K>0$ is an absolute constant independent of $p, n, d, S^*, c, L, J,\Tilde{\varepsilon}, W$.  
\end{theorem}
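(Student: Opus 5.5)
The plan is to reduce to the finite case via a covering argument in parameter space, and then apply the finite-class bound stated just before the theorem, obtained from Theorem~\ref{thm:genbound}(2) and \Eqref{eq:classineq}.

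\textbf{Step 1: net in parameter space.} Take an $\tilde\varepsilon/J$-net $\mathcal{W}_0$ of $\mathcal{W}$ in $\mathbb{R}^p$. Since $\operatorname{diam}(\mathcal{W})\le W$, we can choose it with
\[
\log|\mathcal{W}_0|\le p\log\bigl(1+60WJ\tilde\varepsilon^{-1}\bigr)
\]
(the factor $60$ absorbs slack from rescaling the net). By (H3), every $g_w$ then has a $g_{w_0}$, $w_0\in\mathcal{W}_0$, with $\|g_w-g_{w_0}\|_\infty\le\tilde\varepsilon$.

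\textbf{Step 2: the finite net class.} Set $\mathcal{F}_0=\operatorname{sgn}\circ\mathcal{G}_0$ with $\mathcal{G}_0=\{g_{w_0}:w_0\in\mathcal{W}_0\}$. Each $g_{w_0}\in\mathcal{G}$ satisfies $S^*(g_{w_0})>S^*$ and $L(g_{w_0})\le L$. Hence \Eqref{eq:classineq} gives $S(\operatorname{sgn}\circ g_{w_0})\ge S^*/L$, and the displayed finite-class bound applies. Plugging in $\log|\mathcal{F}_0|\le p\log(1+60WJ\tilde\varepsilon^{-1})$, and using $p\ge n$ to ensure $\log|\mathcal{F}_0|\ge n$ (enlarging the net if necessary), yields the first three terms of \eqref{eq:rademacherinfinite}.

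\textbf{Step 3: approximation error.} For fixed $\sigma$ and $x_1,\dots,x_n$, we have
\[
\Bigl|\sum_i\sigma_i\operatorname{sgn}g_w(x_i)\Bigr|\le\Bigl|\sum_i\sigma_i\operatorname{sgn}g_{w_0}(x_i)\Bigr|+2\,\#\{i:\operatorname{sgn}g_w(x_i)\ne\operatorname{sgn}g_{w_0}(x_i)\}.
\]
A sign flip at $x_i$ requires $|g_w(x_i)|\le\tilde\varepsilon$. Consequently
\[
\mathcal{R}_{n,\mu}(\mathcal{F})\le\mathcal{R}_{n,\mu}(\mathcal{F}_0)+\frac{2}{n}\,\mathbb{E}\sup_w\#\{i:|g_w(x_i)|\le\tilde\varepsilon\}.
\]

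\textbf{Main obstacle: controlling the sup over $w$ of the small-score fraction.} A pointwise Markov bound $\mathbb{P}(|g_w|\le\tilde\varepsilon)$ is not enough, because the supremum over $w$ sits inside the expectation. I would first try to push the supremum onto the net. If $|g_w(x_i)|\le\tilde\varepsilon$, then $|g_{w_0}(x_i)|\le2\tilde\varepsilon$, so it suffices to bound the empirical fraction of points with $|g_{w_0}(x_i)|\le2\tilde\varepsilon$, uniformly over the finite net.

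To do this, use the truncated surrogate
\[
\phi_{w_0}=\min\bigl\{1,\,|g_{w_0}|/S^*\bigr\}.
\]
It is $L/S^*$-Lipschitz, and the small-score indicator is dominated by $1-\phi_{w_0}+2\tilde\varepsilon/S^*$. Isoperimetry together with a union bound over $\mathcal{W}_0$ then shows that the empirical mean of $1-\phi_{w_0}$ is within the second and third terms of its population mean, uniformly in $w_0$. The population mean itself must be kept small by the co-stability assumption, and the residual from the shift by $2\tilde\varepsilon$ is what produces the term $J\tilde\varepsilon/S^*$ (with $J$ entering through the net spacing).

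The delicate part is making this population-mean step clean: co-stability controls only the mean of $|g|$, so it does not by itself prevent mass of $g$ near $0$. I would therefore try to follow the same device as in the proof of Theorem~\ref{thm:genbound}(2). That proof applies isoperimetry to $|g|/L$ (or to $d_f$) around its mean $S^*/L$, which is exactly what produces the term $2\exp(-dS^{*2}/(8cL^2))$. Tracking constants and absorbing everything into $K$ then finishes the argument.
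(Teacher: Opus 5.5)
Your argument is sound in outline, but it reverses the order of the two operations in the paper's proof. The paper smooths first. For every $w$ it writes $f_w=F_{f_w}+(f_w-F_{f_w})$ with $F_{f_w}=\operatorname{sgn}_\gamma\circ g_w$ and $\gamma\asymp S^*$. It bounds the Lipschitz part by the Bubeck--Sellke covering argument, and bounds the residual by $\frac1n\mathbb{E}\sup_w\sum_i|f_w-F_{f_w}|(x_i)$. Since $|f_w-F_{f_w}|=(1-|g_w|/\gamma)_+$ is $J/\gamma$-Lipschitz in $w$, passing to the net costs $J\tilde\varepsilon/\gamma$ deterministically. The net part is then handled by isoperimetry, a union bound, and $\mathbb{E}|f-F_f|\le\mathbb{P}(|g|\le\gamma)\le 2\exp(-dS^{*2}/(8cL^2))$.

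You discretize first and pass from $\mathcal{F}$ to the net by sign-flip counting. This needs only sup-norm closeness of the scores, not $w$-Lipschitzness of a surrogate. Your $1-\phi_{w_0}$ is exactly the paper's residual $|f_{w_0}-F_{f_{w_0}}|$ (with $\gamma=S^*$), so both routes end up controlling the same quantity $\frac1n\max_{w_0}\sum_i(1-|g_{w_0}(x_i)|/\gamma)_+$. Your ordering makes explicit why the discontinuity of $\operatorname{sgn}$ is harmless: labels can only flip where the score is small, and co-stability plus isoperimetry make that region rare. The paper's ordering is a single pass in which every piece is Lipschitz in $(x,w)$, so no separate finite-class result is needed.

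Three points need repair.

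(i) Step 2 cannot treat the finite-class bound as a black box. The paper proves Theorem~\ref{thm:genbound}(2) by running the present argument without the net and plugging into the bound of Theorem~\ref{thm:simplebutbetter}, so inside the paper this would be circular. Moreover, Theorem~\ref{thm:genbound}(2) assumes a path-connected $\mathcal{X}$ and $\log|\mathcal{F}|\ge n$, neither of which is available here. Enlarging the net does not increase the number of \emph{distinct} classifiers in $\mathcal{F}_0$, so it cannot enforce the latter. Prove the finite bound directly instead. Write $\operatorname{sgn}\circ g_{w_0}=\operatorname{sgn}_\gamma\circ g_{w_0}+r_{w_0}$ with $|r_{w_0}|\le(1-|g_{w_0}|/\gamma)_+$. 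The first part is a finite class of $L/\gamma$-Lipschitz functions, handled by the subgaussian maximal inequality coming from isoperimetry. The second part is the same quantity you already control in Step 3. Your proof then becomes the paper's with discretization and smoothing swapped.

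(ii) The threshold in $\phi_{w_0}$ must be $S^*/2$, not $S^*$. Since $S^*(g)$ may exceed $S^*$ by an arbitrarily small amount, $\mathbb{P}(|g|<S^*)$ need not be small. By contrast, $\mathbb{P}(|g|<S^*/2)\le\mathbb{P}(|g|-S^*(g)\le -S^*(g)/2)\le 2\exp(-dS^*(g)^2/(8cL(g)^2))$, which yields exactly the third term.

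(iii) With an $\tilde\varepsilon/J$-net your residual is of order $\tilde\varepsilon/S^*$, so $J$ does not enter ``through the net spacing''. This is dominated by $J\tilde\varepsilon/S^*$ only when $J$ is bounded below by an absolute constant. A net at parameter scale $\tilde\varepsilon$ (the paper's balls $B_{\tilde\varepsilon}(w_{\tilde\varepsilon})$) does give residual $J\tilde\varepsilon/S^*$. However, its cardinality is $(1+2W/\tilde\varepsilon)^p$, which is $\le(1+60WJ\tilde\varepsilon^{-1})^p$ only when $J\ge 1/30$. The paper's own bookkeeping has the same mismatch, so pick one convention and state the resulting requirement on $J$.
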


\begin{proof}[Proof sketch]
The proof follows the previously mentioned $\varepsilon$-net approach, standard in infinite-class settings.  
The Lipschitz continuity in $w$ (from (H3)) controls the covering number of $\mathcal{G}$ at scale $\Tilde{\varepsilon}$, while co-stability ensures that small perturbations in $w$ do not induce flips through the $\operatorname{sgn}$ mapping.  
The additional term $\tfrac{J}{S^*}\Tilde{\varepsilon}$ reflects the residual error introduced by the discretization.  
See \Cref{app:inftygen} for more details.  
\end{proof}

\begin{remark}
The factor $\tfrac{L}{S^*}$ shows that generalization depends jointly on the average prediction confidence $S^*(g)$ and the Lipschitz constant $L(g)$, the latter quantifying robustness of predicted probabilities.  
This aligns with empirical findings  
\citep{khromov2024fundamentalaspectslipschitzcontinuity, gamba2025lipschitzconstantdeepnetworks, gouk2020regularisationneuralnetworksenforcing, Sanyal2020Stable, bethune2022payattentionlossunderstanding},  
which report that smaller Lipschitz constants typically improve generalization, and in some cases exhibit a double-descent behavior.  
\end{remark}

We obtain with the same reasoning as in Corollary \ref{cor:law_of_rob} the following law of robustness for Lipschitz-regular infinite function classes. 

\begin{corollary}[Law of Robustness for Infinite Function Classes]
\label{cor:law_of_rob_infinite}
Assume (H1) and (H3), and fix $\varepsilon, \delta \in (0,1)$.  
Consider the $0$-$1$ loss $\ell_{\text{0--1}}$. There exists an absolute constant $K > 0$ such that, if
\begin{enumerate}
    \item the minimal risk $R^* := \inf_{f \in \mathcal{F}} R_{\text{0--1}}(f)$ satisfies $R^* \geq \varepsilon$, and
    \item the sample size $n$ is large enough so that  
    (i) $\tfrac{K}{\sqrt{n}} < \tfrac{\varepsilon}{3}$ and  
    (ii) $\sqrt{\tfrac{2 \log(2/\delta)}{n}} < \tfrac{\varepsilon}{2}$,
\end{enumerate}
then with probability at least $1-\delta$, for all $\Tilde{\varepsilon} > 0$,  
the following holds uniformly for all $g \in \mathcal{G}$ and $f_g = sgn \circ g$:
\[
    \hat{R}_{\text{0--1}}(f_g) \leq R^*  -          \varepsilon 
    \;\;\implies\;\; 
    \frac{S^*(g)}{L(g)} 
    <  \max \left\{ 
        \frac{3K}{\varepsilon} \sqrt{\frac{p}{nd}} 
        \sqrt{c \log\!\left(1 + 60WJ\tilde{\varepsilon}^{-1}\right)}, \; 
        \sqrt{\frac{8c}{d} \log\!\left(\frac{6K}{\varepsilon}\right)} 
    \right\}.
\]
\end{corollary}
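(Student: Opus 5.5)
The proof will follow the scheme of Corollary~\ref{cor:law_of_rob}, with Theorem~\ref{thm:simplebutbetter} taking the place of Theorem~\ref{thm:genbound}. Fix a threshold $\tau>0$ for the ratio $S^*(g)/L(g)$, and let $\mathcal{G}_\tau := \{g \in \mathcal{G} : S^*(g)/L(g) \geq \tau\}$ with $\mathcal{F}_\tau := \operatorname{sgn}\circ\mathcal{G}_\tau$. Theorem~\ref{thm:simplebutbetter} assumes separate bounds $S^*(g) > S^*$ and $L(g)\le L$, and only the quotient $L/S^*$ enters its second and third terms. So I will re-run its proof on $\mathcal{G}_\tau$, where the ratio bound $\bar S^*(g)\ge \tau$ is all that is used: via \eqref{eq:classineq} it gives $S(f)\ge\tau$, and it is also what controls the net-to-function transfer. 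Equivalently, I could rescale each $g$ by $1/L(g)$. Either way, $\mathcal{R}_{n,\mu}(\mathcal{F}_\tau)$ is bounded by $K\max\{n^{-1/2},\, \tau^{-1}\sqrt{p/(nd)}\sqrt{c\log(1+60WJ\tilde\varepsilon^{-1})},\, 2e^{-d\tau^2/(8c)},\, \text{(discretization term)}\}$. Since the claim holds for every $\tilde\varepsilon>0$, the discretization term can be made negligible by taking $\tilde\varepsilon$ small (the log term grows only logarithmically), or handled by fixing a given $\tilde\varepsilon$ and absorbing that term.

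Next, by \eqref{eq:RademacherComp} and \eqref{eq:gen}, with probability at least $1-\delta$ every $f\in\mathcal{F}_\tau$ satisfies
\[
\hat R_{0\text{-}1}(f) \;\ge\; R(f) - \mathcal{R}_{n,\mu}(\mathcal{F}_\tau) - \sqrt{2\log(2/\delta)/n} \;\ge\; R^* - \mathcal{R}_{n,\mu}(\mathcal{F}_\tau) - \varepsilon/2 .
\]
It therefore suffices to choose $\tau$ so that $\mathcal{R}_{n,\mu}(\mathcal{F}_\tau) < \varepsilon/2$, because then every $f\in\mathcal{F}_\tau$ has $\hat R > R^*-\varepsilon$. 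By contraposition, any $g$ with $\hat R(f_g)\le R^*-\varepsilon$ must satisfy $S^*(g)/L(g) < \tau$.

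To get $\mathcal{R}_{n,\mu}(\mathcal{F}_\tau) < \varepsilon/2$, I bound each term inside the max by $\varepsilon/(2K)$; adjusting the absolute constant, it is enough that each term is below $\varepsilon/(6K)$-type thresholds matching the stated constants.
\begin{itemize}
\item The $1/\sqrt n$ term is handled by assumption (i).
\item The second term is small as soon as $\tau \ge \frac{3K}{\varepsilon}\sqrt{p/(nd)}\sqrt{c\log(1+60WJ\tilde\varepsilon^{-1})}$.
\item The exponential term is small as soon as $\tau\ge\sqrt{\frac{8c}{d}\log(6K/\varepsilon)}$.
\end{itemize}
Taking $\tau$ equal to the maximum of these two quantities gives the statement. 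The remaining case is a class $\mathcal{G}_\tau$ that is empty, where the claim holds trivially.

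\textbf{Main obstacle.} The delicate step is the discretization term $J\tilde\varepsilon/S^*$ together with the uniformity "for all $\tilde\varepsilon$" while holding the probability-$1-\delta$ event fixed. I would first make the good event independent of $\tilde\varepsilon$: let it be the uniform deviation event for $\mathcal{F}_\tau$ at the optimal $\tau$, and then note that the stated bound is monotone in $\tilde\varepsilon$, so the infimum over $\tilde\varepsilon$ is what matters. The other point needing care is checking that the covering argument of Theorem~\ref{thm:simplebutbetter} truly only needs the normalized ratio. If it does not, I would fall back to rescaled scores $g/L(g)$, which leaves $\operatorname{sgn}\circ g$ unchanged but changes $J$, and would need an extra argument.
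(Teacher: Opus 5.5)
Your scheme (restrict to $\mathcal{G}_\tau=\{g: S^*(g)/L(g)\ge\tau\}$, bound $\mathcal{R}_{n,\mu}(\mathcal{F}_\tau)$ via Theorem~\ref{thm:simplebutbetter}, then apply equation~\ref{eq:gen} and contrapose) is exactly what the paper intends. The paper gives nothing beyond ``the same reasoning as in Corollary~\ref{cor:law_of_rob}'', and it silently drops the very term you worry about. The gap is in the step you flag, and your proposed fix runs in the wrong direction.

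``For all $\tilde{\varepsilon}>0$'' is a universal quantifier, so you do not get to pick $\tilde{\varepsilon}$ small; doing so only proves the weakest instances. The threshold
\[
\tau(\tilde{\varepsilon})=\max\Big\{\tfrac{3K}{\varepsilon}\sqrt{\tfrac{p}{nd}}\sqrt{c\log(1+60WJ\tilde{\varepsilon}^{-1})},\ \sqrt{\tfrac{8c}{d}\log(6K/\varepsilon)}\Big\}
\]
is \emph{decreasing} in $\tilde{\varepsilon}$ and equals its second entry once $\tilde{\varepsilon}$ is large. So the infimum you want to pass to sits at $\tilde{\varepsilon}\to\infty$, and the statement is equivalent to the $p$-independent claim $S^*(g)/L(g)<\sqrt{\frac{8c}{d}\log(6K/\varepsilon)}$. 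In that regime the fourth term $\frac{J}{S^*}\tilde{\varepsilon}$ of Theorem~\ref{thm:simplebutbetter} is unbounded, so $\mathcal{R}_{n,\mu}(\mathcal{F}_\tau)\le\varepsilon/3$ is out of reach. Nor can the term be absorbed into an absolute $K$, since it depends on $J$, $\tilde{\varepsilon}$ and $S^*$.

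No technique repairs this, because the statement as quantified is false. Let $\mu_{\mathcal{X}}$ be uniform on $S^{d-1}$ and let the labels be independent fair coins, so $R^*=\tfrac12$. Take $g_w(x)=\max\{-1,\min\{1,\min_{i\le n}\|x-c_i\|_2-\tfrac12\}\}$ with $w=(c_1,\dots,c_n)$ ranging over $n$ copies of the unit ball; then $p=nd$, $J\le 1$ and $L(g_w)\le 1$. Putting the centres on the negatively labelled samples (repeating one to fill all slots) interpolates. For $\log n\ll d$, a fresh $x$ lies at distance $\approx\sqrt2$ from every centre, so $S^*(g)/L(g)\gtrsim\sqrt2-\tfrac12$, which exceeds $\sqrt{\frac{8c}{d}\log(6K/\varepsilon)}$ for large $d$.

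What is provable is the version with $\tilde{\varepsilon}\le\varepsilon S^*/(3KJ)$, where $S^*$ is an absolute lower bound on $S^*(g)$ (see below). There your monotonicity remark does work: the subclasses are nested, so a single good event at the largest admissible $\tilde{\varepsilon}$ covers all smaller ones.

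The second point you flag also does not go through as you hope, because the covering step of Theorem~\ref{thm:simplebutbetter} does not run on the ratio alone. It uses $|f_w-F_{f_w}|=\max\{0,1-|g_w|/\gamma\}$, which is $\frac{J}{\gamma}$-Lipschitz in $w$ for a fixed $\gamma$. At the same time, the tail bound needs $\gamma\le S^*(g)/2$, and $F_{f_w}$ is $L(g_w)/\gamma$-Lipschitz in $x$.
\begin{itemize}
\item A common $\gamma$ therefore costs $\sup L(g)/\inf S^*(g)$ over the subclass, which on $\mathcal{G}_\tau$ can be far larger than $1/\tau$.
\item Taking $\gamma_g=S^*(g)/2$ (itself $\tfrac J2$-Lipschitz in $w$) makes the second and third terms depend only on $\tau$. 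But the discretization error then becomes of order $J\tilde{\varepsilon}/\inf_{\mathcal{G}_\tau}S^*(g)$, which is infinite unless $S^*(g)$ is bounded below on $\mathcal{G}_\tau$.
\item Rescaling to $g/L(g)$ does not help. (H3) gives no control of $w\mapsto L(g_w)$, so the rescaled family need not be Lipschitz in $w$; even when it is, its constant is of order $J/L(g)$, so you again need a quantitative lower bound, now on $L(g)$.
\item The detour through equation~\ref{eq:classineq} is idle too, since class stability alone yields no infinite-class bound; that is the point of the paper's Example.
\end{itemize}
So you need an explicit hypothesis $S^*(g)\ge S^*>0$ on the class, as in Theorem~\ref{thm:simplebutbetter}. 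With it, a $g$-dependent $\gamma$, and an internal net of the subclass (so that net points inherit the ratio bound), your argument yields the corrected corollary for $\tilde{\varepsilon}\lesssim\varepsilon S^*/(KJ)$.
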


\begin{remark}
As in \citet{bubeck2021a}, we require $W$ and $J$ to be at most polynomial in $(n,d,p)$  
so that they do not affect the asymptotic scaling.  
In the case of feedforward neural networks, \citet{bubeck2021a} further show that when the data distribution is concentrated in a ball of radius $R$, it suffices to assume that $W$ is polynomially bounded.  
\end{remark}

Analogous to the finite-class case, we conclude that Lipschitz-regular classifiers must be overparameterized of order $nd$ to achieve both low training error and high normalized co-stability.  
Without sufficient parameter capacity relative to sample size and ambient dimension, robustness cannot be guaranteed: models may fit the training data, but will necessarily exhibit either large Lipschitz constants of the score function or low co-stability, reflecting weak confidence in their predictions.  
Thus, overparameterization emerges as a necessary condition for robustness, reflecting a structural constraint imposed by geometry and probability rather than a mere byproduct of contemporary training practice.

\section{Experiments}
\label{sec:exp}

\begin{figure}[h!]
\centering
\begin{minipage}{0.48\linewidth}
    \centering
    \includegraphics[width=\linewidth]{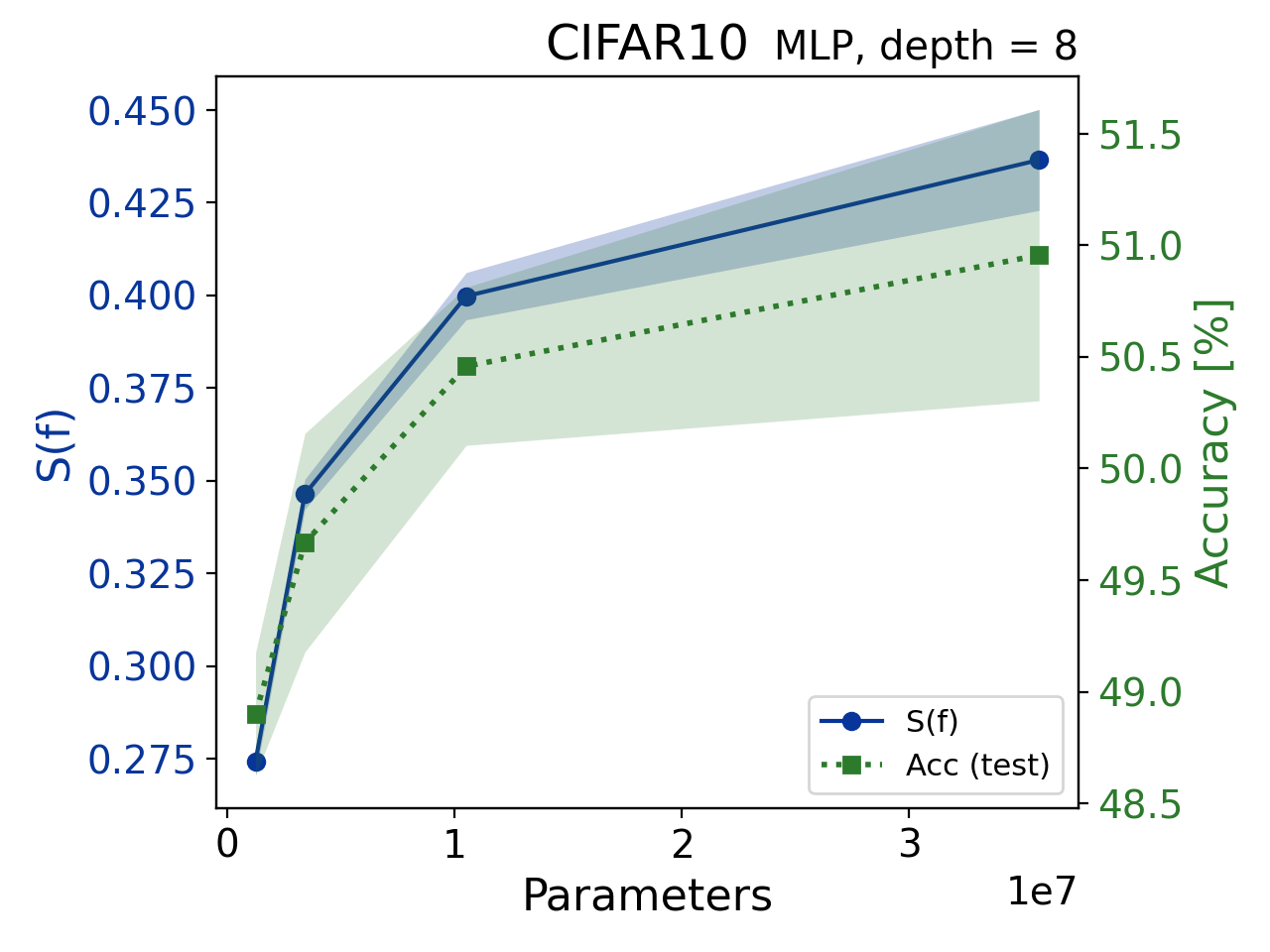}
   
\end{minipage}\hfill
\begin{minipage}{0.48\linewidth}
    \centering
    \includegraphics[width=\linewidth]{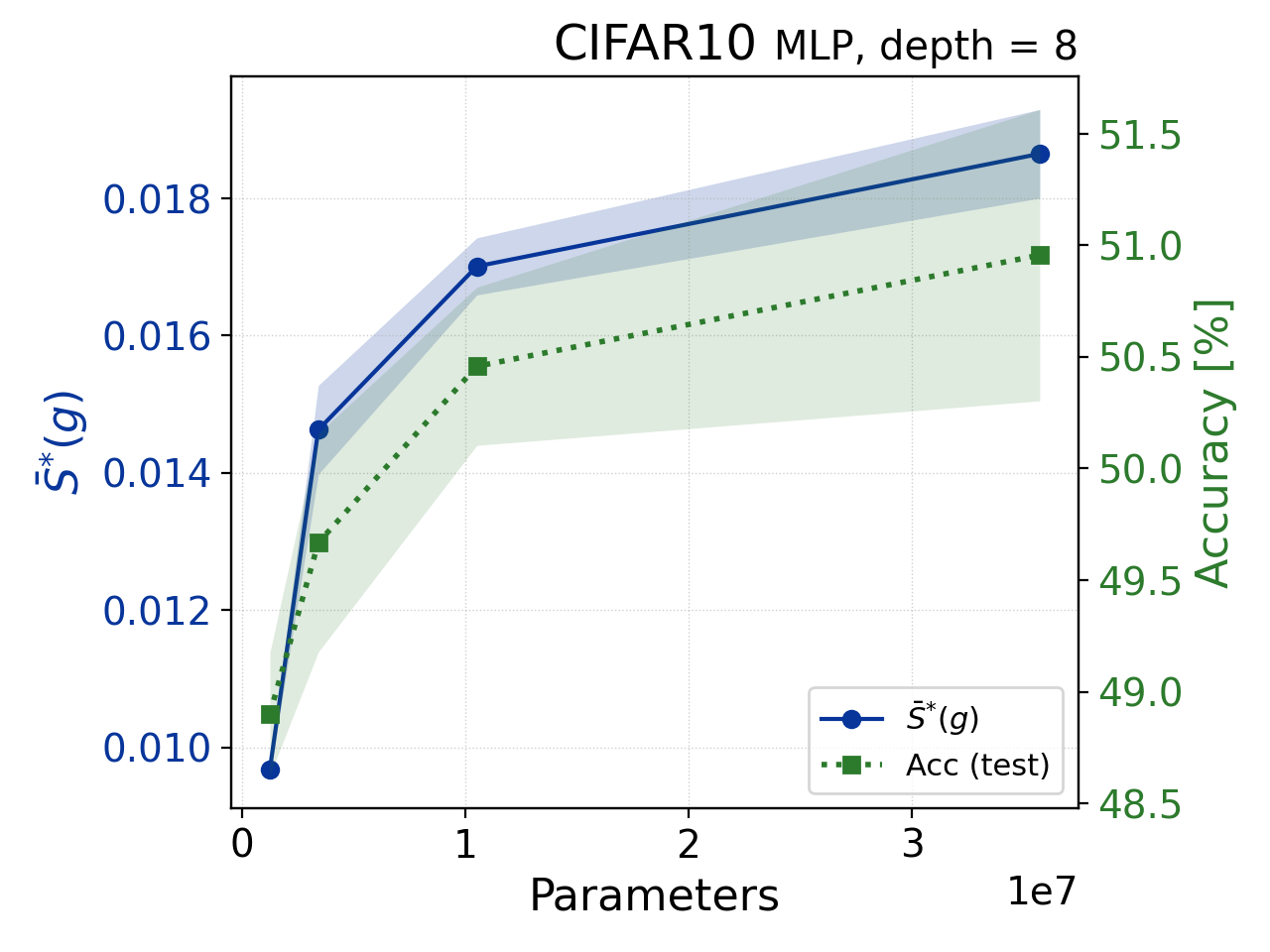}
\end{minipage}

\caption{8-layer MLPs on CIFAR-10. Class stability $S(f)$ (left) and normalized co-stability $S^*(g)/L(g)$ (right) as a function of model size. The configuration with $w=128$ is excluded since it failed to attain 99\% training accuracy after 400 epochs.  }
\label{fig:cifar10_8_mlp}
\end{figure}
We empirically investigate how class stability and co-stability scale with model size in interpolating networks.

\paragraph{Experimental setup.}
We train fully connected MLPs with four or eight hidden layers and widths
\( w \in \{128,256,512,1024,2048\} \) on MNIST and CIFAR-10.
On CIFAR-10, we additionally evaluate CNNs with widths
\( w \in \{128,256,512,1024\} \).
On MNIST, we consider Heaviside-activation MLPs to probe whether the observed scaling extends to discontinuous score functions.
All models are trained to at least 99\% training accuracy, yielding near-interpolating solutions.

\textbf{Class Stability.}  
We estimate empirical class stability \(S(f)\) via adversarial perturbations.  
For each input, we increase the perturbation radius \(r\) along a predefined grid \(\mathbf{r} = (r_1,\ldots,r_n)\) until the classifier's prediction changes.  
The minimal successful radius is recorded as the distance to the decision boundary for that sample, and \(S(f)\) is reported as the average over the dataset.  

\textbf{Normalized Co-Stability.}  
The empirical co-stability \(S^*(g)\) is computed via the multi-class margin
\[
g_j(x) - \max_{i \neq j} g_i(x), \qquad j = \arg\max_i g_i(x),
\]
averaged over the dataset; see Appendix~\ref{sec:multiclass} for details about the multi-class setting.  
We estimate the Lipschitz constant \(L(g)\) using the efficient \textsc{ECLipsE} method \citep{NEURIPS2024_1419d855}, and report the normalized ratio \(S^*(g)/L(g)\) as a function of model size.

\textbf{Results.} 
Figure~\ref{fig:cifar10_8_mlp} illustrates the behaviour for MLPs on CIFAR-10.
Both stability measures increase with width and follow the same qualitative trend as test accuracy. In contrast, standard weight norms (or their inverses) exhibit a different width-scaling and do not track test accuracy comparably. The observed saturation of (normalized co-)stability aligns with theoretical intuition: the Bayes classifier admits a finite (normalized co-)stability level, and pushing beyond this level necessarily reduces accuracy -- an instance of the robustness/accuracy trade-off extensively discussed in the literature \citep{zhang2019theoreticallyprincipledtradeoffrobustness, tsipras2019robustnessoddsaccuracy, bethune2022payattentionlossunderstanding}. Accordingly, we expect stability to plateau once models approach the Bayes decision boundary. For CIFAR-10, test accuracy in our experiments remains around 50\%, which is far below the Bayes optimal. Nevertheless, the same reasoning applies relative to the best classifier achievable within the restricted MLP hypothesis class considered here.

CNN experiments (Figure~\ref{fig:CNNs}) display the same qualitative scaling of class stability with model size.

\begin{figure}[h!]
\centering
\begin{minipage}{0.48\linewidth}
    \includegraphics[width=\linewidth]{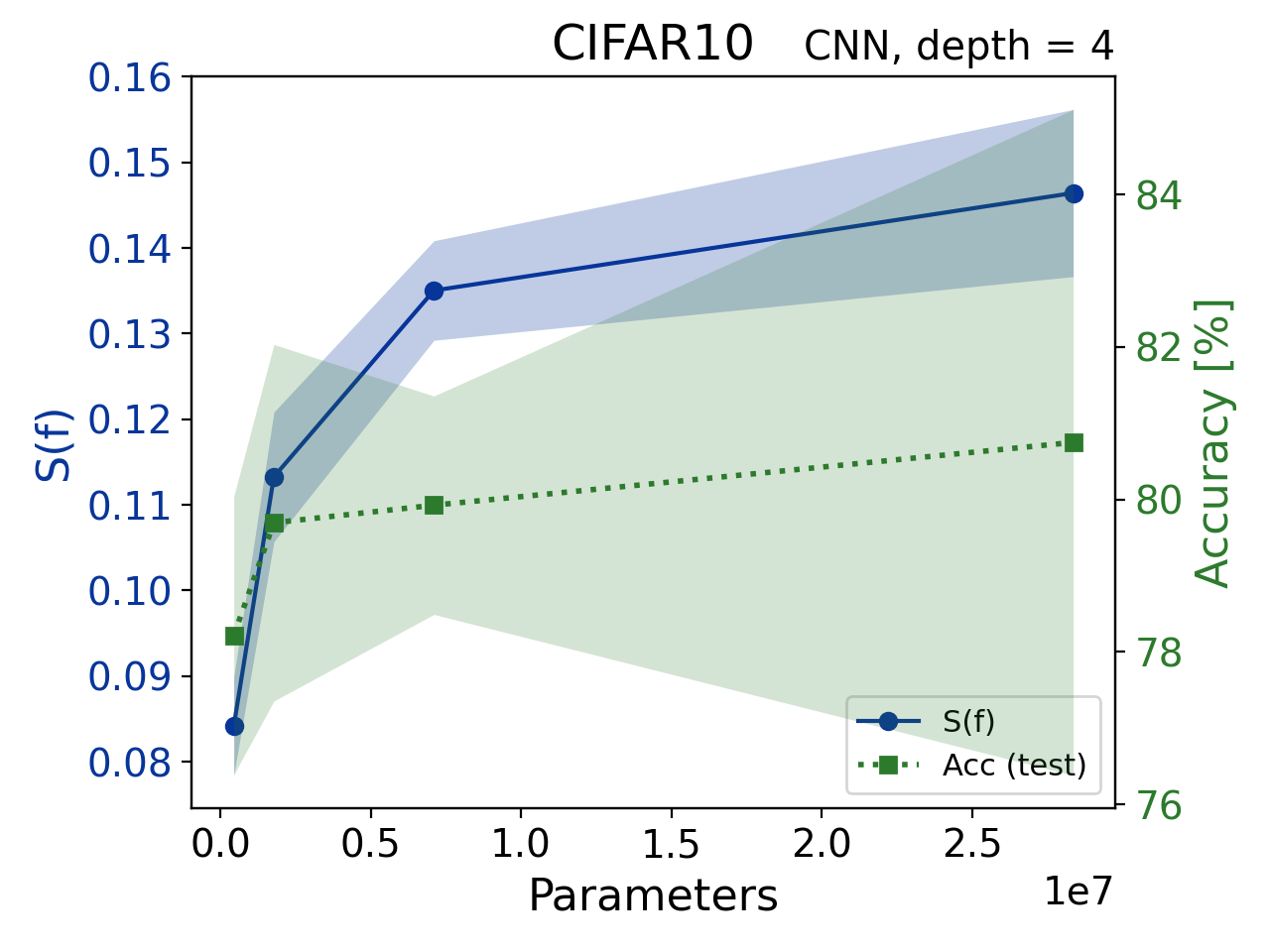}
\end{minipage}\hfill
\begin{minipage}{0.48\linewidth}
    \includegraphics[width=\linewidth]{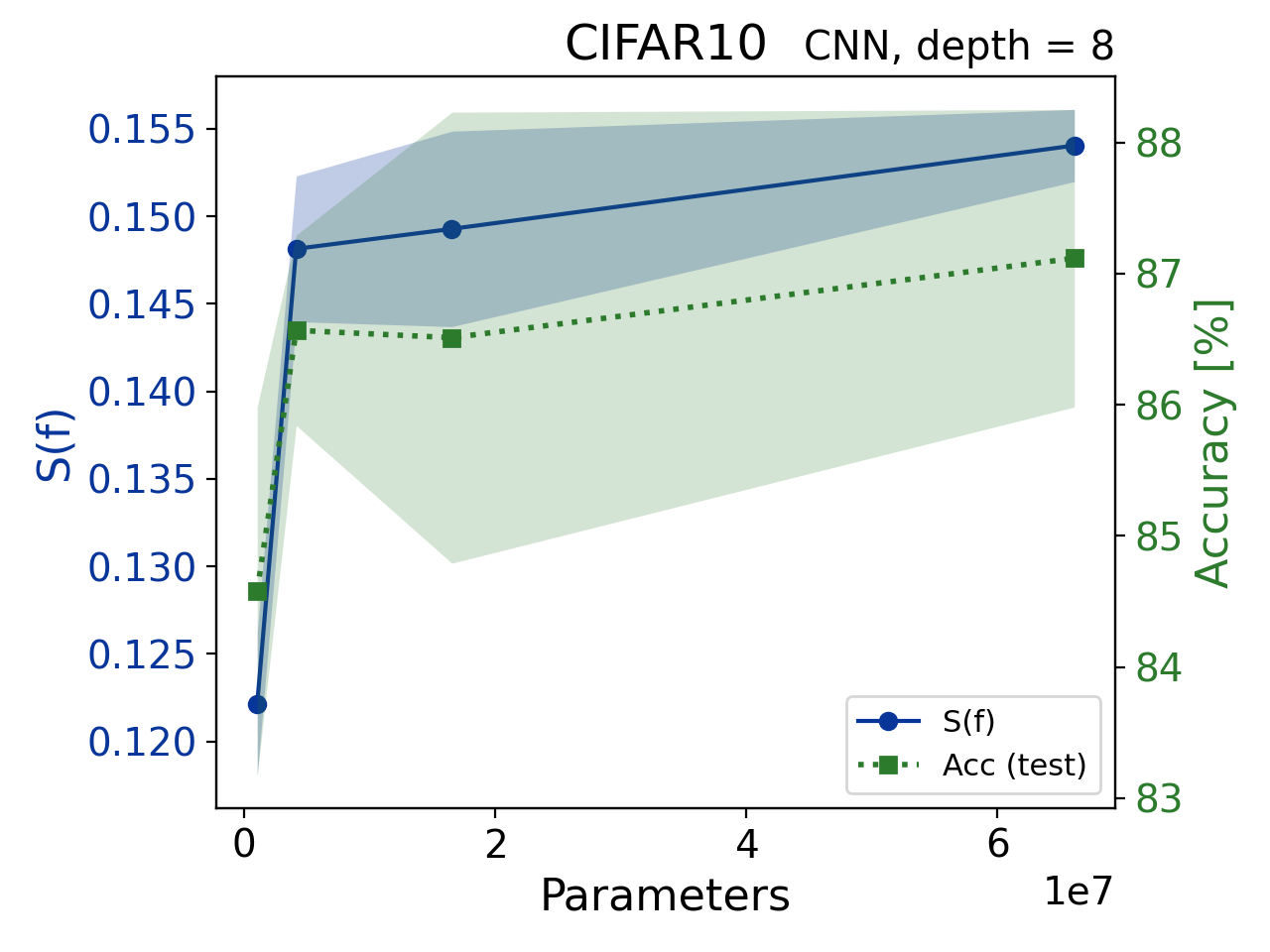}
\end{minipage}
\caption{Class stability $S(f)$ for 4- and 8-layer CNNs trained on CIFAR-10.}
\label{fig:CNNs}
\end{figure}
The Heaviside MLP experiments (Figure~\ref{fig:HMLPs}) show that the observed stability scaling persists even for discontinuous score functions. This suggests that the Lipschitz assumption employed to extend Theorem~\ref{thm:genbound} to infinite function classes is primarily technical rather than intrinsic to the stability--model size relationship.

\begin{figure}[h!]
\centering
\begin{minipage}{0.48\linewidth}
    \includegraphics[width=\linewidth]{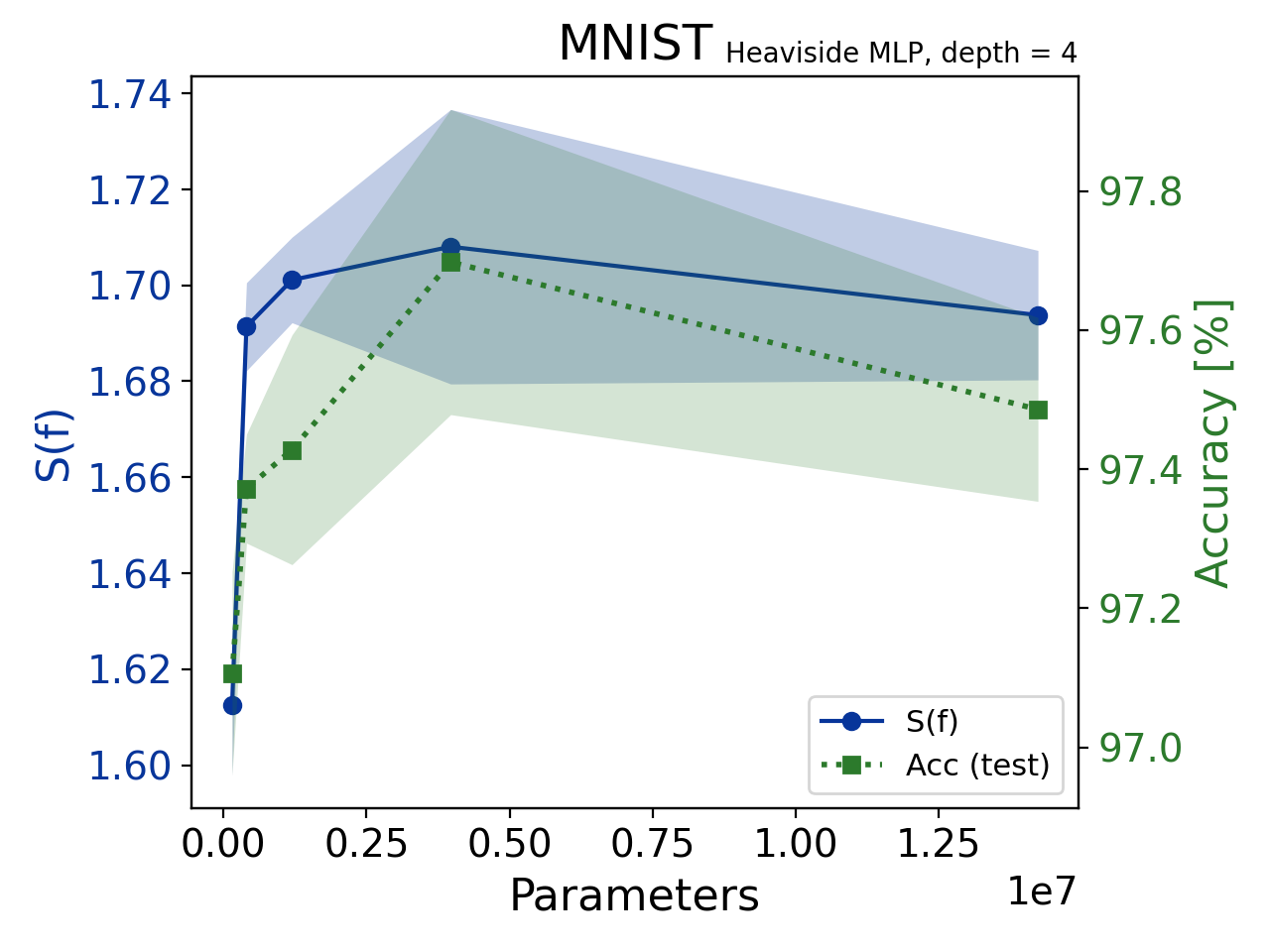}
\end{minipage}\hfill
\begin{minipage}{0.48\linewidth}
    \includegraphics[width=\linewidth]{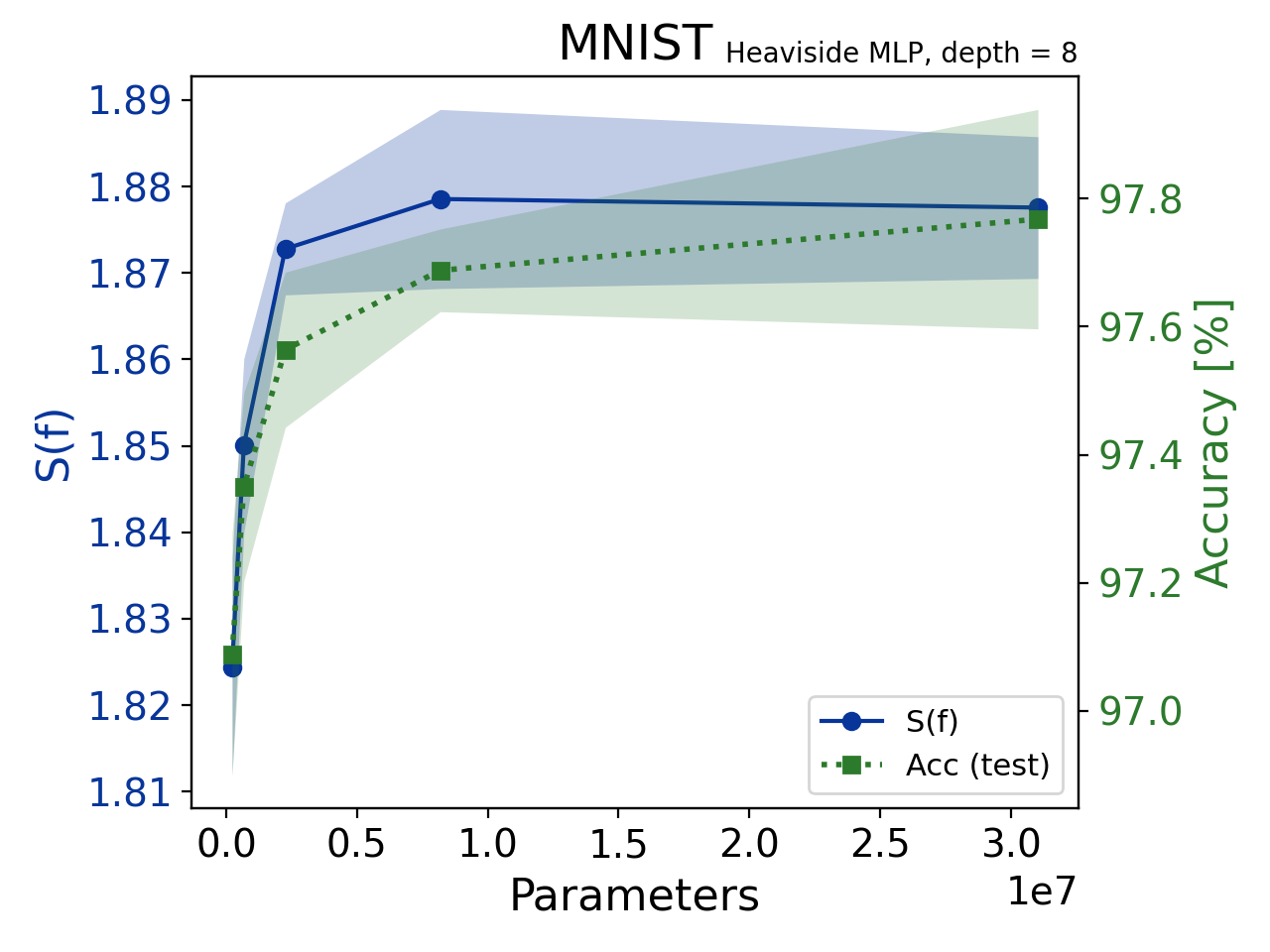}
\end{minipage}
\caption{Class stability $S(f)$ for 4-layer and 8-layer Heaviside-activation MLPs trained on MNIST.}
\label{fig:HMLPs}
\end{figure}

\section{Discussion and Future Work}
\label{sec:discussion}
Our results identify class stability and its codomain analogue, normalized co-stability, as principled quantities linking overparameterization, generalization, and robustness for discontinuous classifiers.  
While we provide geometric laws of robustness for finite and infinite hypothesis classes, and our experiments support their validity, several directions remain open.  

\paragraph{Empirical directions.}  
Computing class stability $S(f)$ and Lipschitz constants $L(g)$ of neural networks is NP-hard \citep{katz2017reluplexefficientsmtsolver, weng2018fastcomputationcertifiedrobustness, scaman2019lipschitzregularitydeepneural}, limiting the direct use of (normalized co-)stability in training. However, practical relaxations exist: normalized co-stability underlies \emph{Lipschitz margin training} \citep{tsuzuku2018lipschitzmargintrainingscalablecertification}, while input-space stability is related to adversarial training \citep{madry2018towards, goodfellow2015explaining}. Biasing optimization explicitly toward (co-)stable solutions is therefore a promising empirical direction.

Another direction is to investigate how isoperimetric structure and related concentration phenomena manifest in real datasets and how they affect classifier stability. 
To isolate the role of the isoperimetric scale in a controlled setting, we consider a toy experiment with synthetic Gaussian data. 
Increasing the variance, and thus reducing concentration, leads to a deterioration of the stability--width scaling (Figure~\ref{fig:toydata}, Appendix~\ref{sec:appisop}). 
The question of whether real data exhibit comparable geometric structure connects naturally to the manifold hypothesis: normalized Riemannian volume measures on positively curved manifolds satisfy isoperimetric inequalities. This raises the question of whether robustness laws fail empirically when the effective dimension of the data manifold is small.

\paragraph{Theoretical directions.}  
Our framework also motivates the exploration of alternative geometric complexity measures. Do quantities such as sharpness of the loss landscape obey robustness laws analogous to those for (normalized co-)stability? Another question concerns sufficiency: we establish that overparameterization is necessary for generalization, but is it also sufficient under suitable optimization? \citet{bombari2023universallawrobustnesssharper} prove sufficiency for Lipschitz regression in the NTK regime but show that it fails for a random features model. Extending such results to discontinuous classifiers may reveal qualitative differences.  

Finally, the role of implicit bias remains unclear. Does gradient descent or SGD exhibit a bias toward classifiers with higher (normalized co-)stability, as suggested by analogous results on region counts \citep{li2025understandingnonlinearimplicitbias}? Establishing such a bias would explain why stable solutions emerge in practice.  

Overall, our findings suggest that stability-based laws capture a core structural constraint of modern overparameterized learning. Developing efficient estimators, stronger empirical validation, and deeper theoretical connections (e.g., with sharpness and optimization bias) are promising next steps toward a unified understanding of generalization and robustness.

\subsubsection*{Acknowledgments}
Jonas von Berg, Massimiliano Datres and Gitta Kutyniok acknowledge support by the project ”Next Generation AI Computing (gAIn),” funded by the Bavarian Ministry of Science and the Arts and the Saxon Ministry for Science, Culture, and Tourism as well as by the Hightech Agenda Bavaria. 

Jonas von Berg and Gitta Kutyniok are also grateful for partial support from the Konrad Zuse School of Excellence in Reliable AI (DAAD). 

Additionally, Jonas von Berg, Gitta Kutyniok, Adalbert Fono, Massimiliano Datres and Sohir Maskey acknowledge support by the Munich Center for Machine Learning (MCML). 

Gitta Kutyniok furthermore acknowledges support by the German Research Foundation under Grants DFG-SPP-2298, KU 1446/31-1 and KU 1446/32-1, and by the Bavarian Ministry for Digital Affairs.

\subsubsection*{Ethics Statement}
This work focuses on the theoretical analysis of generalization in machine learning and does not involve experiments on human subjects, sensitive personal data, or applications with direct societal risks. The datasets referenced are publicly available, and no private or restricted data was used. Potential ethical concerns related to misuse are minimal, as the contributions are mainly theoretical and methodological.

\subsubsection*{Acknowledgment of LLM Use}  We explicitly acknowledge that large language models (LLMs) were used solely for polishing code, improving sentence clarity, and refining grammar.  
They were not used for generating research ideas, proofs, or results.

\subsubsection*{Reproducibility Statement}
We have taken multiple steps to ensure reproducibility of our results. All theoretical claims are accompanied by rigorous proofs, presented in detail in the appendix. Assumptions underlying the theorems are explicitly stated, and definitions are given in full to allow independent verification.

\bibliography{bibliography}

\begin{thebibliography}{56}
\providecommand{\natexlab}[1]{#1}
\providecommand{\url}[1]{\texttt{#1}}
\expandafter\ifx\csname urlstyle\endcsname\relax
  \providecommand{\doi}[1]{doi: #1}\else
  \providecommand{\doi}{doi: \begingroup \urlstyle{rm}\Url}\fi

\bibitem[Bartlett et~al.(2017)Bartlett, Foster, and
  Telgarsky]{bartlett2017spectrallynormalizedmarginboundsneural}
Peter Bartlett, Dylan~J. Foster, and Matus Telgarsky.
\newblock Spectrally-normalized margin bounds for neural networks, 2017.
\newblock URL \url{https://arxiv.org/abs/1706.08498}.

\bibitem[Bartlett \& Mendelson(2002)Bartlett and
  Mendelson]{bartlett2002rademacher}
Peter~L Bartlett and Shahar Mendelson.
\newblock Rademacher and gaussian complexities: Risk bounds and structural
  results.
\newblock \emph{Journal of Machine Learning Research}, 3\penalty0
  (Nov):\penalty0 463--482, 2002.

\bibitem[Bartlett et~al.(2020)Bartlett, Long, Lugosi, and
  Tsigler]{bartlett2020benign}
Peter~L Bartlett, Philip~M Long, G{\'a}bor Lugosi, and Alexander Tsigler.
\newblock Benign overfitting in linear regression.
\newblock \emph{Proceedings of the National Academy of Sciences}, 117\penalty0
  (48):\penalty0 30063--30070, 2020.

\bibitem[Belkin et~al.(2019)Belkin, Hsu, Ma, and Mandal]{belkin2019reconciling}
Mikhail Belkin, Daniel Hsu, Siyuan Ma, and Soumik Mandal.
\newblock Reconciling modern machine-learning practice and the classical
  bias--variance trade-off.
\newblock \emph{Proceedings of the National Academy of Sciences}, 116\penalty0
  (32):\penalty0 15849--15854, 2019.

\bibitem[Bombari et~al.(2023)Bombari, Kiyani, and
  Mondelli]{bombari2023universallawrobustnesssharper}
Simone Bombari, Shayan Kiyani, and Marco Mondelli.
\newblock Beyond the universal law of robustness: Sharper laws for random
  features and neural tangent kernels, 2023.
\newblock URL \url{https://arxiv.org/abs/2302.01629}.

\bibitem[Bousquet \& Elisseeff(2002)Bousquet and
  Elisseeff]{bousquet2002stability}
Olivier Bousquet and Andr{\'e} Elisseeff.
\newblock Stability and generalization.
\newblock \emph{Journal of machine learning research}, 2\penalty0
  (Mar):\penalty0 499--526, 2002.

\bibitem[Brown et~al.(2020)Brown, Mann, Ryder, Subbiah, Kaplan, Dhariwal,
  Neelakantan, Shyam, Sastry, Askell, Agarwal, Herbert-Voss, Krueger, Henighan,
  Child, Ramesh, Ziegler, Wu, Winter, Hesse, Chen, Sigler, Litwin, Gray, Chess,
  Clark, Berner, McCandlish, Radford, Sutskever, and Amodei]{brown_gpt}
Tom Brown, Benjamin Mann, Nick Ryder, Melanie Subbiah, Jared~D Kaplan, Prafulla
  Dhariwal, Arvind Neelakantan, Pranav Shyam, Girish Sastry, Amanda Askell,
  Sandhini Agarwal, Ariel Herbert-Voss, Gretchen Krueger, Tom Henighan, Rewon
  Child, Aditya Ramesh, Daniel Ziegler, Jeffrey Wu, Clemens Winter, Chris
  Hesse, Mark Chen, Eric Sigler, Mateusz Litwin, Scott Gray, Benjamin Chess,
  Jack Clark, Christopher Berner, Sam McCandlish, Alec Radford, Ilya Sutskever,
  and Dario Amodei.
\newblock Language models are few-shot learners.
\newblock In H.~Larochelle, M.~Ranzato, R.~Hadsell, M.F. Balcan, and H.~Lin
  (eds.), \emph{Advances in Neural Information Processing Systems}, volume~33,
  pp.\  1877--1901. Curran Associates, Inc., 2020.
\newblock URL
  \url{https://proceedings.neurips.cc/paper_files/paper/2020/file/1457c0d6bfcb4967418bfb8ac142f64a-Paper.pdf}.

\bibitem[Bubeck \& Sellke(2021)Bubeck and Sellke]{bubeck2021a}
S{\'e}bastien Bubeck and Mark Sellke.
\newblock A universal law of robustness via isoperimetry.
\newblock \emph{Advances in Neural Information Processing Systems},
  34:\penalty0 28811--28822, 2021.

\bibitem[Béthune et~al.(2022)Béthune, Boissin, Serrurier, Mamalet, Friedrich,
  and González-Sanz]{bethune2022payattentionlossunderstanding}
Louis Béthune, Thibaut Boissin, Mathieu Serrurier, Franck Mamalet, Corentin
  Friedrich, and Alberto González-Sanz.
\newblock Pay attention to your loss: understanding misconceptions about
  1-lipschitz neural networks, 2022.
\newblock URL \url{https://arxiv.org/abs/2104.05097}.

\bibitem[Das et~al.(2025)Das, Batra, and
  Srivastava]{das2025directproofunifiedlaw}
Santanu Das, Jatin Batra, and Piyush Srivastava.
\newblock A direct proof of a unified law of robustness for bregman divergence
  losses, 2025.
\newblock URL \url{https://arxiv.org/abs/2405.16639}.

\bibitem[Fawzi et~al.(2016)Fawzi, Fawzi, and
  Frossard]{fawzi2016analysisclassifiersrobustnessadversarial}
Alhussein Fawzi, Omar Fawzi, and Pascal Frossard.
\newblock Analysis of classifiers' robustness to adversarial perturbations,
  2016.
\newblock URL \url{https://arxiv.org/abs/1502.02590}.

\bibitem[Gamba et~al.(2025)Gamba, Azizpour, and
  Björkman]{gamba2025lipschitzconstantdeepnetworks}
Matteo Gamba, Hossein Azizpour, and Mårten Björkman.
\newblock On the lipschitz constant of deep networks and double descent, 2025.
\newblock URL \url{https://arxiv.org/abs/2301.12309}.

\bibitem[Gao et~al.(2019)Gao, Cai, Li, Wang, Hsieh, and
  Lee]{gao2019convergenceadversarialtrainingoverparametrized}
Ruiqi Gao, Tianle Cai, Haochuan Li, Liwei Wang, Cho-Jui Hsieh, and Jason~D.
  Lee.
\newblock Convergence of adversarial training in overparametrized neural
  networks, 2019.
\newblock URL \url{https://arxiv.org/abs/1906.07916}.

\bibitem[Ghosh \& Belkin(2023)Ghosh and
  Belkin]{ghosh2023universaltradeoffmodelsize}
Nikhil Ghosh and Mikhail Belkin.
\newblock A universal trade-off between the model size, test loss, and training
  loss of linear predictors, 2023.
\newblock URL \url{https://arxiv.org/abs/2207.11621}.

\bibitem[Gilmer et~al.(2018)Gilmer, Metz, Faghri, Schoenholz, Raghu,
  Wattenberg, and Goodfellow]{gilmer2018adversarialspheres}
Justin Gilmer, Luke Metz, Fartash Faghri, Samuel~S. Schoenholz, Maithra Raghu,
  Martin Wattenberg, and Ian Goodfellow.
\newblock Adversarial spheres, 2018.
\newblock URL \url{https://arxiv.org/abs/1801.02774}.

\bibitem[Goodfellow et~al.(2015)Goodfellow, Shlens, and
  Szegedy]{goodfellow2015explaining}
Ian~J Goodfellow, Jonathon Shlens, and Christian Szegedy.
\newblock Explaining and harnessing adversarial examples.
\newblock \emph{International Conference on Learning Representations (ICLR)},
  2015.
\newblock URL \url{https://arxiv.org/abs/1412.6572}.

\bibitem[Gouk et~al.(2020)Gouk, Frank, Pfahringer, and
  Cree]{gouk2020regularisationneuralnetworksenforcing}
Henry Gouk, Eibe Frank, Bernhard Pfahringer, and Michael~J. Cree.
\newblock Regularisation of neural networks by enforcing lipschitz continuity,
  2020.
\newblock URL \url{https://arxiv.org/abs/1804.04368}.

\bibitem[Hao et~al.(2024)Hao, Lin, Zou, and Zhang]{hao2024OOD}
Yifan Hao, Yong Lin, Difan Zou, and Tong Zhang.
\newblock On the benefits of over-parameterization for out-of-distribution
  generalization, 2024.
\newblock URL \url{https://arxiv.org/abs/2403.17592}.

\bibitem[Hoffmann et~al.(2022)Hoffmann, Borgeaud, Mensch, Buchatskaya, Cai,
  Rutherford, de~Las~Casas, Hendricks, Welbl, Clark, Hennigan, Noland,
  Millican, van~den Driessche, Damoc, Guy, Osindero, Simonyan, Elsen, Rae,
  Vinyals, and Sifre]{hoffmann2022trainingcomputeoptimallargelanguage}
Jordan Hoffmann, Sebastian Borgeaud, Arthur Mensch, Elena Buchatskaya, Trevor
  Cai, Eliza Rutherford, Diego de~Las~Casas, Lisa~Anne Hendricks, Johannes
  Welbl, Aidan Clark, Tom Hennigan, Eric Noland, Katie Millican, George van~den
  Driessche, Bogdan Damoc, Aurelia Guy, Simon Osindero, Karen Simonyan, Erich
  Elsen, Jack~W. Rae, Oriol Vinyals, and Laurent Sifre.
\newblock Training compute-optimal large language models, 2022.
\newblock URL \url{https://arxiv.org/abs/2203.15556}.

\bibitem[Jiang et~al.(2019)Jiang, Neyshabur, Mobahi, Krishnan, and
  Bengio]{jiang2019fantasticgeneralizationmeasures}
Yiding Jiang, Behnam Neyshabur, Hossein Mobahi, Dilip Krishnan, and Samy
  Bengio.
\newblock Fantastic generalization measures and where to find them, 2019.
\newblock URL \url{https://arxiv.org/abs/1912.02178}.

\bibitem[Jordan et~al.(2024)Jordan, Jin, Boza, You, Cesista, Newhouse, and
  Bernstein]{jordan2024muon}
Keller Jordan, Yuchen Jin, Vlado Boza, Jiacheng You, Franz Cesista, Laker
  Newhouse, and Jeremy Bernstein.
\newblock Muon: An optimizer for hidden layers in neural networks, 2024.
\newblock URL \url{https://kellerjordan.github.io/posts/muon/}.

\bibitem[Katz et~al.(2017)Katz, Barrett, Dill, Julian, and
  Kochenderfer]{katz2017reluplexefficientsmtsolver}
Guy Katz, Clark Barrett, David Dill, Kyle Julian, and Mykel Kochenderfer.
\newblock Reluplex: An efficient smt solver for verifying deep neural networks,
  2017.
\newblock URL \url{https://arxiv.org/abs/1702.01135}.

\bibitem[Keskar et~al.(2017)Keskar, Mudigere, Nocedal, Smelyanskiy, and
  Tang]{keskar2017large}
Nitish~Shirish Keskar, Dheevatsa Mudigere, Jorge Nocedal, Mikhail Smelyanskiy,
  and Ping Tak~Peter Tang.
\newblock On large-batch training for deep learning: Generalization gap and
  sharp minima.
\newblock In \emph{International Conference on Learning Representations}, 2017.

\bibitem[Khromov \& Singh(2024)Khromov and
  Singh]{khromov2024fundamentalaspectslipschitzcontinuity}
Grigory Khromov and Sidak~Pal Singh.
\newblock Some fundamental aspects about lipschitz continuity of neural
  networks, 2024.
\newblock URL \url{https://arxiv.org/abs/2302.10886}.

\bibitem[Kim et~al.(2021)Kim, Papamakarios, and
  Mnih]{kim2021lipschitzconstantselfattention}
Hyunjik Kim, George Papamakarios, and Andriy Mnih.
\newblock The lipschitz constant of self-attention, 2021.
\newblock URL \url{https://arxiv.org/abs/2006.04710}.

\bibitem[Kingma \& Ba(2015)Kingma and Ba]{kingma2015adam}
Diederik~P Kingma and Jimmy Ba.
\newblock Adam: A method for stochastic optimization.
\newblock In \emph{3rd International Conference on Learning Representations
  (ICLR)}, 2015.
\newblock URL \url{https://arxiv.org/abs/1412.6980}.

\bibitem[Kirszbraun(1934)]{Kirszbraun1934berDZ}
Mojżesz~Dawid Kirszbraun.
\newblock {\"U}ber die zusammenziehende und lipschitzsche transformationen.
\newblock \emph{Fundamenta Mathematicae}, 22:\penalty0 77--108, 1934.
\newblock URL \url{https://api.semanticscholar.org/CorpusID:117250450}.

\bibitem[Li et~al.(2025)Li, Xu, Wang, Zhang, and
  Zhang]{li2025understandingnonlinearimplicitbias}
Jingwei Li, Jing Xu, Zifan Wang, Huishuai Zhang, and Jingzhao Zhang.
\newblock Understanding nonlinear implicit bias via region counts in input
  space, 2025.
\newblock URL \url{https://arxiv.org/abs/2505.11370}.

\bibitem[Liu \& Hansen(2024)Liu and Hansen]{liu2024stableneuralnetworksexist}
Z.~N.~D. Liu and A.~C. Hansen.
\newblock Do stable neural networks exist for classification problems? -- a new
  view on stability in ai, 2024.
\newblock URL \url{https://arxiv.org/abs/2401.07874}.

\bibitem[Madry et~al.(2018)Madry, Makelov, Schmidt, Tsipras, and
  Vladu]{madry2018towards}
Aleksander Madry, Aleksandar Makelov, Ludwig Schmidt, Dimitris Tsipras, and
  Adrian Vladu.
\newblock Towards deep learning models resistant to adversarial attacks.
\newblock In \emph{International Conference on Learning Representations
  (ICLR)}, 2018.
\newblock URL \url{https://arxiv.org/abs/1706.06083}.

\bibitem[McShane(1934)]{McSHANE1934ExtensionOR}
Edward~James McShane.
\newblock Extension of range of functions.
\newblock \emph{Bulletin of the American Mathematical Society}, 40:\penalty0
  837--842, 1934.
\newblock URL \url{https://api.semanticscholar.org/CorpusID:38462037}.

\bibitem[Moosavi-Dezfooli et~al.(2016)Moosavi-Dezfooli, Fawzi, and
  Frossard]{moosavi2016deepfool}
Seyed-Mohsen Moosavi-Dezfooli, Alhussein Fawzi, and Pascal Frossard.
\newblock Deepfool: A simple and accurate method to fool deep neural networks.
\newblock In \emph{IEEE Conference on Computer Vision and Pattern Recognition
  (CVPR)}, pp.\  2574--2582, 2016.
\newblock \doi{10.1109/CVPR.2016.282}.

\bibitem[Munn et~al.(2024)Munn, Dherin, and
  Gonzalvo]{munn2024marginbasedmulticlassgeneralizationbound}
Michael Munn, Benoit Dherin, and Javier Gonzalvo.
\newblock A margin-based multiclass generalization bound via geometric
  complexity, 2024.
\newblock URL \url{https://arxiv.org/abs/2405.18590}.

\bibitem[Nagarajan \& Kolter(2021)Nagarajan and
  Kolter]{nagarajan2021uniformconvergenceunableexplain}
Vaishnavh Nagarajan and J.~Zico Kolter.
\newblock Uniform convergence may be unable to explain generalization in deep
  learning, 2021.
\newblock URL \url{https://arxiv.org/abs/1902.04742}.

\bibitem[Neyshabur et~al.(2018)Neyshabur, Li, Bhojanapalli, LeCun, and
  Srebro]{neyshabur2018understandingroleoverparametrizationgeneralization}
Behnam Neyshabur, Zhiyuan Li, Srinadh Bhojanapalli, Yann LeCun, and Nathan
  Srebro.
\newblock Towards understanding the role of over-parametrization in
  generalization of neural networks, 2018.
\newblock URL \url{https://arxiv.org/abs/1805.12076}.

\bibitem[Paszke et~al.(2019)Paszke, Gross, Massa, Lerer, Bradbury, Chanan,
  Killeen, Lin, Gimelshein, Antiga, Desmaison, Kopf, Yang, DeVito, Raison,
  Tejani, Chilamkurthy, Steiner, Fang, Bai, and Chintala]{NEURIPS2019_9015}
Adam Paszke, Sam Gross, Francisco Massa, Adam Lerer, James Bradbury, Gregory
  Chanan, Trevor Killeen, Zeming Lin, Natalia Gimelshein, Luca Antiga, Alban
  Desmaison, Andreas Kopf, Edward Yang, Zachary DeVito, Martin Raison, Alykhan
  Tejani, Sasank Chilamkurthy, Benoit Steiner, Lu~Fang, Junjie Bai, and Soumith
  Chintala.
\newblock Pytorch: An imperative style, high-performance deep learning library.
\newblock In \emph{Advances in Neural Information Processing Systems 32}, pp.\
  8024--8035. Curran Associates, Inc., 2019.
\newblock URL
  \url{http://papers.neurips.cc/paper/9015-pytorch-an-imperative-style-high-performance-deep-learning-library.pdf}.

\bibitem[Rauber et~al.(2017)Rauber, Brendel, and Bethge]{rauber2017foolbox}
Jonas Rauber, Wieland Brendel, and Matthias Bethge.
\newblock Foolbox: A python toolbox to benchmark the robustness of machine
  learning models.
\newblock \emph{arXiv preprint arXiv:1707.04131}, 2017.
\newblock URL \url{https://arxiv.org/abs/1707.04131}.

\bibitem[Rigollet \& Hütter(2023)Rigollet and
  Hütter]{rigollet2023highdimensionalstatistics}
Philippe Rigollet and Jan-Christian Hütter.
\newblock High-dimensional statistics, 2023.
\newblock URL \url{https://arxiv.org/abs/2310.19244}.

\bibitem[Sagawa et~al.(2020)Sagawa, Raghunathan, Koh, and Liang]{Sagawa2020OOD}
Shiori Sagawa, Aditi Raghunathan, Pang~Wei Koh, and Percy Liang.
\newblock An investigation of why overparameterization exacerbates spurious
  correlations.
\newblock In \emph{Proceedings of the 37th International Conference on Machine
  Learning}, ICML'20. JMLR.org, 2020.

\bibitem[Sain(1996)]{sain1996nature}
Stephan~R Sain.
\newblock The nature of statistical learning theory, 1996.

\bibitem[Sanyal et~al.(2020)Sanyal, Torr, and Dokania]{Sanyal2020Stable}
Amartya Sanyal, Philip~H. Torr, and Puneet~K. Dokania.
\newblock Stable rank normalization for improved generalization in neural
  networks and gans.
\newblock In \emph{International Conference on Learning Representations}, 2020.
\newblock URL \url{https://openreview.net/forum?id=H1enKkrFDB}.

\bibitem[Scaman \& Virmaux(2019)Scaman and
  Virmaux]{scaman2019lipschitzregularitydeepneural}
Kevin Scaman and Aladin Virmaux.
\newblock Lipschitz regularity of deep neural networks: analysis and efficient
  estimation, 2019.
\newblock URL \url{https://arxiv.org/abs/1805.10965}.

\bibitem[Shalev-Shwartz \& Ben-David(2014)Shalev-Shwartz and
  Ben-David]{shalev2014understanding}
Shai Shalev-Shwartz and Shai Ben-David.
\newblock \emph{Understanding machine learning: From theory to algorithms}.
\newblock Cambridge university press, 2014.

\bibitem[Sinha et~al.(2020)Sinha, Namkoong, Volpi, and
  Duchi]{sinha2020certifyingdistributionalrobustnessprincipled}
Aman Sinha, Hongseok Namkoong, Riccardo Volpi, and John Duchi.
\newblock Certifying some distributional robustness with principled adversarial
  training, 2020.
\newblock URL \url{https://arxiv.org/abs/1710.10571}.

\bibitem[Sokolic et~al.(2017)Sokolic, Giryes, Sapiro, and
  Rodrigues]{Sokolic_2017}
Jure Sokolic, Raja Giryes, Guillermo Sapiro, and Miguel R.~D. Rodrigues.
\newblock Robust large margin deep neural networks.
\newblock \emph{IEEE Transactions on Signal Processing}, 65\penalty0
  (16):\penalty0 4265–4280, August 2017.
\newblock ISSN 1941-0476.
\newblock \doi{10.1109/tsp.2017.2708039}.
\newblock URL \url{http://dx.doi.org/10.1109/TSP.2017.2708039}.

\bibitem[Soloff et~al.(2025)Soloff, Barber, and
  Willett]{soloff2025buildingstableclassifierinflated}
Jake~A. Soloff, Rina~Foygel Barber, and Rebecca Willett.
\newblock Building a stable classifier with the inflated argmax, 2025.
\newblock URL \url{https://arxiv.org/abs/2405.14064}.

\bibitem[Tsipras et~al.(2019)Tsipras, Santurkar, Engstrom, Turner, and
  Madry]{tsipras2019robustnessoddsaccuracy}
Dimitris Tsipras, Shibani Santurkar, Logan Engstrom, Alexander Turner, and
  Aleksander Madry.
\newblock Robustness may be at odds with accuracy, 2019.
\newblock URL \url{https://arxiv.org/abs/1805.12152}.

\bibitem[Tsuzuku et~al.(2018)Tsuzuku, Sato, and
  Sugiyama]{tsuzuku2018lipschitzmargintrainingscalablecertification}
Yusuke Tsuzuku, Issei Sato, and Masashi Sugiyama.
\newblock Lipschitz-margin training: Scalable certification of perturbation
  invariance for deep neural networks, 2018.
\newblock URL \url{https://arxiv.org/abs/1802.04034}.

\bibitem[Vershynin(2018)]{Vershynin_2018}
Roman Vershynin.
\newblock \emph{High-Dimensional Probability: An Introduction with Applications
  in Data Science}.
\newblock Cambridge Series in Statistical and Probabilistic Mathematics.
  Cambridge University Press, 2018.

\bibitem[Wald et~al.(2024)Wald, Yona, Shalit, and Carmon]{wald2024OOD}
Yoav Wald, Gal Yona, Uri Shalit, and Yair Carmon.
\newblock Malign overfitting: Interpolation can provably preclude invariance,
  2024.
\newblock URL \url{https://arxiv.org/abs/2211.15724}.

\bibitem[Weng et~al.(2018)Weng, Zhang, Chen, Song, Hsieh, Boning, Dhillon, and
  Daniel]{weng2018fastcomputationcertifiedrobustness}
Tsui-Wei Weng, Huan Zhang, Hongge Chen, Zhao Song, Cho-Jui Hsieh, Duane Boning,
  Inderjit~S. Dhillon, and Luca Daniel.
\newblock Towards fast computation of certified robustness for relu networks,
  2018.
\newblock URL \url{https://arxiv.org/abs/1804.09699}.

\bibitem[Xu \& Sivaranjani(2024)Xu and Sivaranjani]{NEURIPS2024_1419d855}
Yuezhu Xu and S.~Sivaranjani.
\newblock Eclipse: Efficient compositional lipschitz constant estimation for
  deep neural networks.
\newblock In A.~Globerson, L.~Mackey, D.~Belgrave, A.~Fan, U.~Paquet,
  J.~Tomczak, and C.~Zhang (eds.), \emph{Advances in Neural Information
  Processing Systems}, volume~37, pp.\  10414--10441. Curran Associates, Inc.,
  2024.
\newblock URL
  \url{https://proceedings.neurips.cc/paper_files/paper/2024/file/1419d8554191a65ea4f2d8e1057973e4-Paper-Conference.pdf}.

\bibitem[Zhang et~al.(2022)Zhang, Jiang, He, and
  Wang]{zhang2022rethinkinglipschitzneuralnetworks}
Bohang Zhang, Du~Jiang, Di~He, and Liwei Wang.
\newblock Rethinking lipschitz neural networks and certified robustness: A
  boolean function perspective, 2022.
\newblock URL \url{https://arxiv.org/abs/2210.01787}.

\bibitem[Zhang et~al.(2019)Zhang, Yu, Jiao, Xing, Ghaoui, and
  Jordan]{zhang2019theoreticallyprincipledtradeoffrobustness}
Hongyang Zhang, Yaodong Yu, Jiantao Jiao, Eric~P. Xing, Laurent~El Ghaoui, and
  Michael~I. Jordan.
\newblock Theoretically principled trade-off between robustness and accuracy,
  2019.
\newblock URL \url{https://arxiv.org/abs/1901.08573}.

\bibitem[Zhu et~al.(2023)Zhu, Liu, Chrysos, and
  Cevher]{zhu2023robustnessdeeplearninggood}
Zhenyu Zhu, Fanghui Liu, Grigorios~G Chrysos, and Volkan Cevher.
\newblock Robustness in deep learning: The good (width), the bad (depth), and
  the ugly (initialization), 2023.
\newblock URL \url{https://arxiv.org/abs/2209.07263}.

\bibitem[Zou et~al.(2024)Zou, Kawaguchi, Liu, Liu, Lee, and
  Hsu]{zou2024robustoutofdistributiongeneralizationbounds}
Yingtian Zou, Kenji Kawaguchi, Yingnan Liu, Jiashuo Liu, Mong-Li Lee, and Wynne
  Hsu.
\newblock Towards robust out-of-distribution generalization bounds via
  sharpness, 2024.
\newblock URL \url{https://arxiv.org/abs/2403.06392}.

\end{thebibliography}
\bibliographystyle{iclr2026_conference}

\appendix
\section{The need for isoperimetry}
\label{sec:appisop}
\subsection{Theoretical justification}
Concentration inequalities are essential tools in high-dimensional probability theory, providing bounds on the tail behavior of random variables. Next, we outline the key strategy from Bubeck \& Sellke \citep{bubeck2021a} for proving the law of robustness for regression, highlighting the importance of an additional assumption on the measure $\mu$. The authors employ the Lipschitz constant of a function as a measure of robustness, where a small Lipschitz constant (i.e., $\approx 1$) of the realization indicates a robust model.  
The basic idea is to leverage the Lipschitz continuity of functions $f: \mathcal{X}\to \mathbb{R}$ in conjunction with isoperimetric inequalities to bound the probability
\begin{equation}
\label{eq:prob1}
    \mathbb{P}( \exists f \in \mathcal{F}:\hat{R}_{\ell}(f) \approx 0 \hspace{0.2cm } \land \hspace{0.2cm } L(f) \leq L_*) < \delta.
\end{equation}
That is, we aim to bound the probability of observing a model that is both robust (i.e., has a small Lipschitz constant \(L(f)\)) and fits the data well (i.e., \(\hat{R}(f) \approx 0\), meaning it nearly interpolates).\\
By contraposition, this implies that with probability at least \(1 - \delta\), the following holds for all \(f \in \mathcal{F}\):

\begin{equation}
\label{eq:implication}
\hat{R}_{\ell}(f) \approx 0 \implies L(f) > L_*(p, n, d).
\end{equation}
Here, \( L_*(p,n,d) \) is an algebraic function of the number of parameters \( p \approx \log|\mathcal{F}| \) (see the paragraph below \Cref{thm:genbound} for details), the number of training samples \( n \), and the input dimension \( d \). It satisfies \( L_*(p,n,d) \gg 1 \) in the non-overparameterized regime \( p \approx n \), thereby implying non-robust behavior.

A key ingredient in \cite{bubeck2021a} for proving (a variant of) \Eqref{eq:prob1} is the isoperimetry assumption on the measure $\mu$.
Isoperimetry, originating in geometry, provides an upper bound on a set’s volume in terms of its boundary’s surface area. In high dimensions, the principle of isoperimetry induces a concentration of measure, where the measure of the $\varepsilon$-neighborhood $A_{\varepsilon}$ of any set $A$ with $\mu(A) > 0 $ has measure $\mu(A_\varepsilon) \to 1$, and the complementary measure decays in the order of $\exp(- d \varepsilon^2)$. This is equivalent to the sub-Gaussian behavior of every bounded Lipschitz-continuous function as stated in Definition~\ref{def:isop}, yielding a concentration property for \( |f(x) - \mathbb{E}(f)| \) that depends on the Lipschitz constant \( L(f) \).

The induced concentration property allows us to bound the probability in \Eqref{eq:prob1}, leveraging the intuition that a smaller Lipschitz constant limits the function’s capacity to align with random labels.  
However, it is important to note that \Eqref{eq:implication} provides information about robustness within $\mathcal{F}$ only if
\begin{equation*}
    \mathbb{P} (\nexists f \in \mathcal{F} : \hat{R}_{\ell}(f) \approx 0) \leq 1 - \delta \quad \iff \quad \mathbb{P} (\exists f \in \mathcal{F} : \hat{R}_{\ell}(f) \approx 0) \geq \delta.    
\end{equation*}
Otherwise, the implication becomes vacuous, as almost no function in $\mathcal{F}$ generalizes well, i.e., achieves near-zero empirical risk, to begin with.
Without imposing any assumptions on $\mu$, Hoeffding's inequality already suffices to derive a Lipschitz-independent bound for any function $f: \mathcal{X} \to [-1,1]$: 
\begin{equation}
\label{eq:McDiarmid}
    \mathbb{P}( | f(x) - \mathbb{E}(f)| \geq t) \leq  2\exp\left(- \frac{t^2}{2}\right) \quad \forall t>0.
\end{equation}
Thus, to ensure that the probability in \Eqref{eq:prob1} remains below $\delta$ while simultaneously allowing for $\mathbb{P}(\exists f \in \mathcal{F} : \hat{R}_{\ell}(f) \approx 0) > \delta$, any concentration inequality relying on the Lipschitz constant must exhibit a sufficiently fast decay (in comparison with \Eqref{eq:McDiarmid}) in the regime $L(f) \gtrsim 1$. This is necessary to yield a non-vacuous bound in \Eqref{eq:implication}, which allows to assess robustness by the increase of the minimal Lipschitz constant $L_*$ even for $L_* > 1$.

For instance, McDiarmid’s inequality applied to Lipschitz functions yields a tail bound of the order $\exp(- \frac{2t^2}{\text{diam}(\mathcal{X})^2L(f)^2})$, which is insufficient as it decays faster than the Hoeffding bound only for $L(f) < 2/\text{diam}(\mathcal{X})$, i.e., at least $\text{diam}(\mathcal{X}) < 2$ is required to include the (relevant) range $L(f)>1$ of Lipschitz constants.
This indicates that a certain restriction of the admissible measures is indeed necessary to obtain non-vacuous statements, i.e., they can not be derived in full generality.

Notably, the $c$-isoperimetry condition in \Eqref{eq:isoperimetry} leads to a faster decay than the Hoeffding bound in \Eqref{eq:McDiarmid} when $L(f)< \sqrt{d c^{-1}}$, making it effective for functions with moderate Lipschitz constants in high-dimensional settings. 
Our goal is to generalize this strategy to handle discontinuous functions, addressing the inherent challenges of classification tasks.

\subsection{Experiments on isoperimetry}\label{app:toydata}
To empirically investigate the role of the isoperimetric assumption in the relationship between accuracy and stability, we conduct experiments on Gaussian data with increasing variance.
We train $4$-layer MLPs of varying width \( w \in \{128, 256, 512, 1024, 2048\} \) on a linear classification task. To ensure comparable training conditions across widths, we first determine the number of epochs required for the smallest model to reach $99\%$ training accuracy and then train all wider models for the same number of epochs. Under this training schedule, all models achieve $99\%$ training accuracy. In this setting, the isoperimetric constant scales with the product of the variance and the ambient dimension \( d = 784 \), as follows from standard Gaussian concentration inequalities.

The first two plots in Figure \ref{fig:toydata} correspond to the regime in which the isoperimetric scale 
$c \sim \sigma^2 d$ remains $\mathcal{O}(1)$.
In this setting, stability increases with width and subsequently plateaus, mirroring the behaviour observed on MNIST and CIFAR-10. For larger values of $c$, this plateau structure disappears: stability becomes non-monotonic and decreases for sufficiently large widths.

These results indicate that the beneficial effect of overparameterization on stability is regime-dependent and can be disrupted when the underlying isoperimetric scale becomes large. However, confirming this behavior would require experiments across a broader range of network widths and systematic evaluation on more complex datasets.

\begin{figure}[h!]
\centering
\begin{minipage}{0.48\linewidth}
    \centering
    \includegraphics[width=\linewidth]{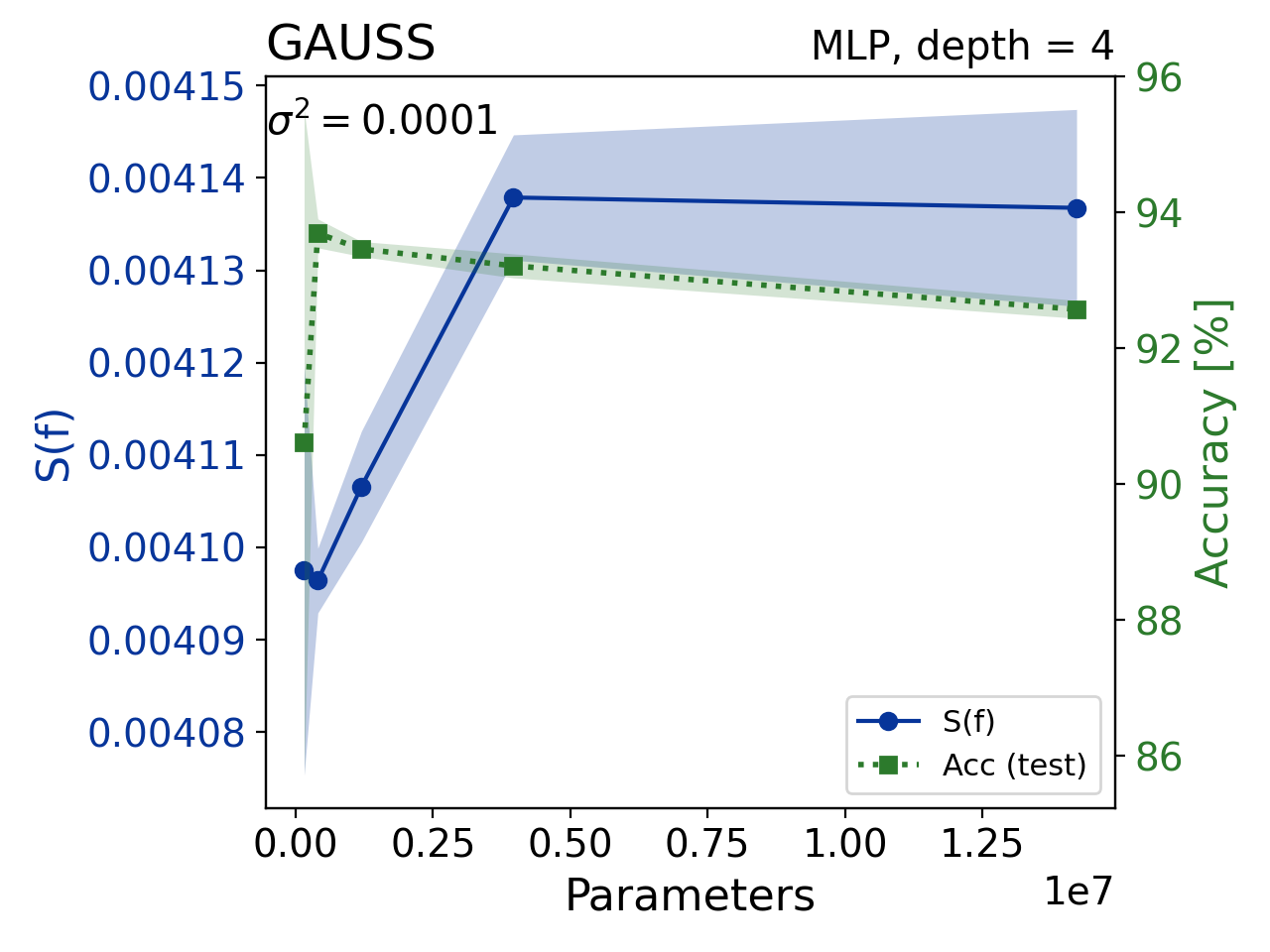}
\end{minipage}\hfill
\begin{minipage}{0.48\linewidth}
    \centering
    \includegraphics[width=\linewidth]{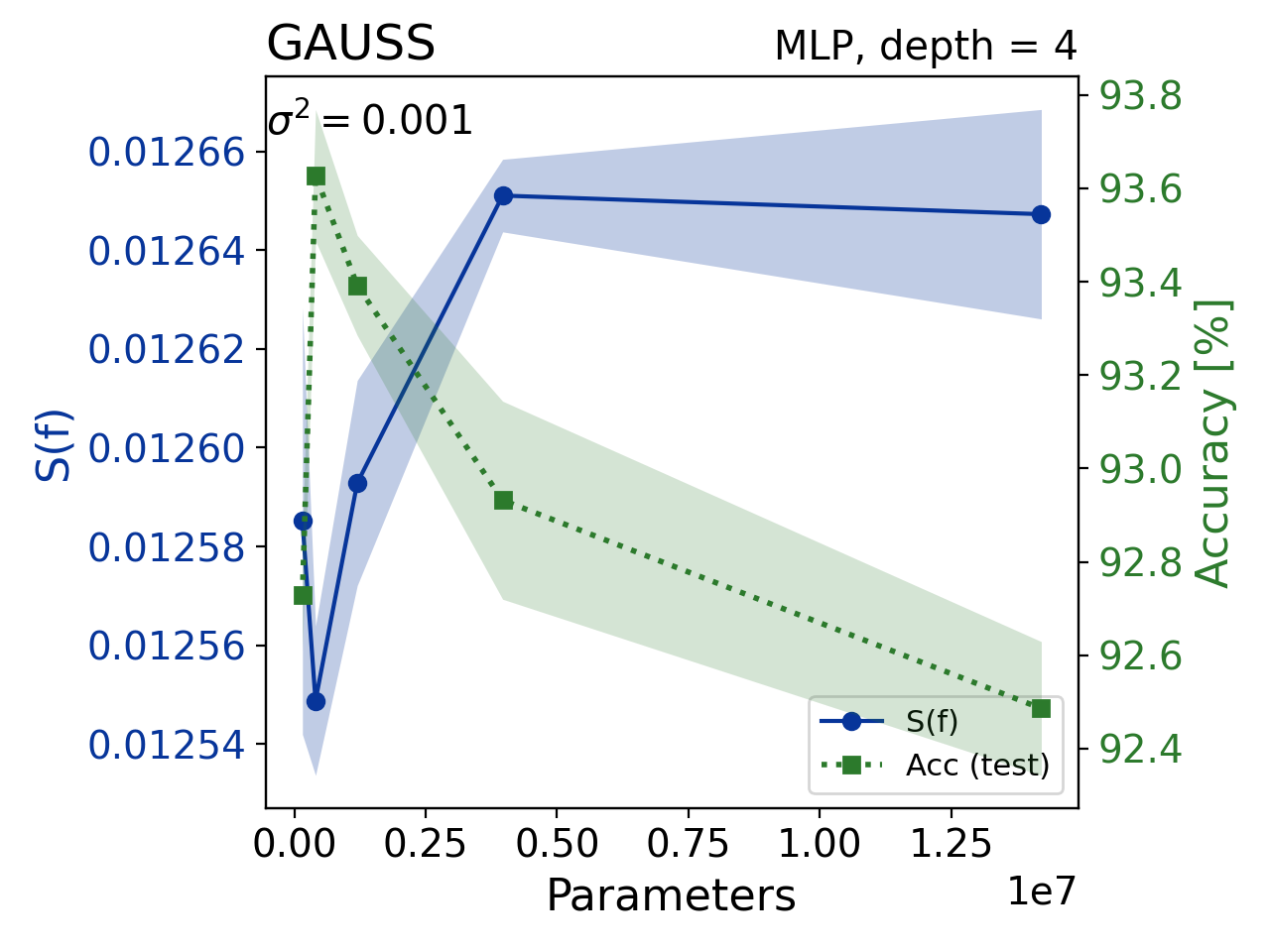}
\end{minipage}
\end{figure}

\begin{figure}[H]
\centering
\begin{minipage}{0.48\linewidth}
    \centering
    \includegraphics[width=\linewidth]{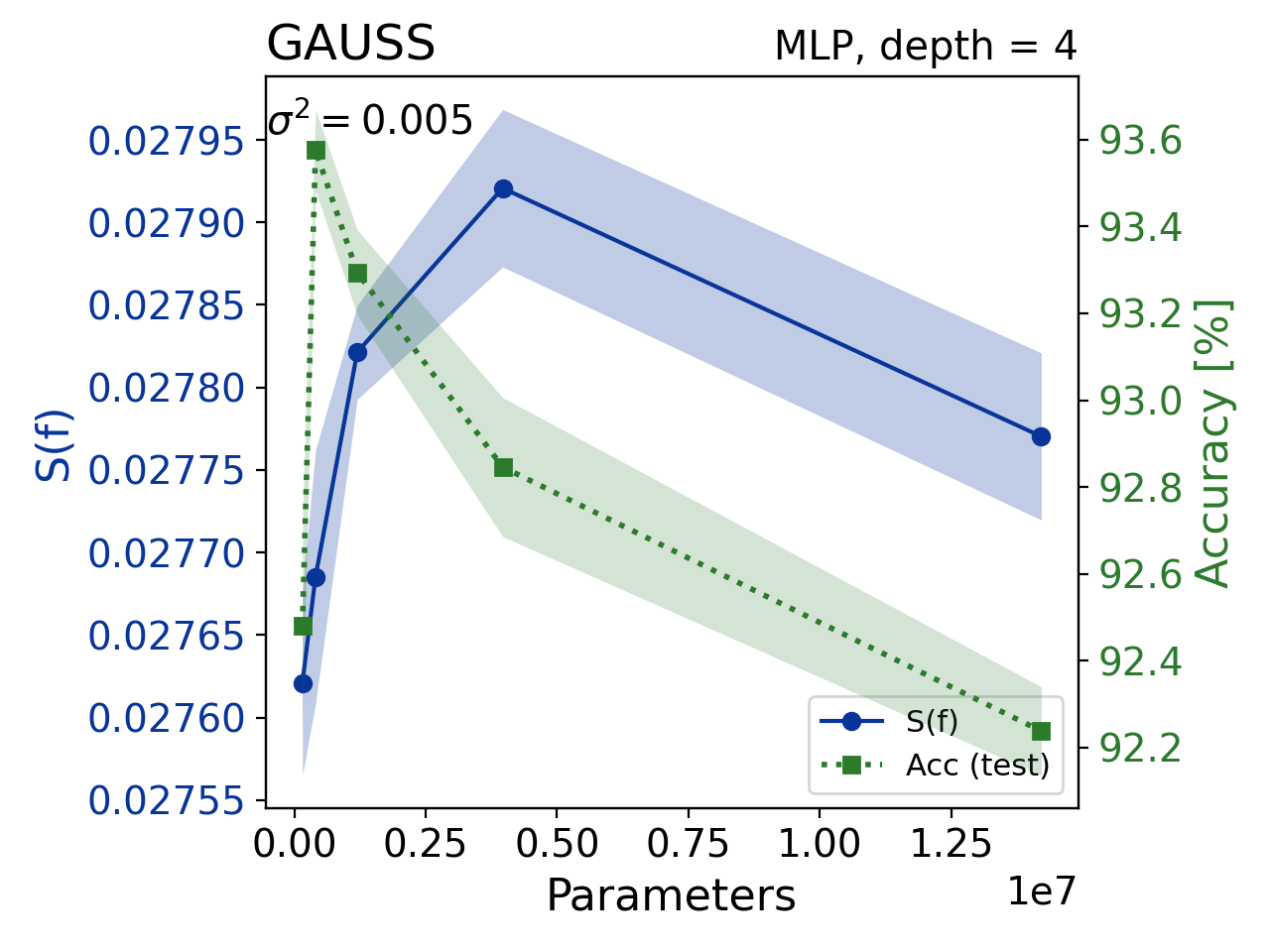}
\end{minipage}\hfill
\begin{minipage}{0.48\linewidth}
    \centering
    \includegraphics[width=\linewidth]{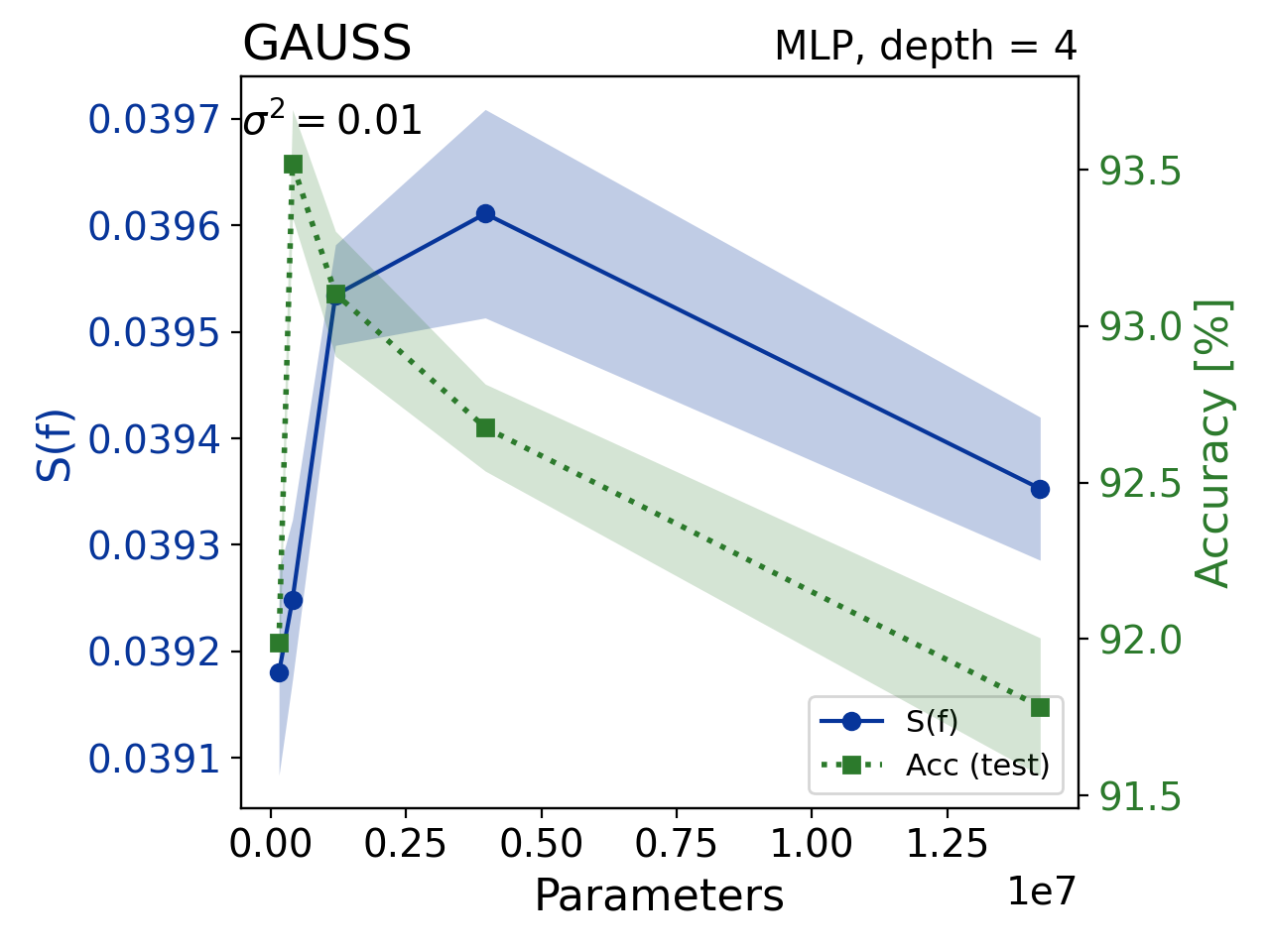}
\end{minipage}
\end{figure}

\begin{figure}[H]
\centering
\begin{minipage}{0.48\linewidth}
    \centering
    \includegraphics[width=\linewidth]{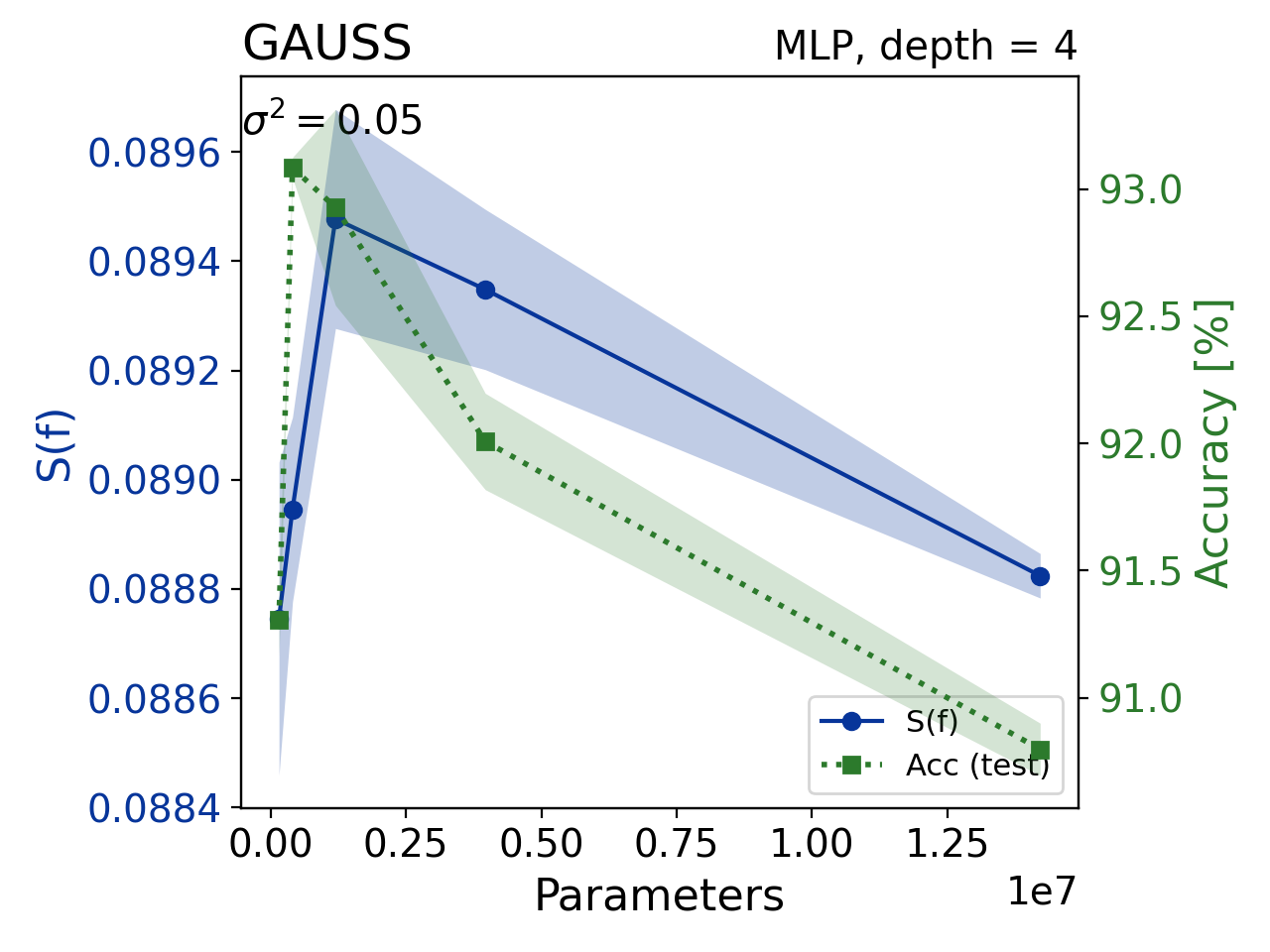}
\end{minipage}\hfill
\begin{minipage}{0.48\linewidth}
    \centering
    \includegraphics[width=\linewidth]{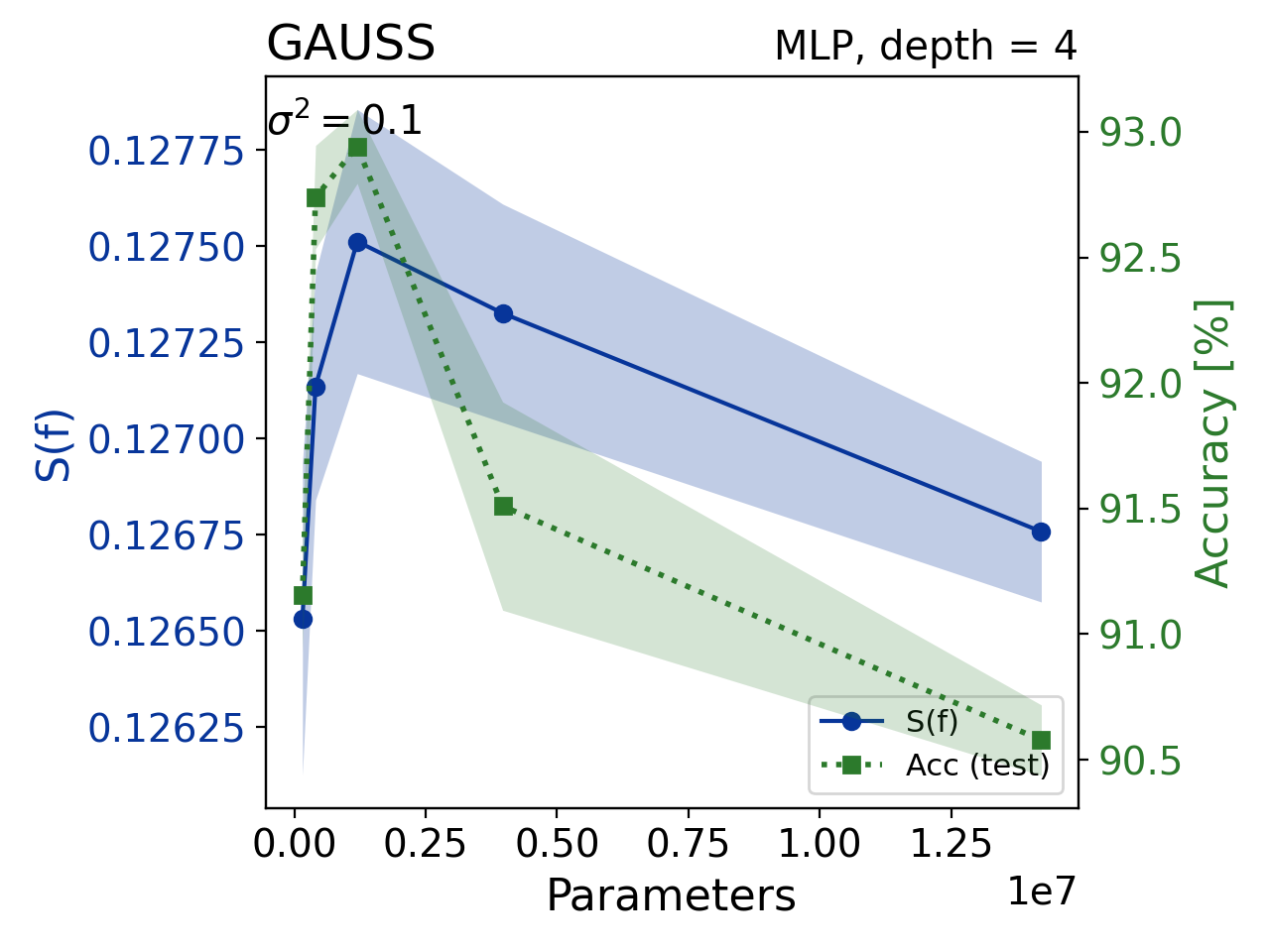}
\end{minipage}
   \caption{Class stability for 4-layer MLPs trained on Gaussian toy-data with different variances.}
   \label{fig:toydata}
\end{figure}

\section{\texorpdfstring{The Signed Distance Function (\cref{rem:SDF})}
{The Signed Distance Function (Theorem 2)}}
\label{app:SDF}

We collect the main properties of the signed distance function
\[
    d_{f}(x) := 
    \begin{cases}
        d(x, f^{-1}(\{-1\})), & \text{if } f(x) = 1, \\[4pt]
       -d(x, f^{-1}(\{1\})), & \text{if } f(x) = -1,
    \end{cases}   
\]
where $d(x,A) := \inf_{y \in A} \|x-y\|_2$.  

\begin{lemma}
\label{lem:SDFLipschitz}
Let $\mathcal{X} \subset \mathbb{R}^d$ be bounded and path-connected,  
and let $f: \mathcal{X} \to \{-1,1\}$.  
Then the signed distance function $d_f$ is $1$-Lipschitz.  
\end{lemma}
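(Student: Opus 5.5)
The plan is to split according to the labels of the two points. Write $A_+ := f^{-1}(\{1\})$ and $A_- := f^{-1}(\{-1\})$, and fix $x,y \in \mathcal{X}$.

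\textbf{Same label.} If $f(x)=f(y)=1$, then $d_f(x)-d_f(y) = d(x,A_-)-d(y,A_-)$. The distance to a fixed set is $1$-Lipschitz in the ambient Euclidean norm: for any $a \in A_-$,
\[
d(x,A_-) \le \|x-a\|_2 \le \|x-y\|_2 + \|y-a\|_2 .
\]
Taking the infimum over $a$ and then symmetrizing gives $|d_f(x)-d_f(y)| \le \|x-y\|_2$. The case $f(x)=f(y)=-1$ is identical with $A_+$ in place of $A_-$. This step uses neither boundedness nor path-connectedness.

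\textbf{Opposite labels.} Say $f(x)=1$ and $f(y)=-1$. Then
\[
|d_f(x)-d_f(y)| = d(x,A_-)+d(y,A_+),
\]
and we must show this sum is at most $\|x-y\|_2$. The trivial estimates $y\in A_-$ and $x\in A_+$ only give the bound $2\|x-y\|_2$. So the argument must produce a single crossing point $z$ lying (in the limit) on the decision boundary and satisfying
\[
\|x-z\|_2+\|z-y\|_2 \le \|x-y\|_2 .
\]
With such a $z$ one has $d(x,A_-)\le\|x-z\|_2$ and $d(y,A_+)\le\|z-y\|_2$, up to $\varepsilon$, which finishes the proof. Path-connectedness is what supplies the crossing. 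Take a path $\gamma:[0,1]\to\mathcal{X}$ from $x$ to $y$ and let $t^*:=\sup\{t : f(\gamma(t))=1\}$. Then every neighbourhood of $z:=\gamma(t^*)$ meets both $A_+$ and $A_-$, so $z$ lies at distance $0$ from both classes. Consequently $d(x,A_-)\le\|x-z\|_2$ and $d(y,A_+)\le\|y-z\|_2$, by approximating $z$ from the appropriate side.

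\textbf{Main obstacle.} The hard step is to obtain $\|x-z\|_2+\|z-y\|_2\le\|x-y\|_2$. By the triangle inequality, this forces $z$ to lie on the segment $[x,y]$, and a general path need not cross the boundary there. My first attempt would be to choose the path so that it follows the segment for as long as the segment stays in $\mathcal{X}$, and to argue via the boundedness of $\mathcal{X}$ that a boundary point is reached before leaving it.

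I expect this is exactly where the claim is delicate. For a thin U-shaped $\mathcal{X}$ with $x,y$ on opposite arms at distance $\varepsilon$ and the label change far along the bend, the sum above equals $2\varepsilon$. So if the crossing-point argument cannot be completed, the honest conclusion of this plan is that $d_f$ is $1$-Lipschitz on same-label pairs and $2$-Lipschitz in general. It is $1$-Lipschitz whenever the segment $[x,y]$ lies in $\mathcal{X}$, for instance when $\mathcal{X}$ is convex. I would check which of these versions the downstream isoperimetry arguments actually require, since a constant $2$ only changes absolute constants there.
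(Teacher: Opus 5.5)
Your analysis is correct, and the step you flag as the main obstacle is an error in the paper rather than a gap in your plan. Your same-label case is the paper's Case 1. The paper's Case 2 is the segment argument you describe. It writes $L=\{(1-t)x+ty : t\in[0,1]\}\subset\mathcal{X}$ without justification, sets $t_1=\inf\{t : f((1-t)x+ty)=-1\}$ and $t_2=\sup\{t : f((1-t)x+ty)=1\}$, and bounds $d(x,f^{-1}(\{-1\}))+d(y,f^{-1}(\{1\}))\le t_1\|x-y\|_2+(1-t_2)\|x-y\|_2\le\|x-y\|_2$. Path-connectedness does not give the inclusion $L\subset\mathcal{X}$. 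Conversely, the inequality $t_1\le t_2$, which the paper credits to path-connectedness, holds automatically once $L\subset\mathcal{X}$. So the paper's proof is valid only for convex $\mathcal{X}$, or pair by pair whenever $[x,y]\subset\mathcal{X}$.

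Your U-shaped example is a genuine counterexample to the lemma as stated. Concretely, fix $0<\varepsilon<2$ and take $\mathcal{X}=([0,2+\varepsilon]\times[0,3])\setminus((1,1+\varepsilon)\times(1,3])$, with $f(u,v)=1$ if $u\le 1$ and $f(u,v)=-1$ otherwise. For $x=(1,3)$ and $y=(1+\varepsilon,3)$ one gets $d_f(x)=\varepsilon$ and $d_f(y)=-\varepsilon$, so $|d_f(x)-d_f(y)|=2\|x-y\|_2$. Yet $\mathcal{X}$ is compact and path-connected and $f^{-1}(\{1\})$ is closed, so even the extra hypotheses of part 2 of Theorem~\ref{thm:genbound} hold.

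Consequences:
\begin{itemize}
\item Your ``first attempt'' via boundedness cannot succeed, and the label change need not be far along the bend.
\item The true statements are the ones you list: $d_f$ is $1$-Lipschitz for convex $\mathcal{X}$, and $2$-Lipschitz in general, with $2$ sharp.
\end{itemize}

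On your downstream question: the unsigned margin $h_f=|d_f|$ is $1$-Lipschitz on every $\mathcal{X}$, since for opposite labels both $h_f(x)$ and $h_f(y)$ lie in $[0,\|x-y\|_2]$. It is $h_f$ to which isoperimetry is applied. $L(d_f)$ enters part 2 of Theorem~\ref{thm:genbound} only through the Lipschitz constant $L(d_f)/\gamma$ of $\operatorname{sgn}_\gamma\circ d_f$. Replacing $1$ by $2$ therefore only changes $K_2$, provided the concentration step uses $L(h_f)=1$ rather than $L(d_f)$.
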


This is a classical fact, a special case of the Eikonal equation. For completeness, we include a direct proof inspired by \citet[Prop.~7.5]{liu2024stableneuralnetworksexist}.  

\begin{proof}[Proof]
\textbf{Case 1: $f(x) = f(y)$.}  
Assume w.l.o.g.\ $f(x) = f(y) = 1$.  
Let $(z_n)_n$ be a sequence in $f^{-1}(\{-1\})$ with  
$|d(y,z_n) - d_f(y)| \leq \tfrac{1}{n}$. Then
\begin{align*}
    d_f(x) 
    &= d(x, f^{-1}(\{-1\})) \\
    &\leq d(x,z_n) \\
    &\leq \|x-y\|_2 + d(y,z_n) \\
    &\leq \|x-y\|_2 + d_f(y) + \tfrac{1}{n}.
\end{align*}
Letting $n \to \infty$ and exploiting symmetry yields $|d_f(x) - d_f(y)| \leq \|x-y\|_2$.  

\medskip
\textbf{Case 2: $f(x) \neq f(y)$.}  
Assume w.l.o.g.\ $f(x) = 1$, $f(y) = -1$.  
Consider the line segment $L = \{(1-t)x + ty : t \in [0,1]\} \subset \mathcal{X}$  
and define
\begin{align*}
   w_1 &= (1-t_1)x + t_1y, \quad 
   t_1 := \inf\{t : f((1-t)x + ty) = -1\}, \\
   w_2 &= (1-t_2)x + t_2y, \quad 
   t_2 := \sup\{t : f((1-t)x + ty) = 1\}.
\end{align*}
Path-connectedness ensures $t_1 \leq t_2$,  
otherwise the midpoint between $w_1$ and $w_2$ would be labeled both $1$ and $-1$, a contradiction.  

Thus,
\begin{align*}
    |d_f(x) - d_f(y)| 
    &= d(x, f^{-1}(\{-1\})) + d(y, f^{-1}(\{1\})) \\
    &\leq \|x - w_1\|_2 + \|y - w_2\|_2 \\
    &\leq \|x-y\|_2.
\end{align*}
\end{proof}

\begin{lemma}
\label{lem:SDFasrep}
Let $\mathcal{X} \subset \mathbb{R}^d$ and $f: \mathcal{X} \to \{-1,1\}$ with $f^{-1}(\{1\})$ closed.  
Then $f$ can be represented as
\[
    f(x) = \operatorname{sgn}(d_f(x)),
\]
where we adopt the convention $\operatorname{sgn}(0) = 1$.  
\end{lemma}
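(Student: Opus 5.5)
The plan is to check pointwise that $\operatorname{sgn}(d_f(x)) = f(x)$ for every $x \in \mathcal{X}$, splitting into the two cases of Definition~\ref{def:classstab} according to the value of $f(x)$. We use the convention $\operatorname{sgn}(0)=1$, so $\operatorname{sgn}(t)=1$ for $t\ge 0$ and $\operatorname{sgn}(t)=-1$ for $t<0$.

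\textbf{Case $f(x)=1$.} Here $d_f(x) = d(x, f^{-1}(\{-1\})) \geq 0$, because it is an infimum of norms. (If $f^{-1}(\{-1\})$ is empty, the infimum is $+\infty$, which is still nonnegative; alternatively one adopts any nonnegative convention.) Hence $\operatorname{sgn}(d_f(x)) = 1 = f(x)$. No closedness is used in this case; it is exactly the reason for the convention $\operatorname{sgn}(0)=1$.

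\textbf{Case $f(x)=-1$.} Here $d_f(x) = -d(x, f^{-1}(\{1\}))$, and we must show this is strictly negative, i.e.\ $d(x, f^{-1}(\{1\}))>0$. This is the only point where the hypothesis enters, and it is the main (though mild) obstacle. We have $x \notin f^{-1}(\{1\})$, and $f^{-1}(\{1\})$ is closed in $\mathcal{X}$. So its complement $f^{-1}(\{-1\})$ is relatively open in $\mathcal{X}$, and there is an $r>0$ with $B(x,r)\cap\mathcal{X} \subset f^{-1}(\{-1\})$. Every $z \in f^{-1}(\{1\})\subset \mathcal{X}$ then satisfies $\|x-z\|_2 \geq r$, so $d(x,f^{-1}(\{1\})) \geq r > 0$. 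Equivalently, we can use the standard fact that for a closed set $A$, $d(x,A)=0$ holds if and only if $x\in A$ (closure taken in $\mathcal{X}$, with $x\in\mathcal{X}$). Therefore $d_f(x) \le -r<0$ and $\operatorname{sgn}(d_f(x)) = -1 = f(x)$. If $f^{-1}(\{1\})$ is empty, the distance is $+\infty$ and the conclusion holds trivially.

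Combining the two cases gives $f = \operatorname{sgn}\circ d_f$ on all of $\mathcal{X}$. One remark is worth making: without closedness, a point $x$ with $f(x)=-1$ lying in the closure of $f^{-1}(\{1\})$ would have $d_f(x)=0$ and would be misclassified as $+1$. This shows the assumption is exactly what the argument needs.
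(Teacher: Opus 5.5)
Your proof is correct and is essentially the paper's argument. The paper splits on whether $d_f(x)=0$ rather than on the value of $f(x)$, but in both versions the only nontrivial point is that closedness of $f^{-1}(\{1\})$ rules out $d_f(x)=0$ when $f(x)=-1$; your explicit relative-openness step and handling of empty preimages fill in what the paper leaves terse.
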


\begin{proof}
If $d_f(x) \neq 0$, the claim follows directly from the definition of $d_f$.  
If $d_f(x) = 0$, then $x \in f^{-1}(\{1\})$ by closedness, so $f(x) = 1 = \operatorname{sgn}(0)$.  
\end{proof}

\begin{remark}
Lemma~\ref{lem:SDFasrep} justifies the representation $f = \operatorname{sgn} \circ d_f$ used in the proof of Theorem~\ref{thm:genbound}.  
This link between classifiers and their signed distance functions is what allows stability arguments to be combined with smoothness-based tools.  
\end{remark}

\section{\texorpdfstring{Proof of the Rademacher bound (\cref{thm:genbound})}{Proof of the Rademacher bound (Theorem 4)}}\label{sec:Appgenbound}
In the regression setting, it is natural to restrict to Lipschitz-continuous regressors, as many commonly used model classes (e.g., neural networks with bounded weights and Lipschitz activations) satisfy this property. This assumption enables quantitative statements about expected and achievable robustness. In contrast, this approach is not meaningful anymore in the classification setting as the considered classifiers are (except for trivial cases) discontinuous by design, i.e., they can not be captured by a finite Lipschitz constant. Thus, statements about the robustness of classification models can not be derived via Lipschitz constants. This motivates the use of class stability as a replacement measure in the classification setting, which, however, is (inversely) related to Lipschitzness as highlighted and exploited in the subsequent proof of \Cref{thm:genbound}. 
For convenience, we repeat the statement with the corresponding assumptions.
\begin{itemize}
    \item[(H1)] $(\mathcal{X}, \mu)$ is a probability space with bounded sample space $\mathcal{X}$ and c-isoperimetric measure $\mu$;
     \item[(H2)] the considered hypothesis class $\mathcal{F}$ of classifiers $f: \mathcal{X} \to \{-1,1\}$ is finite, that is $|\mathcal{F}| < \infty$.
\end{itemize}

\begin{ntheorem}[Rademacher Bound]
Suppose Assumptions (H1) and (H2) hold, and that  
$\min_{f \in \mathcal{F}} S(f) > S > 0$ with $\log |\mathcal{F}| \geq n$.  

\begin{enumerate}
   \item The empirical Rademacher complexity satisfies
    \begin{align}
        \mathcal{R}_{n,\mu}(\mathcal{F}) 
        \;\leq\; K_1 \max \left\{ \frac{1}{\sqrt{n}}, \; \frac{\sqrt{c}}{S} \cdot \frac{\log |\mathcal{F}|}{n \sqrt{d}} \right\}, 
    \end{align}
    for an absolute constant $K_1 > 0$.  
    
    \item If, in addition, $f^{-1}(\{1\})$ is closed and $\mathcal{X}$ path connected,  
    the bound sharpens to
    \begin{align}
        \mathcal{R}_{n,\mu}(\mathcal{F}) 
        \;\leq\; K_2 \max \left\{ \frac{1}{\sqrt{n}}, \; \frac{\sqrt{c}}{S} \sqrt{\frac{\log |\mathcal{F}|}{nd}}, \; 2 \exp\!\Big(-\frac{d S^2}{8c}\Big) \right\}, 
    \end{align}
    for another absolute constant $K_2 > 0$. 
\end{enumerate}
\end{ntheorem}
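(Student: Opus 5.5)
Both parts follow the strategy of \citet{bubeck2021a}: control, for each fixed $f\in\mathcal{F}$, the sub-Gaussian tail of $\frac1n\sum_i\sigma_i f(x_i)$ through a Lipschitz object built from $f$, and then take a union bound over the finite class (H2). The starting point is to split $f(x_i)=\mathbb{E}[f]+(f(x_i)-\mathbb{E}[f])$. The mean part contributes at most $|\frac1n\sum_i\sigma_i|\lesssim 1/\sqrt n$ in expectation, which is the first term of both maxima. It therefore suffices to bound $\mathbb{E}\sup_f|\frac1n\sum_i\sigma_i(f(x_i)-\mathbb{E}f)|$.

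\textbf{Part 2.} We first treat the sharper statement, since its mechanism is the cleanest. By Lemma~\ref{lem:SDFasrep} and Lemma~\ref{lem:SDFLipschitz}, $f=\operatorname{sgn}\circ d_f$ with $d_f$ $1$-Lipschitz and $\mathbb{E}|d_f|=S(f)>S$. Introduce the clipped surrogate $\phi_S(t):=\max(-1,\min(1,2t/S))$ and set $\tilde f:=\phi_S\circ d_f$, which is $(2/S)$-Lipschitz and bounded by $1$. Then
\[
\frac1n\sum_i\sigma_i f(x_i)=\frac1n\sum_i\sigma_i\tilde f(x_i)+\frac1n\sum_i\sigma_i\big(f-\tilde f\big)(x_i).
\]
For the first sum, isoperimetry (Definition~\ref{def:isop}) makes $\tilde f(x)-\mathbb{E}\tilde f$ sub-Gaussian with variance proxy $\asymp cL^2/d=4c/(dS^2)$. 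Hence each $\frac1n\sum_i\sigma_i(\tilde f(x_i)-\mathbb{E}\tilde f)$ is sub-Gaussian with proxy $\asymp c/(ndS^2)$. A maximal inequality over $|\mathcal{F}|$ functions then gives $\frac{\sqrt c}{S}\sqrt{\log|\mathcal{F}|/(nd)}$.

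The second sum is supported on $\{|d_f|<S/2\}$, so its contribution is controlled by $\mu(|d_f|\le S/2)$. This is the step I expect to be delicate. Concentration of $d_f$ about its mean does not by itself rule out mass near $0$ when $\mathbb{E}d_f$ is small, because $\mathbb{E}|d_f|$ is large while $\mathbb{E}d_f$ may vanish. My plan is to apply isoperimetry to the $1$-Lipschitz function $|d_f|$ instead. Since $\mathbb{E}|d_f|>S$, the event $|d_f|\le S/2$ forces $\big||d_f|-\mathbb{E}|d_f|\big|\ge S/2$, which has probability at most $2\exp(-dS^2/(8c))$. This produces exactly the third term.

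To make this uniform over $f$ without paying $\log|\mathcal{F}|$, I would bound the second sum trivially by $\frac2n\#\{i:|d_f(x_i)|<S/2\}$. Taking expectations, however, needs a supremum over $f$. My fallback is to absorb it as follows: for a single $f$, a Bernstein or Chernoff estimate gives that the empirical frequency of this event is at most $2\exp(-dS^2/8c)$ plus a deviation term. That deviation term is $\lesssim\sqrt{\log|\mathcal{F}|/n}\cdot\sqrt{p_f}$, and I would hope to dominate it by the middle term.

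\textbf{Part 1.} Without closedness or path-connectedness, $d_f$ need not be Lipschitz. The plan is instead to use the $1$-Lipschitz function $h_f$ restricted to each class region, or more robustly the McShane extension of $f$ restricted to the set $\{h_f\ge t\}$. I would pick the threshold $t$ in a $\log|\mathcal{F}|/n$-dependent way. This cruder choice trades one square root and leads to the weaker $\frac{\sqrt c}{S}\frac{\log|\mathcal{F}|}{n\sqrt d}$ term. Bounding the bad set uses only Markov-type control from $\mathbb{E}h_f>S$ rather than concentration, which explains why this bound is weaker than Part 2.
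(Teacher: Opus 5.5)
\textbf{There is a genuine gap in each part.}

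\textbf{Part 2.} Your skeleton matches the paper's: the clipped surrogate $\tilde f=\phi_S\circ d_f$, the isoperimetric maximal inequality for $\tilde f$, and isoperimetry applied to the $1$-Lipschitz function $|d_f|$ to get $\mu(|d_f|\le S/2)\le 2e^{-dS^2/(8c)}$. However, the step you flag as delicate, making the remainder uniform over $f$, is left open, and your fallback does not close it. A Bernstein bound union-bounded over $\mathcal{F}$ bounds the empirical frequency by $p_f+C\sqrt{p_f\log|\mathcal{F}|/n}+C\log|\mathcal{F}|/n$. You dropped the last (range) term, and under the standing assumption $\log|\mathcal{F}|\ge n$ it is of order one, so the estimate is vacuous.

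The missing observation is that you never need the indicator: $|f-\tilde f|=\max(0,1-2|d_f|/S)$ is itself a $(2/S)$-Lipschitz function. The paper bounds $\frac1n|\sum_i\sigma_i(f-\tilde f)(x_i)|\le\frac1n\sum_i|f-\tilde f|(x_i)$ and then subtracts and adds $\mathbb{E}|f-\tilde f|$. To the centered part it applies the same isoperimetric sub-Gaussian maximal inequality over $|\mathcal{F}|$ functions that you used for $\tilde f$, which produces exactly the middle term. The mean part is at most $\sup_f\mu(|d_f|\le S/2)$, which produces the third term. Staying with indicators is possible only through a multiplicative Chernoff bound $\mathbb{P}(\mathrm{Bin}(n,p_f)\ge k)\le(enp_f/k)^k$, exploiting $\log|\mathcal{F}|\ge n$ and $p_f\lesssim e^{-dS^2/(8c)}\lesssim c/(dS^2)$. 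That computation is the actual content of the step, and you did not carry it out.

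\textbf{Part 1.} As written, Part 1 is not an argument, and its stated mechanism is wrong. A lower bound $\mathbb{E}h_f>S$ gives, via Markov, no upper bound on $\mu(h_f<t)$. At best, boundedness yields $1-(S-t)/\operatorname{diam}(\mathcal{X})$, which is neither small nor dependent on $d$. The paper controls the bad set exactly as in Part 2, by isoperimetry for the $1$-Lipschitz margin $h_f$.

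The square root is lost elsewhere. With $F_f$ the McShane extension of $f$ restricted to $\{h_f>S(f)-S/2\}$, the difference $f-F_f$ is not Lipschitz, so the tent trick is unavailable. The paper therefore does the following:
\begin{enumerate}
\item partitions $\mathcal{X}^n$ according to which samples land in the bad set;
\item union-bounds over $f\in\mathcal{F}$ and the $2^n$ index patterns $I$;
\item combines Hoeffding on the bad indices with $\mu(\text{bad})^{n-|I|}$ and optimizes over $|I|$.
\end{enumerate}
This gives a sub-exponential tail with scale proportional to $\log(|\mathcal{F}|2^n)\sqrt{c}/(S\sqrt{d})$, hence the linear dependence on $\log|\mathcal{F}|$. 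You specify none of this.

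Your premise for Part 1 is also off, and fixing it would let you bypass the issue. If $f(x)\neq f(y)$ then $h_f(x),h_f(y)\le\|x-y\|$, so $d_f$ is always $2$-Lipschitz. Moreover, $|f-\phi_S\circ d_f|=\max(0,1-2h_f/S)$ holds without closedness. Hence the repaired Part 2 argument runs under (H1)--(H2) alone, with worse constants. Since $\log|\mathcal{F}|\ge n$ and $2e^{-dS^2/(8c)}\lesssim c/(dS^2)$, the resulting bound already implies the Part 1 bound.
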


\begin{proof}: 
\textbf{1. } 
To begin, we explore the relationship between two measures of robustness: the Lipschitz constant $L(f)$ and the class stability $S(f)$ of a $f \in \mathcal{F}$ on the set
\begin{equation*}
    A_t(f) := \{ x\in \mathcal{X} : h_f (x) > S(f) - t \} \quad \text{ for } 0 \leq t \leq S(f). 
\end{equation*}
Observe that for $x_1 \in A_t(f)$ and $x_2 \in \mathcal{X}$
\begin{equation*}
    \left | {f(x_1)-f(x_2)} \right | \leq 
    \begin{cases}
        0, &\text{ if } f(x_1) = f(x_2) \\
        2 \cdot\overbrace{\frac{\left \| {x_1-x_2} \right \| }{S(f)-t}}^{\geq 1}, &\text{ if } f(x_1) \neq f(x_2)
    \end{cases}   
    \quad \leq \frac{2}{S(f)-t}  \left \| {x_1-x_2} \right \|,
\end{equation*}
i.e., $f$ is $\tfrac{2}{S(f)-t}$-Lipschitz on $A_t(f)$ and, therefore, according to the assumption $S(f) > S$, any $f \in \mathcal{F}$ is at most $\tfrac{2}{S-t}$-Lipschitz on $A_t(f)$. Our strategy now is to apply the Rademacher bound based on Lipschitz functions of Bubeck \& Sellke in \cite{bubeck2021a} to the restriction $f_{|A_t(f)}$, and additionally exploit isoperimetry to control the measure of the complement $A_t(f)^c$. We rely on two key facts:
\begin{itemize}
    \item \textbf{Fact 1}: Every Lipschitz continuous function $g: A \to \mathbb{R}$, defined on a subset $A \subset \mathcal{X}$ of a metric space, can be extended to a function $G_g:\mathcal{X}\to \mathbb{R}$, preserving the same Lipschitz constant (\cite{McSHANE1934ExtensionOR}, \cite{Kirszbraun1934berDZ}).$\implies$\textit{This allows us to apply isoperimetry and thereby the result in \cite[Lemma 4.1]{bubeck2021a} to the $\frac{2}{S-t}$-Lipschitz extension $F_f$ of $f_{|A_t(f)}$ (by w.l.o.g. restricting its codomain to $[-1,1]$) to obtain}
    \begin{equation*}
        \frac{1}{n}\mathbb{E}^{\sigma_i,x_i} \left[ \underset{f\in \mathcal{F}}{\sup} \left| \sum_{i=1}^n \sigma_i F_f(x_i) \right| \right] \leq  C_1 \frac{1}{\sqrt{n}} + C_2 \frac{1}{S-t}\sqrt{\frac{c \hspace{0.1cm}\log|\mathcal{F}|}{nd}}   
    \end{equation*}
    for some absolute constants $C_1, C_2 > 0$.    
    \item \textbf{Fact 2}: The margin $h_f(x):\mathcal{X} \to \mathbb{R}$, is 1-Lipschitz continuous with respect to the $\ell_2$-norm (\citep[Prop. 7.5]{liu2024stableneuralnetworksexist}. $\implies$ \textit{This allows us to control $\mathbb{P}(A_t(f)^c)$ via isoperimetry:}
    \begin{equation}\label{eq:MeasureCompSet}
        \mathbb{P}(A_t(f)^c) = \mathbb{P}(\overbrace{S(f)}^{=\mathbb{E}[h_f]} - h_f(x) \geq t) \leq  \exp\left(-\frac{dt^2}{2 c L(h_f)^2}\right) = \exp\left(-\frac{dt^2}{2 c }\right) .  
    \end{equation}
\end{itemize}
Via Fact 1, we can bound the Rademacher complexity by
\begin{align}
     \mathcal{R}_{n,\mu}(\mathcal{F})& = \frac{1}{n}\mathbb{E}^{\sigma_i,x_i} \left[ \underset{f\in \mathcal{F}}{\sup} \left| \sum_{i=1}^n \sigma_i f(x_i) \right| \right] \nonumber \\ 
     & \leq  \frac{1}{n}\mathbb{E}^{\sigma_i,x_i} \left[ \underset{f\in \mathcal{F}}{\sup} \left| \sum_{i=1}^n \sigma_i F_f(x_i) \right| \right] + \frac{1}{n}\mathbb{E}^{\sigma_i,x_i} \left[ \underset{f\in\mathcal{F}}{\sup} \left| \sum_{i=1}^n \sigma_i (f-F_f)(x_i) \right| \right] \nonumber \\
     &\leq C_1 \frac{1}{\sqrt{n}} + C_2 \frac{1}{S-t}\sqrt{\frac{c \hspace{0.1cm}\log|\mathcal{F}|}{nd}}+ \frac{1}{n}\mathbb{E}^{\sigma_i,x_i} \left[ \underset{f\in \mathcal{F}}{\sup} \left| \sum_{i=1}^n \sigma_i (f-F_f)(x_i) \right| \right] \label{eq:RadBoundInt}.
\end{align}
To control the last term, we subdivide $\mathcal{X}^n$ into subsets on which specific samples achieve a minimum margin. To that end, we fix $t= \frac{S}{2}$ (the exact value is not crucial since it will be subsumed into the absolute constants) and define, for $I \subset [n]$,
\begin{equation*}
     A^{I}(f)= A^{I}_{\tfrac{S}{2}}(f) := \left\{ x \in \mathcal{X}^n : i \in I \iff h_f(x_i) \geq \frac{S}{2} \right\}.
\end{equation*}
Note, that $A^{[n]}(f)= A_{\frac{S}{2}}(f)^n$ and $\cup_{I \in \mathcal{P}([n])} A^I(f)$ is a disjoint partition of $\mathcal{X}^n$. Thus, applying a union bound yields for $r>0$
\begin{align}
        \mathbb{P}&\left(\underset{f\in \mathcal{F}}{\sup} \left| \sum_{i=1}^n \sigma_i (f-F_f)(x_i) \right| > r \right) \leq  \sum_{f \in \mathcal{F}}\sum_{I \in \mathcal{P}([n])}\mathbb{P}\left(\left| \sum_{i=1}^n \sigma_i (f-F_f)(x_i) \right| > r \ \wedge \  x\in A^I(f) \right)  \nonumber\\
        &= \sum_{f \in \mathcal{F}}\sum_{I \in \mathcal{P}([n])} \mathbb{P}\left(\left| \sum_{i=1}^n \sigma_i (f-F_f)(x_i) \right| > r \, \Bigg|\, x\in A^I(f) \right) \mathbb{P}(A^I(f)). \label{eq:Inter1}
\end{align}
We make the following observations:
\begin{itemize}
    \item By construction $F_f=f$ holds on $A^I(f)$ for all $f\in \mathcal{F}$.  
    \item As a mean-zero and bounded random variable with range $[-2,2]$, $\sigma_i(F_f-f)(x_i)$ is (via Hoeffding's inequality) subgaussian with variance proxy $\frac{(2-(-2))^2}{4}=4$ for every $i \in [n], f\in \mathcal{F}$.
\end{itemize}
Using the fact that the sum of $k$ independent subgaussian random variables with variance proxy $\sigma^2$ is itself subgaussian with variance proxy $k\sigma^2$ \citep{rigollet2023highdimensionalstatistics}, we obtain for every $I \subsetneq [n]$ (the case $I = [n]$ being trivial) that

\begin{align*}
    \mathbb{P}\left(\left| \sum_{i=1}^n \sigma_i (f-F_f)(x_i) \right| > r \, \Big|\, x\in A^I(f) \right) &\leq  \mathbb{P}\left(\left| \sum_{i \in I^c} \sigma_i (f-F_f)(x_i) \right| > r \, \Big|\, x\in A^I(f) \right) \\
    &\leq 2 \exp{\left(- \frac{r^2 }{2 \cdot 4(n-|I|)}\right)}.
\end{align*}
On the other hand, we get for $I \subset [n]$ via \Eqref{eq:MeasureCompSet} that
\begin{align*}
    \mathbb{P}\left(A^I(f)\right) \leq \mathbb{P}\left(\forall j \in I^c: \ x_j \in A_{\frac{S}{2}}(f)^c  \right) =  \mathbb{P}\left(x \in A_{\frac{S}{2}}(f)^c\right)^{n - |I|}  \leq \exp\left(-\frac{d S^2}{2^3 c }\right)^{n - |I|}.
\end{align*}
Inserting in \Eqref{eq:Inter1} and replacing the constants independent of the parameters of interest ($n, |\mathcal{F}|, d, r, S$, and $|I|$) by $c_1, c_2>0$  then gives
\begin{equation*}
    \mathbb{P}\left(\underset{f\in \mathcal{F}}{\sup} \left| \sum_{i=1}^n \sigma_i (f-F_f)(x_i) \right| > r \right) \leq \sum_{f \in \mathcal{F}}\sum_{I \in \mathcal{P}([n])\setminus[n]} 2 \exp{\left(- \frac{r^2 c_1 }{n-|I|}\right)} \exp\left(-\frac{(n - |I|) d S^2 c_2}{c }\right).
\end{equation*}
To simplify the above expression, we want to find the maximal term in the sum and use this worst case as an upper bound over all terms in the sum. To that end, we introduce $g: [0,n) \to \mathbb{R}_+$ by  
\begin{equation*}
    g(x) = \frac{r^2 c_1}{n-x} + \frac{1}{c}(n-x)S^2 d c_2, 
\end{equation*}
aiming to find its minima, which correspond to an upper bound on the sought worst-case term. Differentiating $g$ yields the extrema 
\begin{align}
    g^\prime(x)= \frac{r^2c_1}{(n-x)^2} - \frac{1}{c}S^2d c_2 &\overset{!}{=}0 \nonumber\\ 
    \implies x_{+/-}=n \pm \frac{r}{S}\sqrt{\frac{c_1 c}{c_2 d}} =: n \pm \alpha(r) \label{eq:alpha}
\end{align}
We calculate the second derivatives to be $g''(x_-) > 0$ and $g''(x_+) < 0$, thus only $x_-$ is a minimum. Now, there are two cases associated with the location of $x_-$ (taking into account that $\alpha(r)> 0$ for every $r>0$).
\begin{itemize}
    \item \textbf{Case I}: $\alpha(r) \leq n$. \newline
    Then, $x_-$ is a valid minimum in the considered range and therefore 
    \begin{align*}
                \mathbb{P}&\left(\underset{f\in \mathcal{F}}{\sup} \left| \sum_{i=1}^n \sigma_i (f-F_f)(x_i) \right| > r\right)\\
                &\qquad\qquad\leq \sum_{f \in \mathcal{F}}\sum_{I \in \mathcal{P}([n])\setminus[n]} 2 \exp{\left(- \frac{r^2 c_1 }{\alpha(r)}\right)} \exp\left(-\frac{\alpha(r) d S^2 c_2}{c }\right) \\
                &\qquad\qquad \leq 2 |\mathcal{F}| 2^n \exp\left(-2r S \sqrt{\frac{d c_2 c_1}{c}} \right) :=\mathbb{P}_{(I)}(r).
    \end{align*}
    \item  \textbf{Case II}: $\alpha(r) > n $. \newline 
    Then, $x_-<0$ is outside of the domain of $g$. However, the derivative satisfies $g'(x)>0$ for any $ 0\leq x<n$ since $x_+> n$. Therefore, $g$ necessarily takes its minimal value at $x=0$ so that 
    \begin{align*}
       \mathbb{P}&\left(\underset{f\in \mathcal{F}}{\sup} \left| \sum_{i=1}^n \sigma_i (f-F_f)(x_i) \right| > r \right)\\
       &\qquad\qquad\leq \sum_{f \in \mathcal{F}}\sum_{I \in \mathcal{P}([n])\setminus[n]} 2 \exp{\left(- \frac{r^2 c_1 }{n}\right)} \exp\left(-\frac{n d S^2 c_2}{c }\right)  \\
        &\qquad\qquad\leq 2 |\mathcal{F}| 2^n \exp{\left(- \frac{r^2 c_1 }{n}\right)} \exp\left(-\frac{n d S^2 c_2}{c }\right)=: \mathbb{P}_{(II)}(r).
    \end{align*}
\end{itemize}
Using \Eqref{eq:alpha}, the condition $\alpha(r) > n$ is equivalent to $r> nS \sqrt{\frac{c_2 d}{c_1 c}}$. In this range, we have $\mathbb{P}_{(II)}(r) \leq \mathbb{P}_{(I)}(r)$ since 
\begin{align*}
    \mathbb{P}_{(II)}\left(nS \sqrt{\frac{c_2d}{c_1 c}}\right) = 2 |\mathcal{F}| 2^n \exp\left(- 2 nS^2d c^{-1} c_2\right) = \mathbb{P}_{(I)}\left(nS \sqrt{\frac{c_2d}{c_1 c}}\right)   
\end{align*}
and one verifies that $\mathbb{P}_{(II)}(r)$ decays faster than $\mathbb{P}_{(I)}(r)$ when further increasing $r$. Therefore, we conclude that for all $r > 0$
\begin{equation}\label{eq:subExp1}
    \mathbb{P}\left(\underset{f\in \mathcal{F}}{\sup} \left| \sum_{i=1}^n \sigma_i (f-F_f)(x_i) \right| > r \right) \leq \mathbb{P}_{(I)}(r) = 2 |\mathcal{F}| 2^n \exp\left(-2r S \sqrt{\frac{d c_2 c_1}{c}} \right) .
\end{equation}
Further rewriting the expression, distinguishing between two cases with respect to the magnitude of $|\mathcal{F}| 2^n$ yields the upper bounds:
\begin{itemize}
    \item \textbf{Case 1}: $|\mathcal{F}| 2^n \leq \exp\left(rS \sqrt{\frac{d c_2 c_1}{c}}\right)$. \newline
    We immediately obtain via \Eqref{eq:subExp1} that
    \begin{align*}
        \mathbb{P}\left(\underset{f\in \mathcal{F}}{\sup} \left| \sum_{i=1}^n \sigma_i (f-F_f)(x_i) \right| > r \right) &\leq 2 |\mathcal{F}| 2^n \exp\left(-2r S \sqrt{\frac{d c_2 c_1}{c}} \right) \\
        &\leq  2\exp\left(-r S \sqrt{\frac{d c_2 c_1}{c}} \right) \\
        &\leq  2\exp\left(-\underbrace{\frac{2}{3 \log(|\mathcal{F}|2^n)}}_{<1} r S \sqrt{\frac{d c_2 c_1}{c}} \right).    
    \end{align*}
    \item \textbf{Case 2}:  $|\mathcal{F}| 2^n > \exp\Big( rS \sqrt{\frac{d c_2 c_1}{c}}\Big) $. \newline 
    In this case, the probability is trivially bounded by 
    \begin{equation*}
        \mathbb{P}\left(\underset{f\in \mathcal{F}}{\sup} \left| \sum_{i=1}^n \sigma_i (f-F_f)(x_i) \right| > r \right) \leq 1 < 2 \exp\left(-\frac{2}{3}\right) <
        2 \exp \left( - \frac{2}{3} \underbrace{\frac{rS \sqrt{\frac{d c_2 c_1}{c}}}{ \log(|\mathcal{F}|2^n)}}_{<1}\right)   
    \end{equation*}
\end{itemize}
Putting both cases together, we proved that for all $r>0$ 
\begin{equation*}
     \mathbb{P}\left(\underset{f\in \mathcal{F}}{\sup} \left| \sum_{i=1}^n \sigma_i (f-F_f)(x_i) \right| > r \right) \leq   2 \exp \left( - \frac{2S \sqrt{\frac{d c_2 c_1}{c}}}{3 \log(|\mathcal{F}|2^n)} r \right).  
\end{equation*}
This tail bound shows that $\sup_{f\in \mathcal{F}} \left| \sum_{i=1}^n \sigma_i (f-F_f)(x_i) \right|$ is sub-exponential. Since the expected value of any sub-exponential random variable is up to an absolute constant given by its sub-exponential norm, which corresponds (up to a constant) to the parameter $\frac{3 \log(|\mathcal{F}|2^n)}{2S \sqrt{\frac{d c_2 c_1}{c}}}$ in the tail bound \cite{Vershynin_2018}, we obtain for a constant $C_3>0$ that 
\begin{equation*}
     \frac{1}{n}\mathbb{E}^{\sigma_i,x_i} \left[ \underset{f\in \mathcal{F}}{\sup} \left| \sum_{i=1}^n \sigma_i (f-F_f)(x_i) \right| \right] \leq C_3 \frac{1}{S} \left(\frac{\log|\mathcal{F}| + n\log{2}}{n\sqrt{\frac{d}{c}}}\right) 
\end{equation*}
Finally, the desired bound on the Rademacher complexity follows via \Eqref{eq:RadBoundInt}:
 \begin{align*}
     \mathcal{R}_{n,\mu}(\mathcal{F})& = \frac{1}{n}\mathbb{E}^{\sigma_i,x_i} \left[ \underset{f\in \mathcal{F}}{\sup} \left| \sum_{i=1}^n \sigma_i f(x_i) \right| \right]  \\ 
     &\leq C_1 \frac{1}{\sqrt{n}} + C_2\frac{1}{S}\sqrt{\frac{c \hspace{0.1cm}\log|\mathcal{F}|}{nd}}+  C_3 \frac{1}{S} \frac{\sqrt{c} \log|\mathcal{F}|}{n\sqrt{d}} + C_3 \frac{1}{S} {\sqrt{\frac{c}{d}}},
 \end{align*}
which, with the additional assumption $\log|\mathcal{F}| \geq n$, gives the result in 1. 
\newline 

\textbf{2.}  
By Lemma~\ref{lem:SDFasrep}, every $f$ admits the representation $f = \operatorname{sgn} \circ d_f$.  
This lets us follow the infinite-class analysis (presented in detail in the proof of Theorem~\ref{thm:simplebutbetter}), without the $\varepsilon$-net step in \Eqref{eq:eps_equ}. From Lemma~\ref{lem:SDFLipschitz}, $d_f$ is $1$-Lipschitz, i.e., $L(d_f)=1$ under the given conditions. Furthermore, recalling the co-stability definition we get
\[
    S^*(d_f) = \mathbb{E}[|d_f|] = \mathbb{E}[h_f] = S(f).
\]
Plugging this into the general bound in \Eqref{eq:rademacherinfinite} gives the result.
\end{proof}

\subsection{Comparison to standard bound without accounting for stability}
Note that the crucial expectation in the derivation, i.e., the last term in \Eqref{eq:RadBoundInt}, can be treated without linking it to the minimum class stability. Indeed, the expectation of the maximum of $N$ subgaussians $X_1, \dots, X_N$ with variance proxy $\sigma^2$ scales as 
\begin{equation}\label{eq:MaxSG}
    \mathbb{E} \left[ \max_{1\leq i \leq N} \left| X_i \right| \right] \leq \sigma \sqrt{2 \log{(2N)}},
\end{equation}
see for instance \cite{rigollet2023highdimensionalstatistics}. 
Hence, in our case, as $\sigma_i (f-F_f)(x_i)$ is subgaussian with variance proxy $4$ and therefore $\sum_{i=1}^n \sigma_i (f-F_f)(x_i)$ is subgaussian with variance proxy $4n$, we obtain
\begin{equation*}
    \frac{1}{n} \mathbb{E}^{\sigma_i,x_i} \left[ \underset{f\in \mathcal{F}}{\sup} \left| \sum_{i=1}^n \sigma_i (f-F_f)(x_i) \right| \right] \leq \frac{1}{n} 2\sqrt{n} \sqrt{2 \log{(2|\mathcal{F}|)}} \leq C_4 \left(\sqrt{\frac{1}{n}} + \sqrt{\frac{\log{|\mathcal{F}|}}{n}}\right).
\end{equation*}
for some absolute constant $C_4>0$. Neglecting the constants, this leads to the following comparison to our bound in \Eqref{eq:improvedRademacher}:
\begin{equation*}
    \frac{\sqrt{c}}{S} \sqrt{\frac{p}{nd}} \leq  \sqrt{\frac{\log{|\mathcal{F}|}}{n}} \quad     \iff \quad S \geq \sqrt{\frac{c}{d}}.
\end{equation*}
Thus, under the isoperimetry condition, our bound improves on the standard Rademacher complexity estimate whenever the class stability $S$ exceeds $\sqrt{c/d}$, a mild requirement in high-dimensional settings.

\section{Proof of the Law of Robustness (Corollary \ref{cor:law_of_rob})}
\label{sec:lawofrobproof}

Next, we provide the proof of Corollary \ref{cor:law_of_rob}, which we repeat for convenience. 
\begin{ncorollary}[Law of Robustness for Discontinuous Functions]

Assume (H1), (H2), and the additional conditions in 2. of Theorem \ref{thm:genbound} hold. Let $p := \log|\mathcal{F}| \geq n$.  
Fix $\varepsilon, \delta \in (0,1)$ and consider the $0$-$1$ loss $\ell_{\text{0--1}}$.  
There exists an absolute constant $K > 0$ such that, if
\begin{enumerate}
    \item the minimal risk $R^* := \min_{f \in \mathcal{F}} R_{\text{0--1}}(f)$ satisfies $R^* \geq \varepsilon$, and
    \item the sample size $n$ is large enough to ensure  
    (i) $\tfrac{K}{\sqrt{n}} < \tfrac{\varepsilon}{3}$ and  
    (ii) $\sqrt{\tfrac{2 \log(2/\delta)}{n}} < \tfrac{\varepsilon}{2}$,
\end{enumerate}
then with probability at least $1-\delta$ (over the sample), the following holds uniformly for all $f \in \mathcal{F}$:
\begin{equation*}
    \hat{R}_{\text{0--1}}(f) \leq R^* - \varepsilon 
    \;\;\implies\;\; 
    S(f) 
    <  \max \left\{ 
        \frac{3K}{\varepsilon} \sqrt{\frac{c \log|\mathcal{F}|}{nd}}, \; 
        \sqrt{\frac{8c}{d} \log\!\left(\frac{6K}{\varepsilon}\right)} 
    \right\}.
\end{equation*}
\end{ncorollary}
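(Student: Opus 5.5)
Set $S_*$ equal to the right-hand side of \Eqref{eq:LoR_main} and consider the high-stability subclass $\mathcal{F}_{S_*} := \{ f \in \mathcal{F} : S(f) \geq S_* \}$. If $\mathcal{F}_{S_*}$ is empty there is nothing to prove. Otherwise it is a finite class satisfying (H1), (H2) and the closedness/path-connectedness conditions, with $\log|\mathcal{F}_{S_*}| \leq \log|\mathcal{F}|$. If $\log|\mathcal{F}_{S_*}| < n$, we apply the bound with $\log|\mathcal{F}|$ in place, using monotonicity of the Rademacher complexity under enlarging the class only formally. More carefully, we can replace the max over $\mathcal{F}_{S_*}$ by an upper bound involving $\log|\mathcal{F}|$, since the proof of Theorem~\ref{thm:genbound} only uses the union bound over the class. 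To get a strict minimal-stability assumption, we apply part 2 of Theorem~\ref{thm:genbound} with any $S$ slightly below $S_*$ and pass to the limit.

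This gives
\[
\mathcal{R}_{n,\mu}(\mathcal{F}_{S_*}) \leq K_2 \max\Big\{ \tfrac{1}{\sqrt n}, \tfrac{\sqrt c}{S_*}\sqrt{\tfrac{\log|\mathcal F|}{nd}}, 2e^{-dS_*^2/(8c)} \Big\}.
\]
Take $K := K_2$, or a fixed multiple of it that absorbs the factor $\tfrac12$ from \Eqref{eq:RademacherComp} and the factor $2$ in \Eqref{eq:gen}. By the choice of $S_*$, the second term is at most $\varepsilon/(3K)\cdot K$, i.e. contributes at most $\varepsilon/3$ after constants. The third term satisfies $2e^{-dS_*^2/(8c)} \leq 2\cdot \varepsilon/(6K)$, since $S_* \geq \sqrt{(8c/d)\log(6K/\varepsilon)}$, so it also contributes at most $\varepsilon/3$. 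The first term contributes less than $\varepsilon/3$ by assumption (i). Altogether,
\[
2\mathcal{R}_{n,\mu}(\ell_{0\text{-}1}\circ\mathcal{F}_{S_*}) \leq \mathcal{R}_{n,\mu}(\mathcal{F}_{S_*}) < \tfrac{\varepsilon}{2}
\]
once the constants are matched. I expect the main obstacle to be this bookkeeping: aligning the factors $3K$ and $6K$ with $K_2$ and the contraction constant, so that the sum is strictly below $\varepsilon/2$ rather than $\varepsilon$. If needed, enlarge $K$ to achieve this.

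Next, apply the uniform deviation bound \Eqref{eq:gen} to $\mathcal{F}_{S_*}$ with $a = 1$, but in the reverse direction: the population risk minus the empirical risk gives an upper bound on $\hat R$, whereas we need a lower bound on $\hat R$. The symmetric version of \Eqref{eq:gen}, $\sup_f (\hat R - R)$, holds with the same bound by the standard symmetrization argument. With probability at least $1-\delta$, every $f \in \mathcal{F}_{S_*}$ then satisfies
\[
\hat R_{0\text{-}1}(f) \geq R_{0\text{-}1}(f) - \tfrac{\varepsilon}{2} - \sqrt{\tfrac{2\log(2/\delta)}{n}} > R^* - \varepsilon,
\]
using $R(f) \geq R^*$ and assumption (ii).

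Contrapositively, on this event any $f$ with $\hat R_{0\text{-}1}(f) \leq R^* - \varepsilon$ is not in $\mathcal{F}_{S_*}$, i.e. $S(f) < S_*$, which is \Eqref{eq:LoR_main}. Since $\mathcal{F}_{S_*}$ is deterministic (defined via the population quantity $S$), the high-probability event does not depend on the sample, so uniformity over $f \in \mathcal{F}$ is immediate.
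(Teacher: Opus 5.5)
This is essentially the paper's proof. You define $S_*$ as the right-hand side, apply part~2 of Theorem~\ref{thm:genbound} to the subclass $\mathcal{F}_{S_*}=\{f\in\mathcal{F}: S(f)\ge S_*\}$, combine with \Eqref{eq:gen} and \Eqref{eq:RademacherComp}, and take the contrapositive. You are more careful than the paper on two points: the strict hypothesis $\min_f S(f)>S$ (let $S\uparrow S_*$), and the requirement $\log|\mathcal{F}_{S_*}|\ge n$. The paper simply applies the theorem to $\mathcal{F}_{S_*}$ with $\log|\mathcal{F}|$ in the bound. Of your two justifications for the latter, only the second is valid: the proof uses the cardinality only through union and maximal bounds, which are monotone. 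Literally enlarging $\mathcal{F}_{S_*}$ to $\mathcal{F}$ would destroy the stability hypothesis.

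Several remarks should be corrected, although none changes the conclusion.

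\emph{Direction of the deviation bound.} Your ``reverse direction'' comment is backwards. \Eqref{eq:gen} controls $\sup_f\big(R(f)-\hat R(f)\big)$, and this gives $\hat R_{\text{0--1}}(f)\ge R_{\text{0--1}}(f)-2\mathcal{R}_{n,\mu}(\ell_{\text{0\,-1}}\circ\mathcal{F}_{S_*})-\sqrt{2\log(2/\delta)/n}$. That is exactly the lower bound on $\hat R$ in your display. The symmetrized bound on $\sup_f\big(\hat R(f)-R(f)\big)$ gives an upper bound on $\hat R$ and would not produce your display. Just use \Eqref{eq:gen} as stated, as the paper does.

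\emph{Bookkeeping.} The obstacle you anticipate does not exist. The right-hand side of \Eqref{eq:improvedRademacher} is a maximum, not a sum. With $K=K_2$, $K$ times each of the three terms is at most $\varepsilon/3$, so $\mathcal{R}_{n,\mu}(\mathcal{F}_{S_*})\le\varepsilon/3$. The factors $2$ and $\tfrac12$ cancel, so the deviation is at most $\varepsilon/3+\sqrt{2\log(2/\delta)/n}<\varepsilon/3+\varepsilon/2<\varepsilon$, which is exactly the paper's count. Enlarging $K$ is harmless; it is only ever needed to make $\log(6K/\varepsilon)$ positive.

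\emph{What is sample-independent.} It is the class $\mathcal{F}_{S_*}$, not the high-probability event, that does not depend on the sample. That is what permits applying \Eqref{eq:gen} to it.
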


\begin{proof}
    Let $K>0$ be an absolute constant such that \Eqref{eq:improvedRademacher} holds, and define the threshold stability 
    \begin{equation*}
         S_* = S_*(p,n,d,\varepsilon) := \max \left\{ 
        \frac{3K}{\varepsilon} \sqrt{\frac{c \log|\mathcal{F}|}{nd}}, \; 
        \sqrt{\frac{8c}{d} \log\!\left(\frac{6K}{\varepsilon}\right)} 
    \right\}.\ 
    \end{equation*}
    Then, \cref{thm:genbound}, together with condition $2(i)$, implies that
    \begin{equation*}\label{eq:genhope}
        \mathcal{R}_{n,\mu}(\mathcal{F}_{S_*}) \leq  K \max \left\{ \frac{1}{\sqrt{n}}, \; \frac{\sqrt{c}}{S_*} \sqrt{\frac{\log |\mathcal{F}|}{nd}}, \; 2 \exp\!\Big(-\frac{d S_*^2}{8c}\Big) \right\} \leq  \varepsilon/3,
    \end{equation*}
    where $\mathcal{F}_{S_*}:= \{f \in \mathcal{F}: S(f) \geq S_*\}$ is the subset of functions in $\mathcal{F}$ with stability at least $S_*$. Hence, applying the generalization inequality \Eqref{eq:gen}, together with condition $2(ii)$, gives with probability $1-\delta$:
    \begin{equation*}
    \label{eq:genrob}
        \underset{f\in \mathcal{F}_{S_*}}{\sup} \big( R_{\text{0\,-1}}(f) - \hat{R}_{\text{0\,-1}}(f)\big) \leq 2 \mathcal{R}_{n,\mu}(\ell_{\text{0\,-1}} \circ \mathcal{F}_{S_*}) + \sqrt{\frac{2\log(2/\delta)}{n}} 
        \leq \mathcal{R}_{n,\mu}( \mathcal{F}_{S_*}) + \frac{\varepsilon}{2} < \varepsilon, 
    \end{equation*}
    where we additionally used \Eqref{eq:RademacherComp} in the second step. In particular, we can bound the probability 
    \begin{equation*}
        \mathbb{P}(\forall f\in \mathcal{F}_{S_*}:\hat{R}_{\text{0\,-1}}(f) > R^* - \varepsilon) \geq \mathbb{P}(\forall f\in \mathcal{F}_{S_*}:R_{\text{0\,-1}}(f)- \hat{R}_{\text{0\,-1}}(f) < \varepsilon) \geq 1- \delta, 
    \end{equation*}
    where the first inequality follows from
    \begin{equation*}
        R_{\text{0\,-1}}(f)- \hat{R}_{\text{0\,-1}}(f) < \varepsilon \overset{\text{condition } 1.}{\implies} R^* - \hat{R}_{\text{0\,-1}}(f) < \varepsilon \implies  \hat{R}_{\text{0\,-1}}(f) > R^* - \varepsilon.
    \end{equation*}
    Decomposing this probability into two disjoint events 
    \begin{align}\label{eq:prob}
        1 - \delta \leq\mathbb{P}(\forall f\in \mathcal{F}_{S_*}:\hat{R}_{\text{0\,-1}}(f) > R^* - \varepsilon) &=  \mathbb{P}(\forall f\in \mathcal{F}:\hat{R}_{\text{0\,-1}}(f) > R^* - \varepsilon) \nonumber\\
        &\qquad+  \mathbb{P}(\exists f\in \mathcal{F}_{S_*}^c:\hat{R}_{\text{0\,-1}}(f) \leq R^* - \varepsilon),
    \end{align}
    enables us to easily recognize that the expression exactly characterizes the probability that the following implication, and thereby the result, holds uniformly for all $f \in \mathcal{F}$:
    \begin{equation*}
        \hat{R}_{\text{0\,-1}}(f) \leq R^* - \varepsilon \implies S(f) < S_*. 
    \end{equation*}
    Indeed, the implication above holds if, for a given data sample $(x_i,y_i)_{i=1}^n$, either
    \begin{itemize}
        \item no function $f\in \mathcal{F}$ satisfies $\hat{R}_{\text{0\,-1}}(f) \leq R^* - \varepsilon$, or 
        \item any such $f$ lies in $\mathcal{F}_{S_*}^c$, that is, $S(f) < S_*$,
    \end{itemize}  
    which is the case with probability at least $1-\delta$ due to \Eqref{eq:prob}.
\end{proof}

\section{Proof of Rademacher Bound for infinite function classes (Theorem \ref{thm:simplebutbetter})}
\label{app:inftygen}

Here we show how to extend the result for finite function classes to infinite function classes by a covering argument, where the Lipschitz continuity of the parameterization turns out to be crucial. Please find the exact statement about the Rademacher complexity of infinite function classes (of a certain form) below, after restating our new regularity hypothesis replacing (H2). 
\begin{itemize}
    \item[(H3)] The hypothesis class $\mathcal{F}$ is of the form $\mathcal{F} = \operatorname{sgn} \circ\,  \mathcal{G}$, where $\mathcal{G} = \{ g_w : \mathcal{X} \to [-1,1] : w \in \mathcal{W} \}$  
    is a parameterized class of Lipschitz continuous functions.  
    The parameter space $\mathcal{W} \subset \mathbb{R}^p$ is bounded with $\operatorname{diam}(\mathcal{W}) \leq W$,  
    and the parameterization is Lipschitz continuous, i.e.,
    \begin{equation*}
        \| g_{w_1} - g_{w_2} \|_{\infty} \;\leq\; J \, \| w_1 - w_2 \| .
    \end{equation*}
\end{itemize}

\begin{ntheorem} 
Under assumptions (H1) and (H3), suppose that $S^*(g) > S^* > 0$ and $L(g) \leq L$ for all $g \in \mathcal{G}$. 
Furthermore, assume that $p \geq n$. Then, for any covering precision $\Tilde{\varepsilon} > 0$,  
\[
    \mathcal{R}_{n,\mu}(\mathcal{F}) 
    \leq K \max \left\{ 
        \sqrt{\tfrac{1}{n}}, \,
        \tfrac{L}{S^*} \sqrt{\tfrac{p}{nd}} \sqrt{c \log \!\left(1 + 60WJ{\Tilde{\varepsilon}}^{-1}\right)}, \,
        2 \exp\!\left(-\tfrac{d S^*{^2}}{8cL^2}\right), \,
        \tfrac{J}{S^*} {\Tilde{\varepsilon}} 
    \right\},
\]
where $K > 0$ is an absolute constant independent of $p, n, d, S^*, c, L, J,\Tilde{\varepsilon}, W$. 
\end{ntheorem}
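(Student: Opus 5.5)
The plan is to discretize the parameter space and reduce to the finite-class argument of Theorem~\ref{thm:genbound}, part 2, applied to the score functions $g_w$ rather than to signed distance functions. Let $\mathcal{N}\subset\mathcal{W}$ be an $\eta$-net of $\mathcal{W}$ in the Euclidean norm, with $\eta := \Tilde{\varepsilon}/J$. A volumetric argument gives
\[
|\mathcal{N}| \le \big(1+ 2W/\eta\big)^p \le \big(1+60WJ\Tilde{\varepsilon}^{-1}\big)^p ,
\]
so $\log|\mathcal{N}| \le p\log(1+60WJ\Tilde{\varepsilon}^{-1})$. By (H3), every $w$ has a net point $w'$ with $\|g_w-g_{w'}\|_\infty\le \Tilde{\varepsilon}$. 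I then split
\[
\sigma_i f_w(x_i) = \sigma_i \operatorname{sgn}(g_{w'}(x_i)) + \sigma_i\big(\operatorname{sgn}(g_w(x_i))-\operatorname{sgn}(g_{w'}(x_i))\big).
\]
The first term is a Rademacher sum over the finite class $\operatorname{sgn}\circ\mathcal{G}_{\mathcal{N}}$. The second is the discretization error.

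\textbf{Finite part.} For each net point I use the representation $f=\operatorname{sgn}\circ g$, with $g$ being $L$-Lipschitz and satisfying $\mathbb{E}|g|>S^*$. I mimic the proof of part 1 of Theorem~\ref{thm:genbound} with the good set $A(g):=\{x: |g(x)| \ge S^*/2\}$.
\begin{itemize}
\item On $A(g)$ the classifier $\operatorname{sgn}\circ g$ agrees with the clipped score $\tfrac{2}{S^*}g$ truncated to $[-1,1]$. That surrogate is $\tfrac{2L}{S^*}$-Lipschitz on all of $\mathcal{X}$, so no McShane extension is needed.
\item By \cite[Lemma 4.1]{bubeck2021a}, the surrogate contributes $C_1/\sqrt n + C_2\tfrac{L}{S^*}\sqrt{c\log|\mathcal{N}|/(nd)}$.
\item The function $|g|$ is $L$-Lipschitz with mean above $S^*$, so isoperimetry gives $\mathbb{P}(A(g)^c)\le 2\exp(-dS^{*2}/(8cL^2))$.
\end{itemize}
The residual $\sum_i\sigma_i(f-F)(x_i)$ is supported on the indices with $x_i\in A(g)^c$ and is bounded by $2$ per index. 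The improvement over part 1 has to come from not union-bounding over all $2^n$ index patterns as before. Instead, I would bound
\[
\frac1n\,\mathbb{E}\,\sup_g \Big|\sum_i \sigma_i (f-F)(x_i)\Big| \le \frac{2}{n}\,\mathbb{E}\,\sup_g \#\{i: x_i\in A(g)^c\}.
\]
For a fixed $g$ this count has mean at most $2n e^{-dS^{*2}/(8cL^2)}$. Controlling the supremum via a Chernoff bound for binomials plus a union bound over $\mathcal{N}$ gives a term of order $e^{-dS^{*2}/(8cL^2)} + \log|\mathcal{N}|/(n\cdot dS^{*2}/(cL^2))$. The second piece is dominated by the square-root term whenever that term is at most $1$, and the bound is trivial otherwise; here $p\ge n$ is used.

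\textbf{Discretization part (the main obstacle).} The sign flips between $g_w$ and $g_{w'}$ occur only where $|g_{w'}(x)|\le\Tilde{\varepsilon}$. Co-stability alone bounds only $\mathbb{E}|g|$, not the mass near zero, so a worst-case bound is unavailable and a soft version is needed. I would replace $\operatorname{sgn}$ by the ramp $\phi_{\Tilde{\varepsilon}}$ with slope $1/\Tilde{\varepsilon}$ on the finite side, or equivalently use the surrogate at scale $S^*$. The surrogates $\tfrac{2}{S^*}g_w$ and $\tfrac{2}{S^*}g_{w'}$ differ uniformly by at most $2\Tilde{\varepsilon}/S^*$, which yields the $\tfrac{J}{S^*}\Tilde{\varepsilon}$-type term after rescaling by $J$ in $\eta$. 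The flips of the true classifier are then absorbed into the bad-set term, which is already controlled uniformly over the net.

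Collecting the four contributions and taking the maximum up to an absolute constant $K$ gives \eqref{eq:rademacherinfinite}. The delicate point I expect is ensuring that the bad-set events for $g_w$ are dominated by those of $g_{w'}$ at threshold $S^*/2-\Tilde{\varepsilon}$. This holds after assuming $\Tilde{\varepsilon}\le S^*/4$; otherwise the last term exceeds a constant and the bound is trivial.
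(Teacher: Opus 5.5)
Your argument is correct in substance, but it is organized differently from the paper's.

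\textbf{How the paper proceeds.} The paper never discretizes the classifiers themselves. It applies the ramp surrogate $F_{f_w}=\operatorname{sgn}_\gamma\circ g_w$ to the whole class. It then uses that the residual $|f_w-F_{f_w}|=(1-|g_w|/\gamma)_+$ is a tent function, Lipschitz in $x$ (constant $L/\gamma$) and in $w$ (constant $J/\gamma$). After dropping the Rademacher signs, it does three things:
\begin{itemize}
\item it covers this residual in parameter space, which is where the additive $\frac{J}{\gamma}\tilde\varepsilon$ comes from;
\item it concentrates the residual's empirical mean uniformly over the net by isoperimetry and the subgaussian maximal inequality, giving a second copy of the square-root term;
\item it bounds the residual's mean by $\mathbb{P}(|g|\le\gamma)\le 2\exp(-dS^{*2}/(8cL^2))$.
\end{itemize}

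\textbf{How you proceed.} You go to the net first and use that $\operatorname{sgn} g_w\neq\operatorname{sgn} g_{w'}$ forces $|g_{w'}|\le\|g_w-g_{w'}\|_\infty$. So both the surrogate residual and all discretization flips lie in the low-score set of the net point. Everything then reduces to $\frac1n\mathbb{E}\max_{w'\in\mathcal N}\#\{i:|g_{w'}(x_i)|<\theta S^*\}$, which you control by a binomial tail and a union bound.

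\textbf{What each route buys.} Your route gives three advantages:
\begin{itemize}
\item the surrogate is only needed on a finite class, so no covering argument is required for it;
\item the residual contributes only $e^{-a}$ plus the square of the middle term;
\item the function-space covering scale only has to be a fixed fraction of $S^*$, so there is no genuine additive discretization error; for larger scales a last term of order $\tilde\varepsilon/S^*$ already makes the bound trivial.
\end{itemize}
In particular, the ramp comparison in your discretization paragraph is superfluous: the containment alone handles the flips.

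The paper's purely Lipschitz route gets the exact exponent $a:=dS^{*2}/(8cL^2)$ at no cost. A union-bounded binomial tail at threshold $S^*/2$ naturally loses a constant factor in that exponent, and this is not absorbed by $K$. The fix is to lower the threshold to $\theta=1-2^{-1/2}$, so each point is bad with probability at most $2e^{-2a}$, and to require $\tilde\varepsilon<\theta S^*$. Then $\frac1n\mathbb{E}\max_{w'}N_{w'}\lesssim e^{-a}+\frac{\log|\mathcal N|}{na}+\frac1n$ follows directly.

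\textbf{The rescaling step.} You should not claim the final ``rescaling by $J$''. With $\eta=\tilde\varepsilon/J$ your precision in function space is $\tilde\varepsilon$. What you actually prove therefore carries $\log(1+2WJ/\tilde\varepsilon)$ together with a last term of order $\tilde\varepsilon/S^*$, and this implies the displayed bound only when $J\gtrsim 1$.

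The mismatch lies in the statement, not in your argument. The statement pairs a function-space precision inside the logarithm with a parameter-space precision in $\frac{J}{S^*}\tilde\varepsilon$. The paper's count $|\mathcal{W}_{\tilde\varepsilon}|\le(1+60WJ\tilde\varepsilon^{-1})^p$ for a parameter-space $\tilde\varepsilon$-net needs $J\gtrsim 1/30$.

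Reparametrizing $w=\lambda u$ leaves $\mathcal F$ and $WJ$ unchanged but multiplies the last term by $\lambda$. Letting $\lambda\to0$ and then $\tilde\varepsilon\to\infty$, the literal bound would give $\mathcal{R}_{n,\mu}(\mathcal F)\le K\max\{n^{-1/2},2e^{-dS^{*2}/(8cL^2)}\}$ independently of $p$. This fails, for example, for $g_z(x)=1-2\max_{j\le n}(1-2\|x-z_j\|)_+$ on the unit sphere with $d\gg\log n$. There $L=4$, $S^*$ is close to $1$, $p=nd$, and the class shatters the sample. Your version, or equivalently $\log(1+2W/\tilde\varepsilon)$ paired with $\frac{J}{S^*}\tilde\varepsilon$, is the consistent one.
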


\begin{proof}
Given any discontinuous classifier $f_w = \operatorname{sgn} \circ g_w$ for $g_w \in \mathcal{G}$, define its Lipschitz continuous approximation for $\gamma>0$ as
\[
F_{f_w} = \operatorname{sgn}_{\gamma} \circ g_w,
\]
where
\[
\operatorname{sgn}_{\gamma}(t) := 
    \begin{cases}
        -1, & t \leq - \gamma, \\
        \tfrac{t}{\gamma}, & t \in [-\gamma,\gamma], \\ 
         1, & t \geq \gamma.
    \end{cases}
\]
This approximation satisfies the useful property that both $F_{f_w}$ and the absolute difference $|f_w - F_{f_w}|$ are Lipschitz continuous in both the input space $\mathcal{X}$ and the weight space $\mathcal{W}$, with
\begin{equation}\label{eq:Lipconstinf}
      L(|\operatorname{sgn}_\gamma \circ g_w - \operatorname{sgn} \circ g_w|)
      = L(\operatorname{sgn}_\gamma \circ g_w) 
      = \tfrac{L(g_w)}{\gamma}.    
\end{equation}

Following the same strategy as in the proof of Theorem~\ref{thm:genbound} with Lipschitz continuous approximations introduced above (see \Eqref{eq:RadBoundInt}), coupled with a covering argument as in ~\cite{bubeck2021a}, we obtain
\begin{align*}
     \mathcal{R}_{n,\mu}(\mathcal{F})
     &= \frac{1}{n}\,\mathbb{E}^{\sigma_i,x_i} \left[ \sup_{f\in \mathcal{F}} \Big| \sum_{i=1}^n \sigma_i f(x_i) \Big| \right] \\ 
     &\leq \frac{1}{n}\,\mathbb{E}^{\sigma_i,x_i} \left[ \sup_{f\in \mathcal{F}} \Big| \sum_{i=1}^n \sigma_i F_f(x_i) \Big| \right] 
     + \frac{1}{n}\,\mathbb{E}^{\sigma_i,x_i} \left[ \sup_{f\in \mathcal{F}} \Big| \sum_{i=1}^n \sigma_i (f-F_f)(x_i) \Big| \right] \\
     &\leq C_1 \tfrac{1}{\sqrt{n}} 
     + C_2 \tfrac{L}{\gamma}\sqrt{\tfrac{c}{nd}} \underbrace{\sqrt{p \log(1+60WJ{\Tilde{\varepsilon}}^{-1})}}_{\geq\sqrt{\log|\mathcal{F}_{\Tilde{\varepsilon}}|}}
     + \frac{1}{n}\,\mathbb{E}^{\sigma_i,x_i} \left[ \sup_{f\in \mathcal{F}} \Big| \sum_{i=1}^n \sigma_i (f-F_f)(x_i) \Big| \right].
\end{align*}

Here the parameter ${\Tilde{\varepsilon}}>0$ is related to a ${\Tilde{\varepsilon}}$-net of $\mathcal{W}$, which we denote by $\mathcal{W}_{{\Tilde{\varepsilon}}}$. Note, that $|\mathcal{W}_{{\Tilde{\varepsilon}}}| \leq (1+60WJ{\Tilde{\varepsilon}}^{-1})^p$ (see e.g. \cite{Vershynin_2018} Corollary 4.2.13) so the same holds true for the induced net $\mathcal{F}_{\Tilde{\varepsilon}} = \{\operatorname{sgn} \circ g_w : w \in \mathcal{W}_{\Tilde{\varepsilon}} \}$, which also allows us to treat the remaining expectation by subdividing the supremum:
\begin{align}\label{eq:eps_equ}
     &\frac{1}{n}\,\mathbb{E}^{\sigma_i,x_i} \left[ \sup_{f\in \mathcal{F}} \Big| \sum_{i=1}^n \sigma_i (f-F_f)(x_i) \Big| \right]
     = \frac{1}{n}\,\mathbb{E}^{\sigma_i,x_i} \left[ \sup_{w_{\Tilde{\varepsilon}} \in \mathcal{W}_{\Tilde{\varepsilon}}}\ \sup_{w \in B_{\Tilde{\varepsilon}}(w_{\Tilde{\varepsilon}})} \Big| \sum_{i=1}^n \sigma_i (f_w-F_{f_w})(x_i) \Big| \right] \nonumber \\
     &\leq \frac{1}{n}\,\mathbb{E}^{x_i} \left[ \sup_{w_{\Tilde{\varepsilon}} \in \mathcal{W}_{\Tilde{\varepsilon}}} \sum_{i=1}^n |f_{w_{\Tilde{\varepsilon}}}-F_{f_{w_{\Tilde{\varepsilon}}}}|(x_i) \right] \nonumber \\
     &\qquad \qquad+ \frac{1}{n}\,\mathbb{E}^{x_i} \left[ \sup_{w_{\Tilde{\varepsilon}} \in \mathcal{W}_{\Tilde{\varepsilon}}}\ \sup_{w \in B_{\Tilde{\varepsilon}}(w_{\Tilde{\varepsilon}})} \sum_{i=1}^n \Big|\, |f_w-F_{f_w}|(x_i) - |f_{w_{\Tilde{\varepsilon}}}-F_{f_{w_{\Tilde{\varepsilon}}}}|(x_i) \Big| \right].
\end{align}

By Lipschitz continuity of the parameterization and of $|f-F_f|$ as derived in \Eqref{eq:Lipconstinf}, we obtain
\[
\|\, |f_{w}-F_{f_w}| - |f_{w_{\Tilde{\varepsilon}}}- F_{f_{w_{\Tilde{\varepsilon}}}}| \,\|_{\infty} \leq \tfrac{J}{\gamma}\,{\Tilde{\varepsilon}} \quad \text{ for any } w_{\Tilde{\varepsilon}} \in \mathcal{W}_{\Tilde{\varepsilon}} \text{ and } w \in B_{\Tilde{\varepsilon}}(w_{\Tilde{\varepsilon}})
\]
so that 
\begin{equation*}
    \frac{1}{n}\,\mathbb{E}^{x_i} \left[ \sup_{w_{\Tilde{\varepsilon}} \in \mathcal{W}_{\Tilde{\varepsilon}}}\ \sup_{w \in B_{\Tilde{\varepsilon}}(w_{\Tilde{\varepsilon}})} \sum_{i=1}^n \Big|\, |f_w-F_{f_w}|(x_i) - |f_{w_{\Tilde{\varepsilon}}}-F_{f_{w_{\Tilde{\varepsilon}}}}|(x_i) \Big| \right] \leq \frac{J}{\gamma} {\Tilde{\varepsilon}}.
\end{equation*}
Via isoperimetry and using the same bound on the cardinality of $\mathcal{F}_{\Tilde{\varepsilon}}$ as before, one concludes that the first expectation in \Eqref{eq:eps_equ} is of the same form as \Eqref{eq:MaxSG} with subgaussian variance proxy $\sigma^2 = \tfrac{L^2}{\gamma^2}\tfrac{cn}{d}$ so that 
\begin{align*}
    \frac{1}{n}\,\mathbb{E}^{x_i} \Big[ \sup_{w_{\Tilde{\varepsilon}} \in \mathcal{W}_{\Tilde{\varepsilon}}}  \sum_{i=1}^n |f_{w_{\Tilde{\varepsilon}}}-F_{f_{w_{\Tilde{\varepsilon}}}}|(x_i) \Big] &= \frac{1}{n}\,\mathbb{E}^{x_i} \Big[ \sup_{w_{\Tilde{\varepsilon}} \in \mathcal{W}_{\Tilde{\varepsilon}}}  \sum_{i=1}^n |f_{w_{\Tilde{\varepsilon}}}-F_{f_{w_{\Tilde{\varepsilon}}}}|(x_i) - \mathbb{E}[|f_{w_{\Tilde{\varepsilon}}}-F_{f_{w_{\Tilde{\varepsilon}}}}|]\Big] \\
    &\qquad \qquad+ \sup_{w_{\Tilde{\varepsilon}} \in \mathcal{W}_{\Tilde{\varepsilon}}} \mathbb{E}[|f_{w_{\Tilde{\varepsilon}}}-F_{f_{w_{\Tilde{\varepsilon}}}}|]\\
    \leq C_3 \frac{L}{\gamma}&\sqrt{\frac{c}{nd}} \sqrt{p \log(1+60WJ{\Tilde{\varepsilon}}^{-1})}
    + \sup_{w_{\Tilde{\varepsilon}} \in \mathcal{W}_{\Tilde{\varepsilon}}} \mathbb{E}[|f_{w_{\Tilde{\varepsilon}}}-F_{f_{w_{\Tilde{\varepsilon}}}}|].
\end{align*}

Finally, for every $f \in \mathcal{F}$,
\begin{equation}\label{eq:expabsdiff}
    \mathbb{E}[|f-F_f|]  = \int_{\mathcal{X}} |f(x)-F_f(x)| \, d\mu(x) 
     \leq \mathbb{P}(g(x) \in [-\gamma,\gamma]).
\end{equation}
Choosing $\gamma = \tfrac{S^*(g)}{2}$, we obtain by the definitions of co-margin, and once again isoperimetry (since the co-margin inherits the Lipschitzness from $g$ by design)
\begin{align*}
   \mathbb{P}\left(g(x) \in [-\gamma,\gamma]\right)
   &= \mathbb{P}\left(|g(x)| \leq \frac{S^*(g)}{2}\right) \\
   &\leq \mathbb{P}\left( |h^*_g(x)-S^*(g)| \geq \frac{S^*(g)}{2} \right) \\
   &\leq 2\exp\!\left(- \frac{d\,S^*(g)^2}{8cL(g)^2}\right) 
   \;\;\leq\;\; 2\exp\!\left(- \frac{d\,S^*{^2}}{8cL^2}\right) = 2\exp\!\left(- \frac{d\,\bar{S}^*{^2}}{8c}\right). 
\end{align*}
Putting it all together, we have
\begin{equation*}
    \mathcal{R}_{n,\mu}(\mathcal{F}) \leq C_1 \tfrac{1}{\sqrt{n}} 
     + C^\prime_2 \tfrac{L}{S^*}\sqrt{\tfrac{c}{nd}} \sqrt{p \log(1+60WJ{\Tilde{\varepsilon}}^{-1})} +  \frac{2J}{S^*} {\Tilde{\varepsilon}} +  2\exp\!\left(- \frac{d\,S^*{^2}}{8cL^2}\right).
\end{equation*}
\end{proof}

\section{Multi-Class Classification}
\label{sec:multiclass}
In this section, we briefly outline how our results extend to categorical distributions with $\mathcal{C} \in \mathbb{N}$ classes. We assume that a classifier is given by  
\[
f: \mathcal{X} \to \{0,1\}^\mathcal{C}, 
\]
with exactly one non-zero entry for each \(x \in \mathcal{X}\). The additional regularity assumption $(\mathrm{H3})'$, the adaptations of the conditions in $(\mathrm{H3})$ to the multi-class setting can be formalized as follows.

\begin{itemize}
    \item[(H3)'] The hypothesis class has the form $\mathcal{F} =  \operatorname{argmax} \circ \mathcal{G}$, where $\mathcal{G} = \{ g_w : \mathcal{X} \to [0,1]^\mathcal{C} : w \in \mathcal{W} \}$ is a parameterized family of Lipschitz functions.  
    The parameter space $\mathcal{W} \subset \mathbb{R}^p$ is bounded with $\operatorname{diam}(\mathcal{W}) \leq W$, and the parameterization is Lipschitz:
    \[
        \| g_{w_1} - g_{w_2} \|_{\infty} \;\leq\; J \, \| w_1 - w_2 \| .
    \]
\end{itemize}
Thus, we can interpret $g \in \mathcal{G}$ as representing the class probabilities.

\begin{remark}
For binary classification, i.e. \(\mathcal{C} = 2\), the classifiers are of the form $
f: \mathcal{X} \to \{0,1\}^2$, instead of $f: \mathcal{X} \to \{-1,1\}$, as considered earlier. However, one can translate between these representations by post-composing with either
\[
\alpha(x_1,x_2) := x_1 - x_2
\quad \text{or} \quad
\beta(x) := \Big(\tfrac{x+1}{2}, \tfrac{1-x}{2}\Big).
\]
By the contraction principle for Rademacher complexity, it is therefore sufficient to compute the complexity for one of these models.
\end{remark}

\renewcommand{\arraystretch}{1.5} 
\begin{table}[t]
\caption{Multi-class definitions.}
\label{tab:multiclass}
\begin{center}
\begin{tabular}{ll}
\multicolumn{1}{c}{\bf Concept}  &\multicolumn{1}{c}{\bf Definition}
\\ \hline \\
Isoperimetry         
& $\mathbb{P}\!\left(\|f(x)- \mathbb{E}[f]\|_{\infty} \geq t \right) \leq 2 \exp\!\left(-\tfrac{dt^2}{2cL^2}\right)$ \\

Rademacher complexity             
& $\mathcal{R}_{n,\mu}(\mathcal{F}) = \tfrac{1}{n} \mathbb{E}^{\sigma_{i,j},x_i} \Bigg[ \sup_{f \in \mathcal{F}} \Big| \sum_{i=1}^n \sum_{j=1}^\mathcal{C} \sigma_{ij} f_j(x_i) \Big| \Bigg]$ \\

Margin            
& $h_{f}(x)= \sum_{j=1}^{\mathcal{C}} h_f^j(x), \quad h_f^j(x) :=\inf\{\|x-z\|_2 : f(z)\neq  j,\, z\in \mathbb{R}^d \}$ \\

Class stability            
& $S(f)= \sum_{j=1}^{\mathcal{C}} S(f)^j, \quad S(f)^j :=\mathbb{E}[h_f^j]$ \\

Co-margin         
& $h^*_{g}(x)= \sum_{j=1}^{\mathcal{C}} h_g^*{^j}(x), \quad  h_g^*{^j}(x) := \max\big(0, g_j(x) - \max_{i \neq j} g_i(x) \big)$ \\

Co-stability            
& $S^*(g)= \sum_{j=1}^{\mathcal{C}} S^*{^j}(g), \quad S^*{^j}(g) :=\mathbb{E}[h_g^*{^j}]$ \\
\end{tabular}
\end{center}
\end{table}

As in the binary case, our proofs start by considering the Rademacher complexity of the function class $\mathcal{F}$:
\begin{align*}
     \mathcal{R}_{n,\mu}(\mathcal{F})
     = \frac{1}{n}\,\mathbb{E}^{\sigma_{ij},x_i} \left[ \sup_{f\in \mathcal{F}} \Big| \sum_{i=1}^n\sum_{j=1}^\mathcal{C}\sigma_{ij} f_j(x_i) \Big| \right]       \leq \sum_{j=1}^\mathcal{C}  \frac{1}{n}\,\mathbb{E}^{\sigma_{ij},x_i} \left[ \sup_{f\in \mathcal{F}} \Big| \sum_{i=1}^n\sigma_{ij} f_j(x_i) \Big| \right].
\end{align*}
Each summand corresponds to a binary classification problem with a one-vs-all classifier \(f_j\). Indeed, \(f_j\) is \(\tfrac{2}{S(f)-t}\)-Lipschitz on \(A_t(f)\). Transforming via 
\[
f_j \mapsto 2f_j-1 : \mathcal{X} \to \{-1,1\},
\] 
we can follow the same reasoning as in Appendix~\ref{sec:Appgenbound}, obtaining, up to a linear factor of \(\mathcal{C}\), the same result as the first part of Theorem~\ref{thm:genbound}, generalized to the multi-class setting.  

Similarly, under assumption \((\mathrm{H3})\), we can write
\[
2f_j -1 = \operatorname{sgn}\!\big(g_j - \max_{i \neq j}g_i(x)\big),
\]
which allows us to proceed as in Appendix~\ref{app:inftygen} to obtain a multi-class generalization of the second part of Theorem \ref{thm:genbound} and Theorem \ref{thm:simplebutbetter}. The only minor difference lies in bounding the term in \Eqref{eq:expabsdiff}:
\[
    \mathbb{E}[|f_j-F_{f_j}|] \leq \mathbb{P}\big[|g_j(x) - \max_{i \neq j}g_i(x)| \leq \gamma\big].
\]
Choosing \(\gamma = \tfrac{S^*(g)}{2}\), we use that for all \(j\),
\(|g_j(x) - \max_{i \neq j}g_i(x)| > h^*_g(x)\), which yields
\begin{align*}
   \mathbb{P}\big[|g_j(x) - \max_{i \neq j}g_i(x)| \leq \tfrac{S^*(g)}{2}\big]
   &\leq   \mathbb{P}\big[ |h^*_g(x)-S^*(f) |\geq  \tfrac{S^*(g)}{2} \big]  \\
   &\leq 2\exp\!\Big(- \tfrac{d\,S^*(g)^2}{8cL(g)^2}\Big) \\
   &\leq 2\exp\!\Big(- \tfrac{d\,S^*{^2}}{8cL^2}\Big)=  2\exp\!\left(- \frac{d\,\bar{S}^*{^2}}{8c}\right).
\end{align*}

We conclude that all of our results extend to the multi-class case. Moreover, the measure used in our MNIST and CIFAR-10 experiments (Section~\ref{sec:exp}) is the correct generalization.

\section{Experimental Details for Stability Measurement}
\label{app:experiments}

\paragraph{Training setup.} 

We trained fully connected MLPs with 4 or 8 hidden layers on MNIST and CIFAR-10, using widths
\( w \in \{128,256,512,1024,2048\} \).
For CIFAR-10, we additionally evaluated convolutional neural networks (CNNs) with widths
\( w \in \{128,256,512,1024\} \).
On MNIST, we considered Heaviside-activation MLPs to examine whether the observed scaling behaviour extends to discontinuous score functions.

All MLP models use ReLU activations (or Heaviside activations where specified) and batch normalization.
Optimization was performed with Adam \citep{kingma2015adam} for embedding and output layers and Muon \citep{jordan2024muon} for hidden layers, using a batch size of 256 and learning rate \(10^{-3}\).
Training continued until at least 99\% training accuracy was achieved, yielding near-interpolating solutions.

\paragraph{Additional MNIST protocols.}
For 8-layer MLPs on MNIST, we additionally performed two controlled variations: 
(i) extended-width experiments with $w$ up to 16384, and 
(ii) fixed-epoch training runs in which all widths were trained for seven epochs, independent of when 99\% training accuracy was first reached (all models attained 99\% within this budget).
These experiments isolate the effect of overparameterization from heterogeneous training durations.

\paragraph{Parameter counts and normalization.} For each model, we recorded the total number of trainable parameters \(p\), input dimension \(d\), and total number of training samples \(n\). 

\paragraph{Stability estimation.}
Class stability $S(f)$ is estimated via $\ell_2$ adversarial perturbations using Foolbox \citep{rauber2017foolbox}. 
We apply a collection of $\ell_2$ attacks (including DeepFool and $\ell_2$-PGD \citep{moosavi2016deepfool,madry2018towards}) over an $\varepsilon$-grid 
$\{0.01,0.05,0.1,0.2,0.5,1.0,2.0\}$.
For each input $x$, we record the minimum achieved perturbation norm 
$\min \|x_{\mathrm{adv}}-x\|_2$ across all attacks and $\varepsilon$ values.
If no attack succeeds within the maximum $\varepsilon$, we set the perturbation norm to $\max(\varepsilon)$. For the chosen $\varepsilon$ grid, attack success rates exceed 90\% across model sizes, and are typically higher for smaller widths. 
Failures occur primarily for the largest models, where no adversarial example is found within the maximal radius. 
In these cases, the assigned value $\max(\epsilon)$ induces a downward bias, so the reported stability values are conservative lower bounds.

\textbf{Normalized Co-Stability estimation.}  
The empirical co-stability \(S^*(g)\) is computed via the multi-class margin
\[
g_j(x) - \max_{i \neq j} g_i(x), \qquad j = \arg\max_i g_i(x),
\]
averaged over the dataset.  
We estimate the Lipschitz constant \(L(g)\) using the efficient \textsc{ECLipsE} method \citep{NEURIPS2024_1419d855}, and report the normalized ratio \(S^*(g)/L(g)\) as a function of model size.

\paragraph{Implementation.} Training and evaluation code is implemented in PyTorch \citep{NEURIPS2019_9015}. For MLPs, images were flattened to vectors. Attack evaluations were conducted over the test dataset.

\paragraph{Reproducibility.} All experiments were run with multiple random seeds \(\{0,1,2,3,4\}\), and mean with standard deviation are reported.

\section{Additional experiments}
\label{app:addex}
We report additional experiments for MLPs trained on MNIST and CIFAR-10 with varying depths. 
Unless stated otherwise, MLP widths are $w \in \{128,256,512,1024,2048\}$. Each figure shows class stability (left) and normalized co-stability (right) as a function of width.

\subsection{CIFAR 10}
Figure~\ref{fig:cifarMLP4} illustrates the width-dependent scaling of 
4-layer MLPs on CIFAR-10. The observed increase in stability aligns 
with the qualitative trend indicated by our theoretical analysis and 
is comparable to the behaviour of the 8-layer models.
\begin{figure}[h!]
\centering
\includegraphics[width=0.48\linewidth]{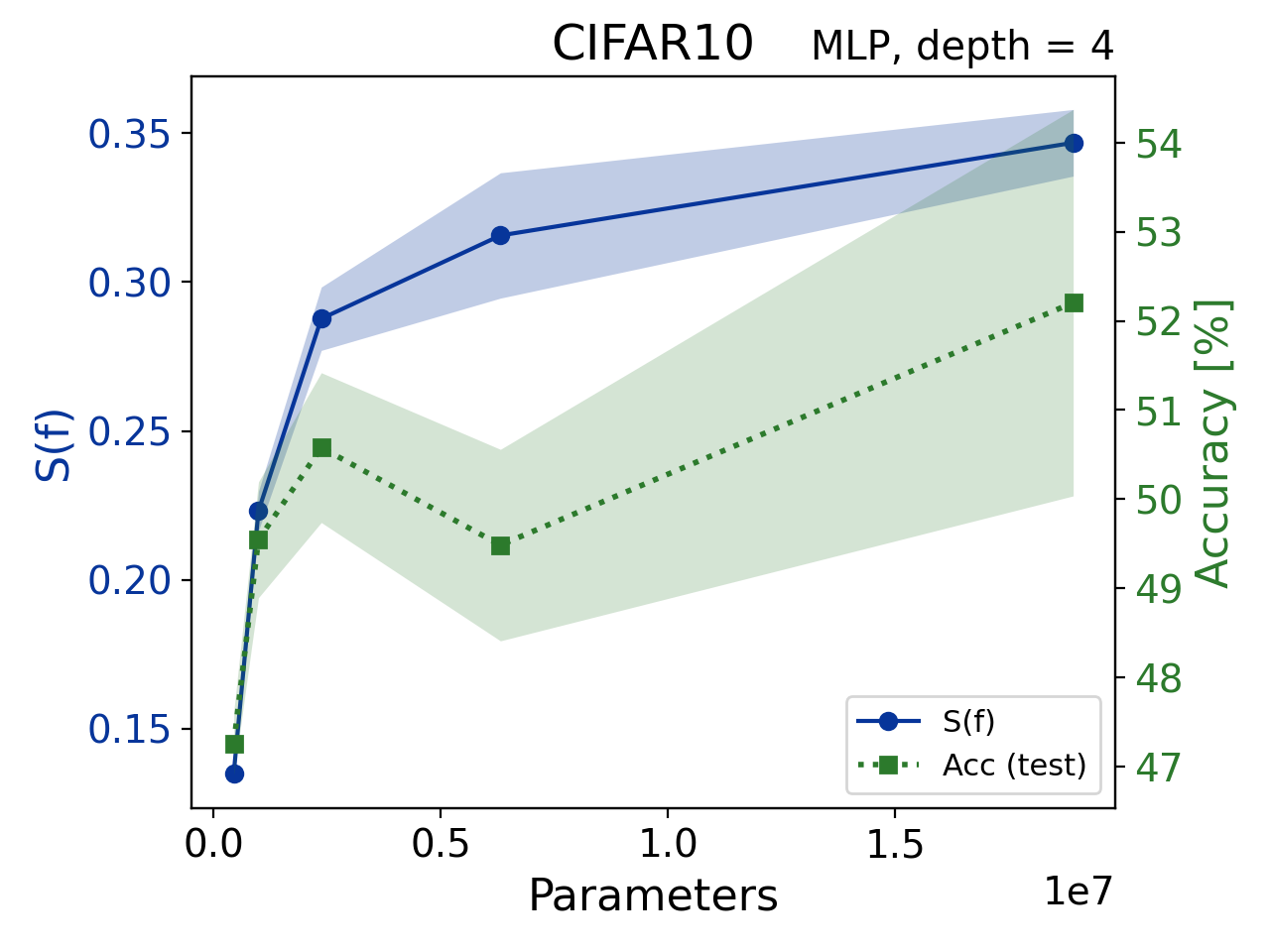}
\hfill
\includegraphics[width=0.48\linewidth]{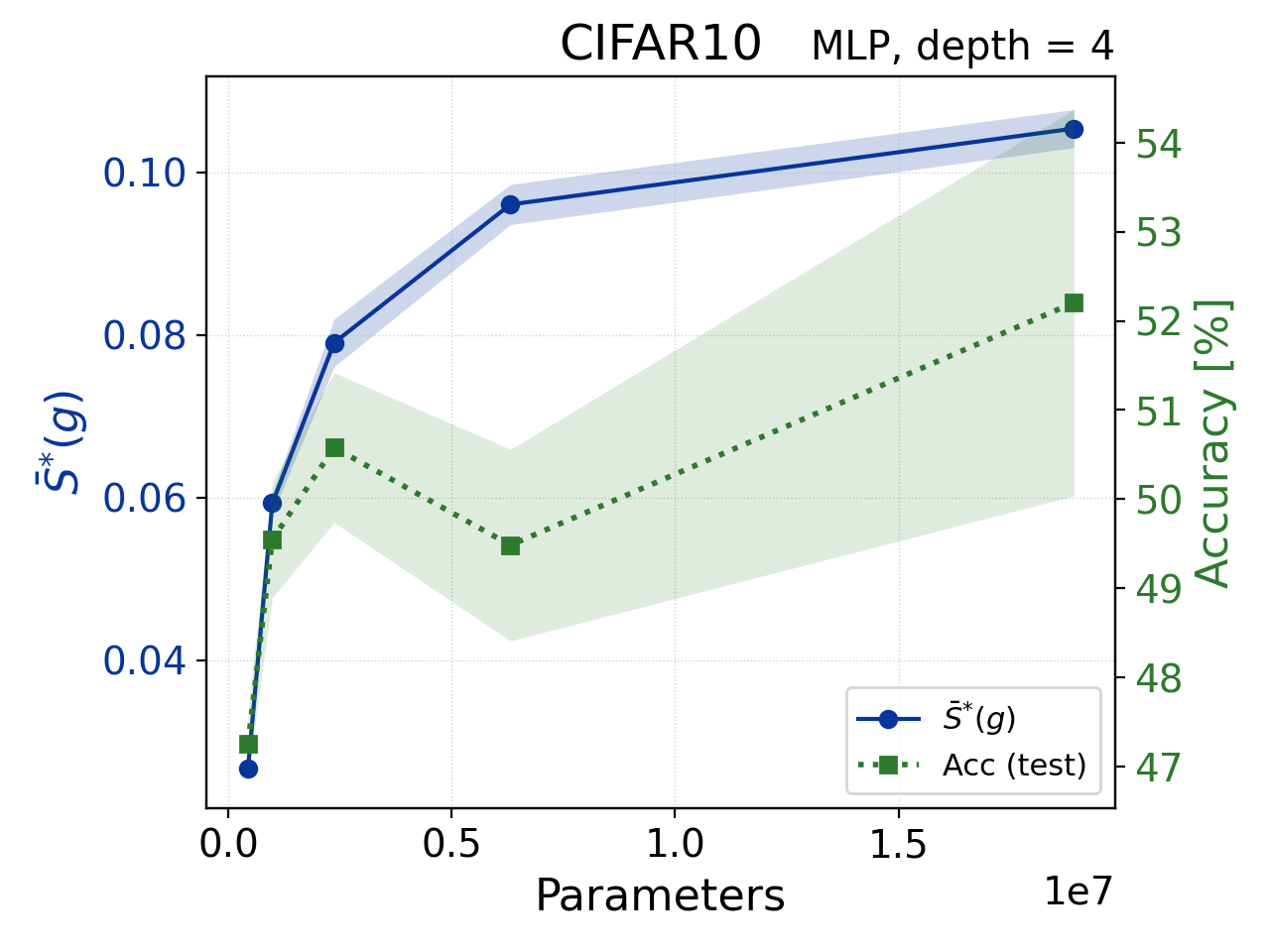}
\caption{4-layer MLPs on CIFAR10.}
\label{fig:cifarMLP4}
\end{figure}

\subsection{MNIST}
Figure~\ref{fig:mnist4lMLP} indicates a monotonic increase in stability 
with width for 4-layer MLPs, in line with the qualitative behaviour 
suggested by our theory. In contrast, the 8-layer models in 
Figure~\ref{fig:mnist8lMLP} display an initial decrease in stability 
for small widths.

We attribute this effect to heterogeneous training durations. To reach 
99\% training accuracy, narrow models required up to seven epochs, 
whereas the widest models required only two, with a monotonic trend 
across widths. Since cross-entropy training continues to increase 
margins even after achieving correct/perfect classification, training duration, 
especially in the low epoch regime, 
induces systematically smaller stability values.

We test this hypothesis in two ways. First, we extend the width range 
up to $w=16384$ (Figure~\ref{fig:mnist8lMLPwide}). For sufficiently large 
widths (with inherently similar/equal number of epochs), the predicted monotonic relation between model size and 
stability is recovered. Second, we fix the training duration to seven 
epochs for all widths (Figure~\ref{fig:mnist8lMLPepochs}). Under uniform 
training, the expected scaling behaviour is restored across the full 
width range.

The same effects are observed for normalized co-stability.

\begin{figure}[h!]
\centering
\includegraphics[width=0.48\linewidth]{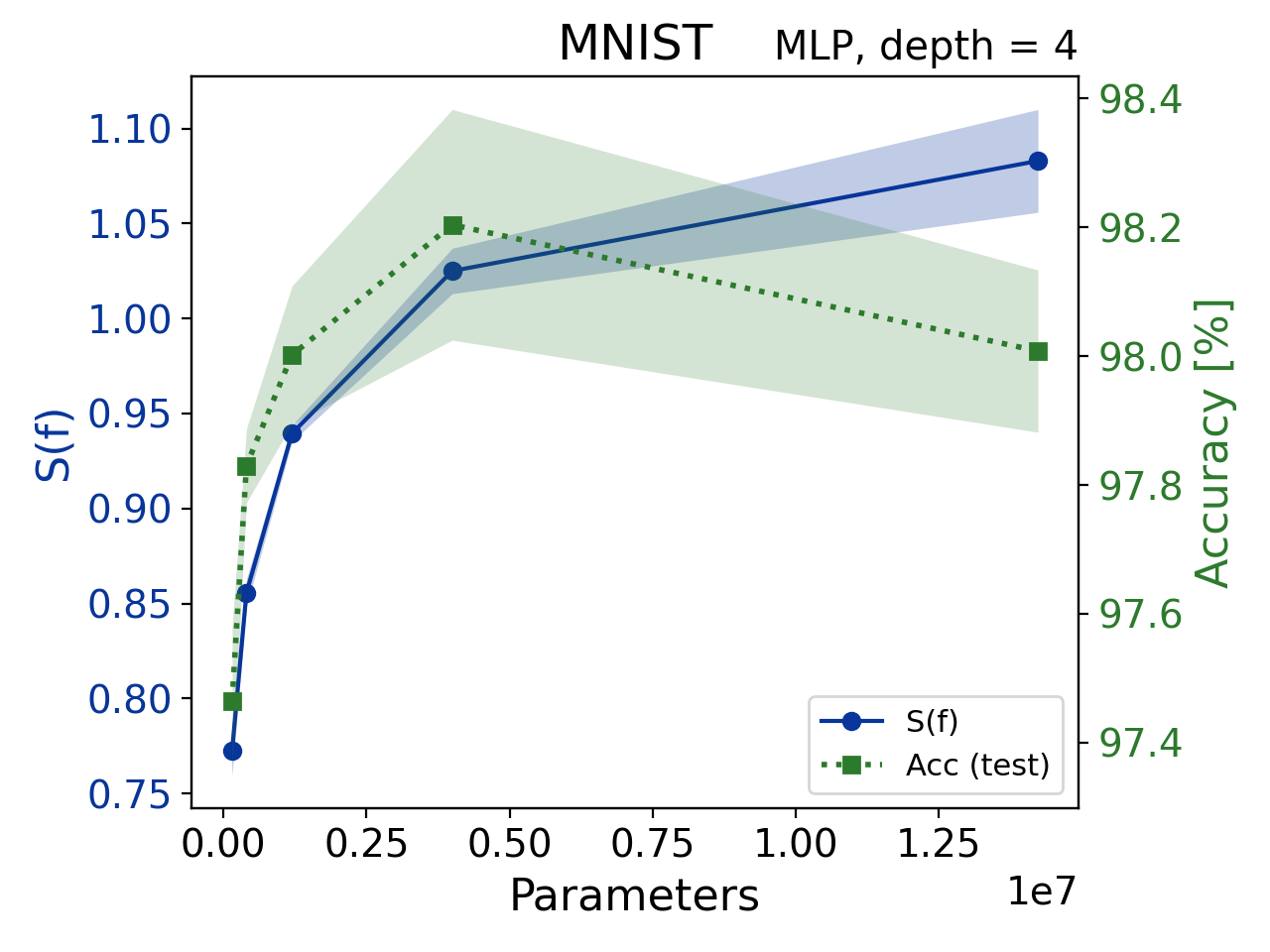}
\hfill
\includegraphics[width=0.48\linewidth]{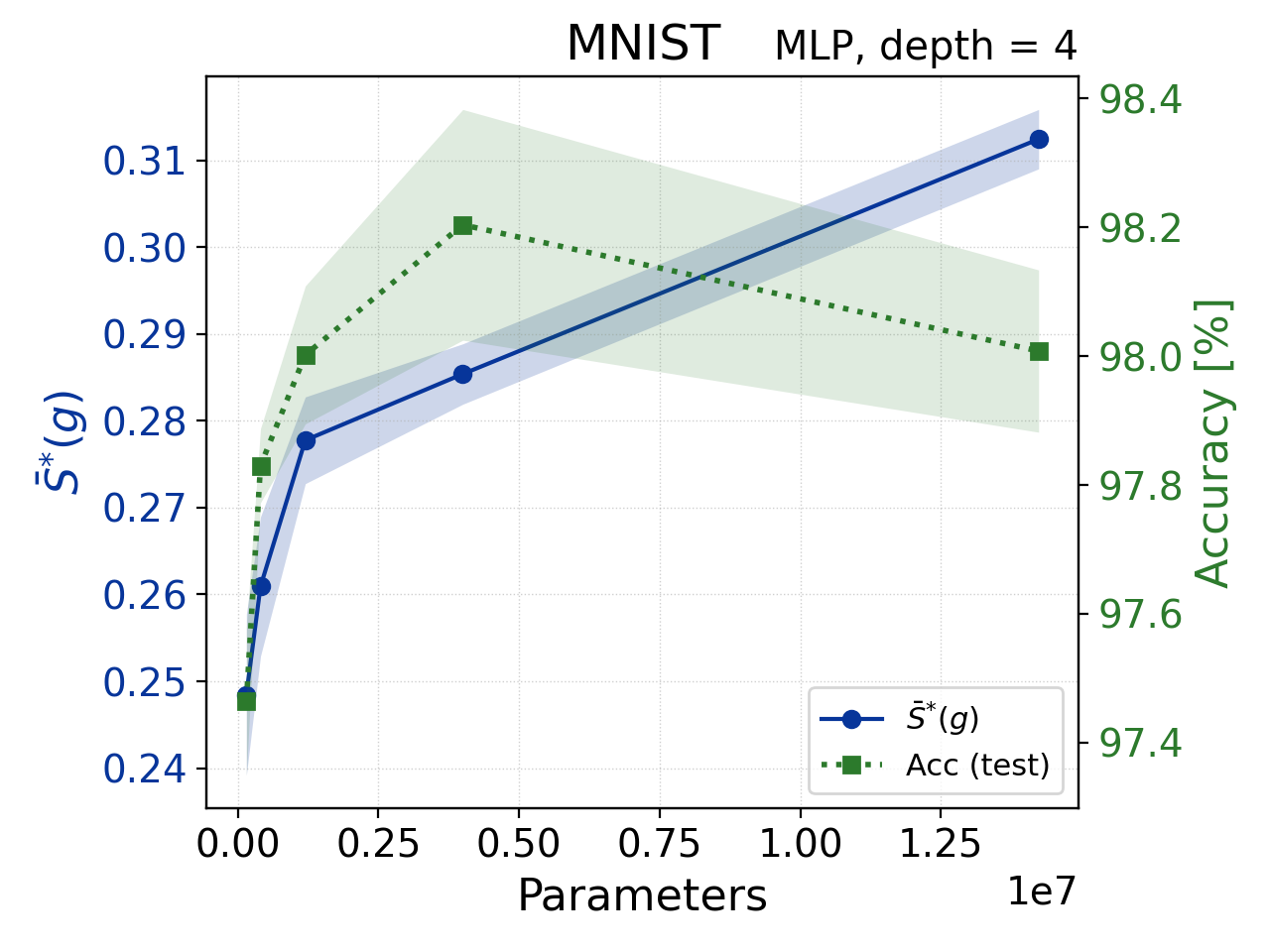}
\caption{4-layer MLPs on MNIST.}
\label{fig:mnist4lMLP}
\end{figure}

\begin{figure}[h!]
\centering
\includegraphics[width=0.48\linewidth]{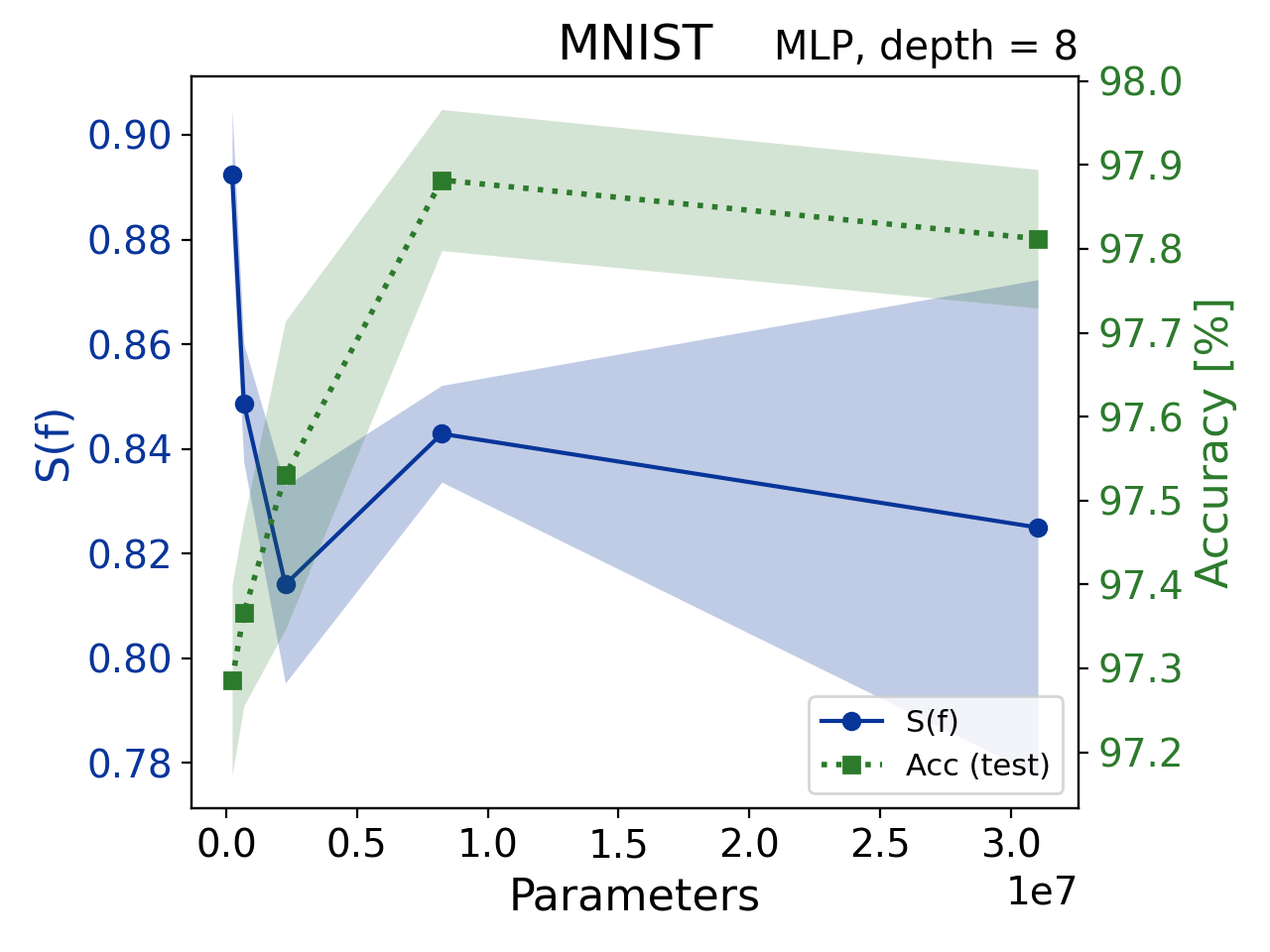}
\hfill
\includegraphics[width=0.48\linewidth]{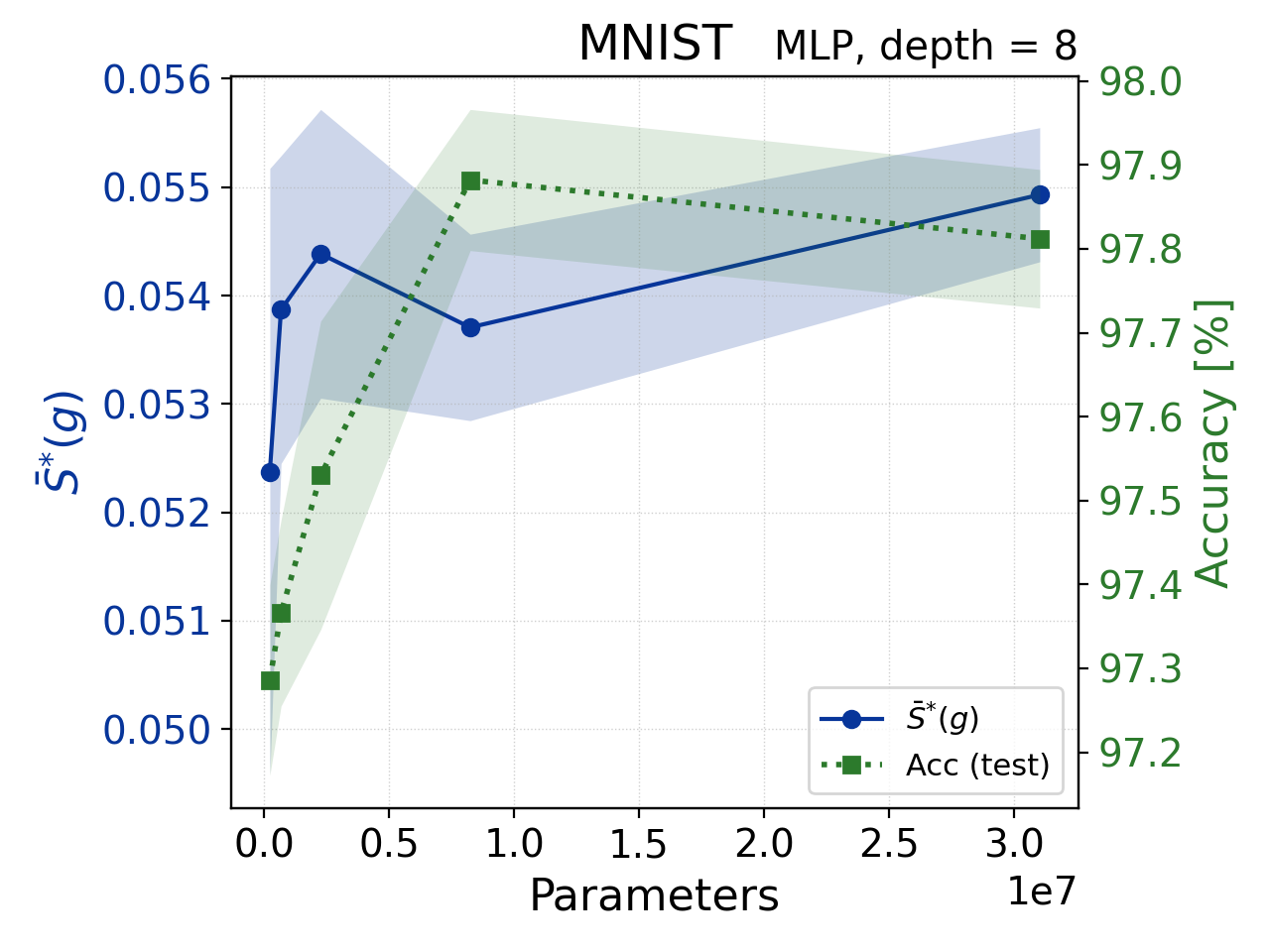}
\caption{8-layer MLPs on MNIST.}
\label{fig:mnist8lMLP}
\end{figure}

\begin{figure}[h!]
\centering
\includegraphics[width=0.48\linewidth]{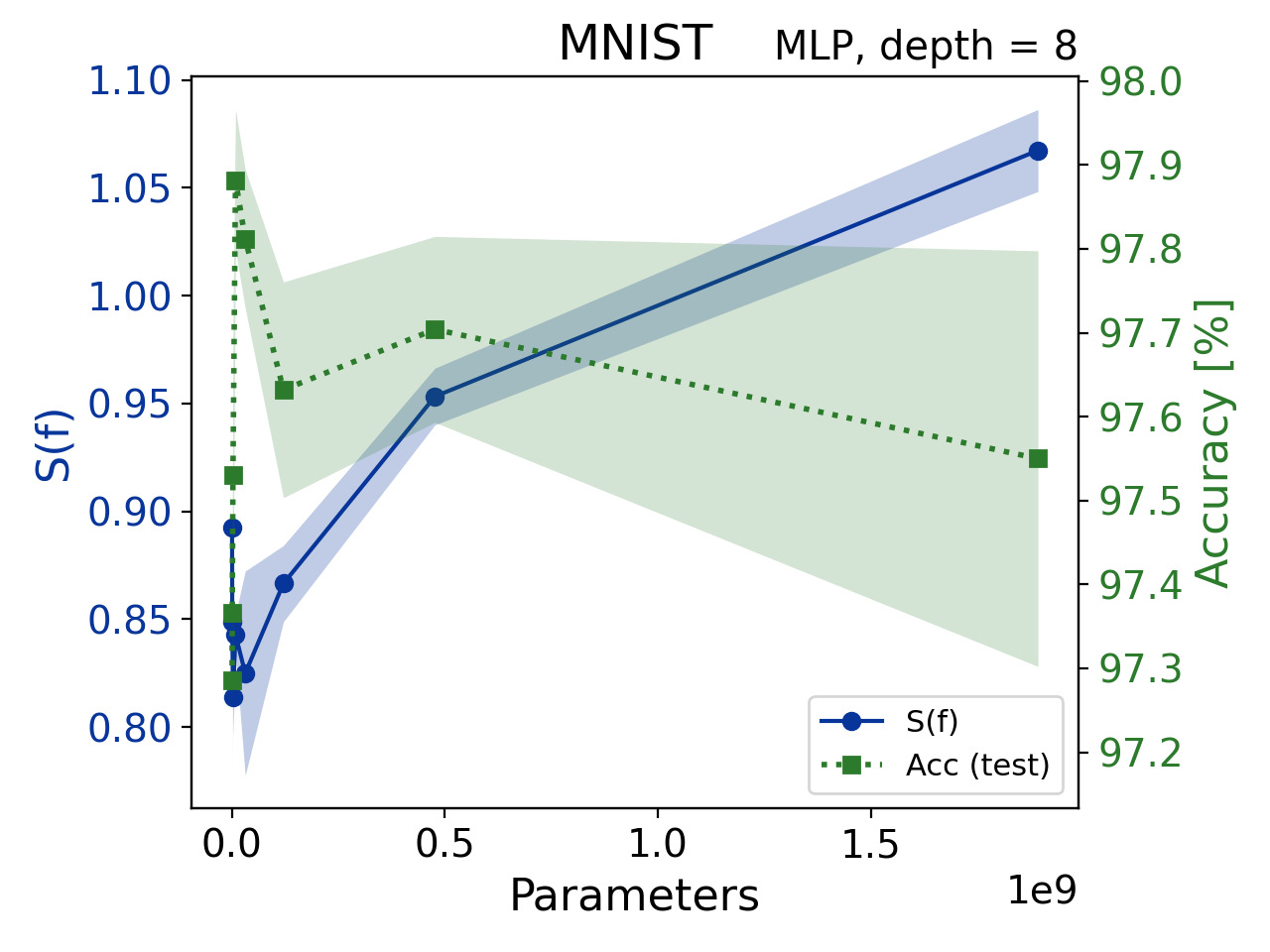}
\hfill
\includegraphics[width=0.48\linewidth]{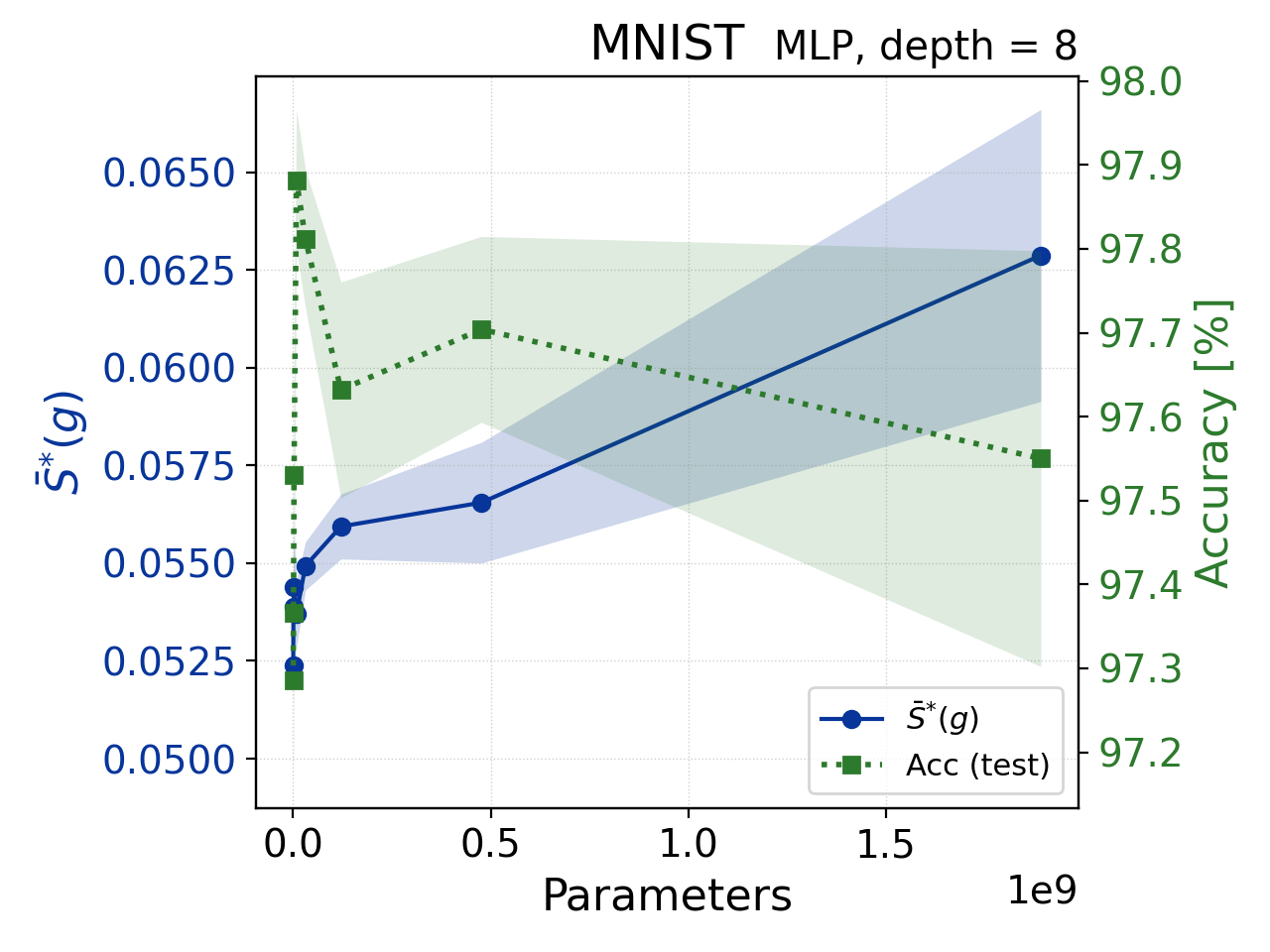}
\caption{8-layer MLPs on MNIST with extended widths up to $w = 16384$.}
\label{fig:mnist8lMLPwide}
\end{figure}

\begin{figure}[h!]
\centering
\includegraphics[width=0.48\linewidth]{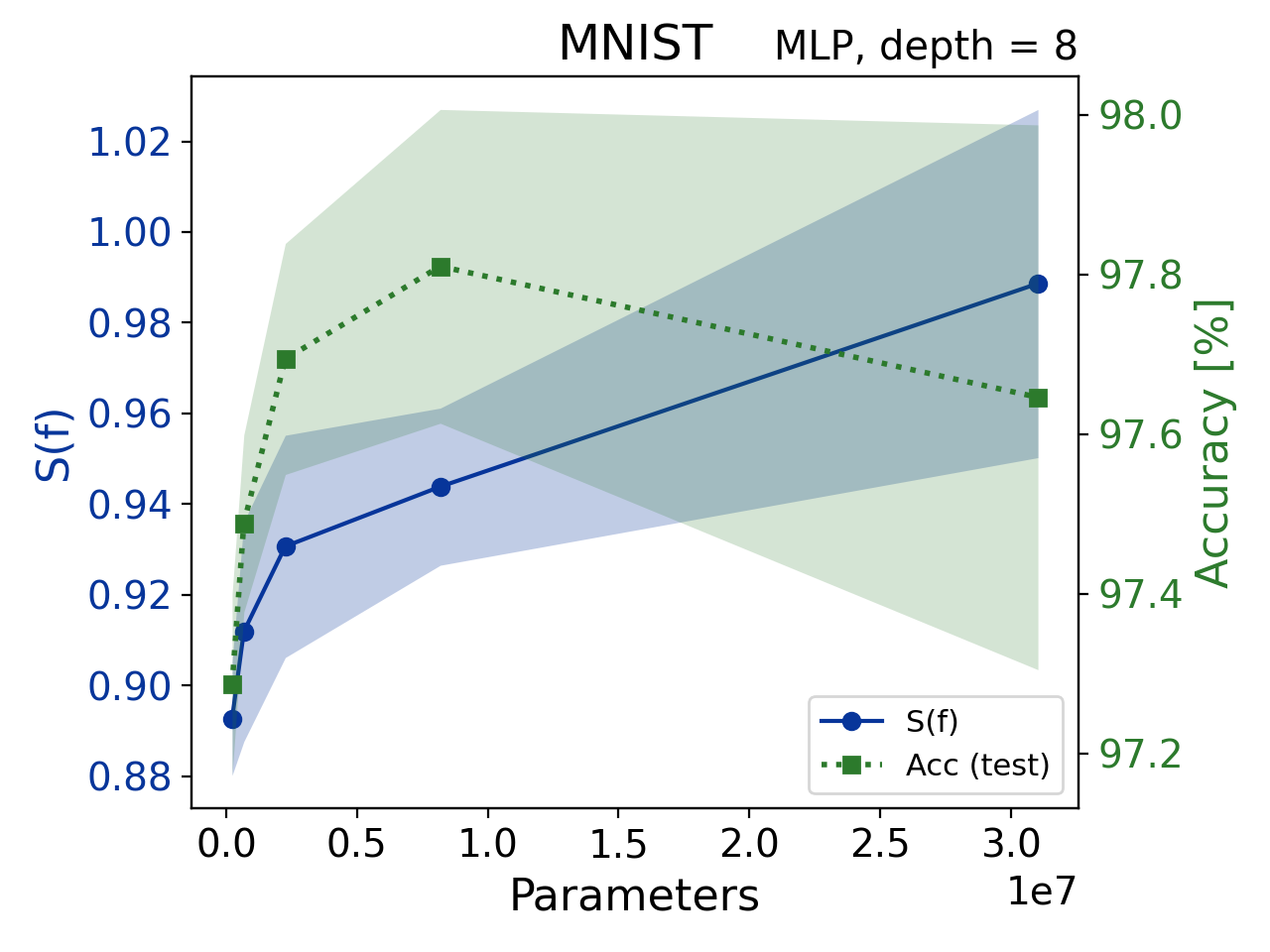}
\hfill
\includegraphics[width=0.48\linewidth]{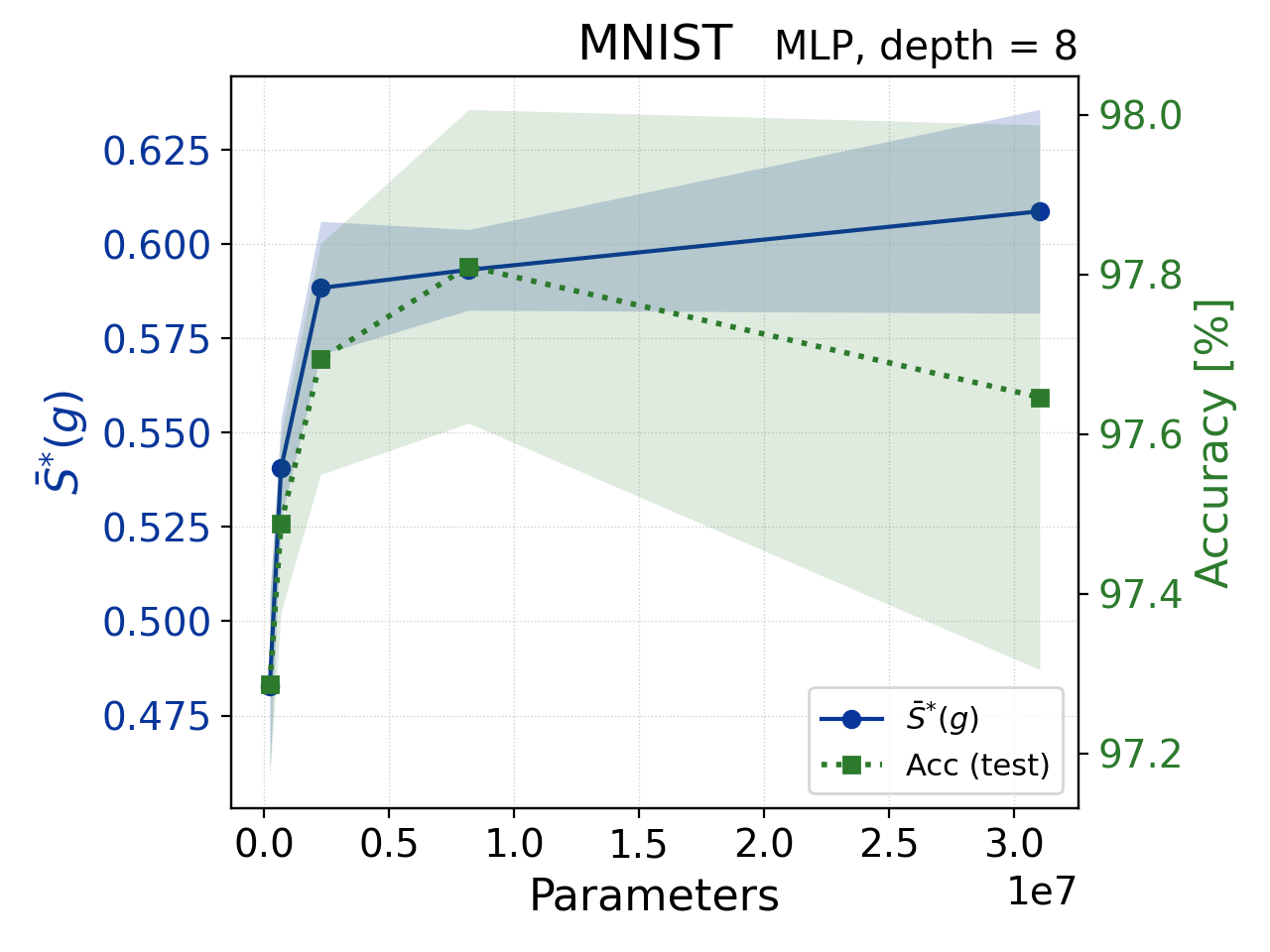}
\caption{8-layer MLPs on MNIST trained for 7 epochs.}
\label{fig:mnist8lMLPepochs}
\end{figure}


\end{document}